\documentclass[opre]{informs}
\OneAndAHalfSpacedXI

\usepackage{natbib}
\bibpunct[, ]{(}{)}{,}{a}{}{,}%
\def\bibfont{\small}%
\usepackage{tikz}
\usepackage{booktabs} 
\usepackage{framed}
\usepackage{mathtools}
\usepackage{comment}
\usepackage[normalem]{ulem}
\usepackage{enumitem}
\usepackage{amsmath}
\usepackage{epstopdf}
\usepackage{graphicx}
\usepackage{threeparttable}
\usepackage{subcaption}
\usepackage{svg} 
\usepackage{multirow}
\usepackage{threeparttable}
\usepackage{url}
\usepackage{subfloat}
\usepackage{bbm}
\usepackage{hyperref}
\usepackage{diagbox}
\usepackage{tablefootnote}
\usepackage[justification=centering]{caption}
\usepackage{graphicx}
\usepackage{epstopdf}
\graphicspath{{Figures/}{figures/}}

\makeatletter
\renewcommand\AAnormalsizeXI{%
   \@setfontsize\normalsizeXI{11}{12.66}%
   \Dskips{0.86}{0.24}{0.38}{0.45}%
   \let\@listi\@listI
   \baselineskip18.35pt plus 0.8pt minus 0.8pt
\@bls=18.35pt\@em=11pt
\relax}
\renewcommand\AAsmallX{%
   \@setfontsize\smallX{10}{11.5}%
   \Dskips{0.70}{0.24}{0.38}{0.45}%
   \def\@listi{\leftmargin\leftmargini
     \topsep 0\p@ \parsep 1.5\p@ \itemsep \parsep}%
   \baselineskip16.65pt plus 0.8pt minus 0.8pt
\@bls=16.65pt\@em=10pt
\relax}
\renewcommand\AAfootnotesizeIX{%
   \@setfontsize\footnotesizeIX\@ixpt\@xipt
   \Dskips{0.70}{0.24}{0.38}{0.45}%
   \def\@listi{\leftmargin\leftmargini
      \topsep 0\p@ \parsep 0\p@ \itemsep \parsep}%
   \baselineskip14.90pt plus 0.8pt minus 0.8pt
\@bls=14.90pt\@em=9pt
\relax}
\OneAndAHalfSpacedXI
\def\section{\@startsection {section}{1}{\z@}{-10pt plus -5pt minus -3pt}{2pt}%
  {\fs.13.15.\sffamily\bfseries\RAGG}}%
\def\subsection{\@startsection{subsection}{2}{\z@}{-10pt plus -5pt minus -2pt}{2pt}%
  {\TEN\sffamily\bfseries\RAGG}}%
\def\subsubsection{\@startsection{subsubsection}{3}{1.0em}{3pt plus 3pt minus 2pt}{-7pt}%
  {\TEN\bf}}%
\makeatother
\setlist{topsep=1pt,itemsep=1pt,parsep=0pt,partopsep=0pt}

\hypersetup{
	colorlinks=true,
	linkcolor=black,
	citecolor=blue
} 

\usepackage[hyphenbreaks]{breakurl}
\definecolor{myred}{rgb}{0.75,0.0,0.0}

\usepackage{stackengine}

\usepackage{algorithm}
\usepackage[noend]{algpseudocode}

\usepackage{fancybox}
\usepackage[framemethod=tikz]{mdframed}
\usepackage{lipsum}
\usetikzlibrary{shadows}
\newmdenv[shadow=true,shadowcolor=black]{myshadowbox}

\TheoremsNumberedThrough 
\ECRepeatTheorems

\EquationsNumberedThrough 

\MANUSCRIPTNO{}
\makeatletter

\usepackage[utf8]{inputenc} 
\usepackage[T1]{fontenc}    
\usepackage{hyperref}       
\usepackage{url}            
\usepackage{booktabs}       
\usepackage{amsfonts}       
\usepackage{nicefrac}       
\usepackage{microtype}      
\usepackage{xcolor}
\usepackage{amsmath}         

\usepackage{graphicx}
\usepackage{multirow}
\usepackage{pifont}
\usepackage{makecell}
\usepackage{colortbl}

\usepackage{xcolor}
\definecolor{mygray}{RGB}{192,192,192}
\definecolor{mygray2}{RGB}{230,230,230}
\usepackage[most]{tcolorbox}
\usepackage{float}
\usepackage{xspace}
\tcbset{
  aibox/.style={
    width=\textwidth,
    top=10pt,
    colback=white,
    colframe=black,
    colbacktitle=black,
    enhanced,
    center,
    attach boxed title to top left={yshift=-0.1in,xshift=0.15in},
    boxed title style={boxrule=1pt,colframe=black,colback=mygray2, colbacktitle=black},
    coltitle = black,
    fonttitle=\bfseries, 
  }
}
\newtcolorbox{AIbox}[2][]{aibox,title=#2,#1}
\usepackage{listings}
\definecolor{myblue}{RGB}{100, 150, 200}
\definecolor{mygreen}{RGB}{80, 160, 80}

\definecolor{darkgreen}{rgb}{0.0, 0.5, 0.0}
\definecolor{darkgray}{gray}{0.4}
\definecolor{maroon}{rgb}{0.5, 0.0, 0.0}
\definecolor{navy}{rgb}{0.0, 0.0, 0.5}
\definecolor{teal}{rgb}{0.0, 0.5, 0.5}

\newcommand{\affmark}[1]{\textsuperscript{#1}}
\newcommand{\cofirst}{\textsuperscript{*}}
\newcommand{\corrmark}{\textsuperscript{$\dagger$}}

\usepackage{wrapfig,lipsum,booktabs}
\usepackage{tikz}
\usetikzlibrary{shapes, arrows.meta, positioning}

\begin{document}
	
	
	\RUNAUTHOR{Authors}
	
	\RUNTITLE{InvEvolve: Evolving White-Box Inventory Policies via Large Language Models with Performance Guarantees}
	\TITLE{InvEvolve: Evolving White-Box Inventory Policies via Large Language Models with Performance Guarantees} 

\ARTICLEAUTHORS{%
	\AUTHOR{Chenyu Huang\affmark{1}\cofirst, Jianghao Lin\affmark{2}\cofirst\corrmark, Zhengyang Tang\affmark{3}\cofirst, Bo Jiang\affmark{1}}
	\AUTHOR{Ruoqing Jiang\affmark{4}\corrmark, Benyou Wang\affmark{3}, Lai Wei\affmark{5}\corrmark}
    \medskip
	\AFF{\affmark{1}Shanghai University of Finance and Economics \qquad \affmark{2}Shanghai Jiao Tong University}
	\AFF{\affmark{3}The Chinese University of Hong Kong, Shenzhen \qquad \affmark{4}Tsinghua University \qquad \affmark{5}Boston College}
}



\ABSTRACT{We study how large language models can be used to generate inventory policies in online settings with non-stationary demand. Our work is motivated by recent advances in LLM-based evolutionary search, such as AlphaEvolve, which demonstrates strong performance on static and highly structured problems such as mathematical discovery, but is not directly suited to dynamic inventory settings with online updates. We propose \emph{InvEvolve}, an end-to-end inventory policy evolution and inference framework grounded in confidence-interval-based certification. 
Built on a large language model trained via reinforcement learning, \emph{InvEvolve} can process demand data together with additional numerical and textual features, and generates white-box inventory policies with statistical safety guarantees for future deployment. 
We further introduce a unified framework with theoretical guarantees that connects training, inference, and deployment. This allows us to derive a lower bound on the probability that \emph{InvEvolve} evolves a statistically safe and improved policy, and to characterize the multi-period performance gap relative to the oracle-safe benchmark. Tested on both synthetic data and real-world retail data, \emph{InvEvolve} outperforms classical inventory policies and deep-learning-based methods. In canonical inventory settings, it generates new policies that outperform existing benchmarks. 
}

\KEYWORDS{Inventory management, Evolutionary algorithm search, Large language model, Reinforcement learning} 

\maketitle

\begingroup
\renewcommand\thefootnote{}
\footnotetext{\cofirst These authors contributed equally. \corrmark Corresponding authors.}
\endgroup

\section{Introduction}

Inventory management is a central problem in operations management. 
Designing inventory policies that are both efficient and reliable has long been a fundamental problem for both researchers and practitioners. 
Existing 
methods can generally be divided into two categories. The first category develops ``white-box'' policies, which are structured, interpretable, and easy to implement. 
In many canonical settings, they  yield strong performance and  theoretical guarantee. However, they often struggle to handle high-dimensional features and non-stationary demand in real-world scenarios. 
The second category develops data-driven ``black-box'' methods, such as deep learning and reinforcement learning, which can incorporate richer features and adapt to more complex environments. 
However, they face several limitations, including limited interpretability, weak cold-start performance, and low transparency.

Recent development of large language models (LLMs)  
provides a new opportunity to generate inventory policies that combine the advantages of these two streams. 
In particular, the AlphaEvolve  approach \citep{alphaevolve2025}
can 
generate, evaluate, and refine candidate algorithms directly in the space of executable rules, without requiring a prespecified parameterized policy family. 
Such an approach have shown great potential in static and highly structured  problems with fixed evaluation criteria, such as the kissing number problem. 
%
Inventory problems, however,  differ from these static problems. 
Inventory decisions are sequential, demand changes over time, and policies to be deployed in future periods need be decided using historical data. 
As a result, a policy that performs well in the past may not necessarily perform well in the future. Moreover, 
inventory problems is often data-driven and feature-rich. 
These challenges call for a framework that evolves data-driven inventory policies in online, non-stationary settings. 



We develop \textit{InvEvolve}, a confidence-interval-based framework for iterative policy generation, replay evaluation, and statistical inference.
The framework is designed to combine the interpretability and implementability of white-box inventory policies with the flexibility of black-box learning methods. 
It allows the LLM to generate candidate white-box policies and then statistically screen them using replay evaluation based on historical data. 

Importantly, these candidate policies are not selected from a fixed library of existing inventory policies. Instead, InvEvolve searches over executable and operationally interpretable policy variants. These variants may extend classical policy structures by modifying their functional forms, adjustment terms, or decision rules.  The final output is a white-box inventory policy that is statistically safe relative to the baseline policy and is more likely to improve future performance.
Once established, the framework  can be applied to different inventory scenarios for continual policy search with rolling 
exploration across periods, 
without retraining a specialized 
model for each new scenario.

Theoretically, we analyze the performance of InvEvolve under three auditable interface conditions: replay transfer, adaptive confidence validity, and effective search coverage. 
We establish a lower bound on the probability that the LLM-evolved policy is both safe relative to the baseline policy and improves future performance. We further characterize the performance gap between the LLM-evolved policy and the oracle-safe policy under multi-period deployment, and analyze how key factors affect 
this gap. 

In addition, we systematically compare \textit{InvEvolve} with five white-box inventory policies and two representative learning-based methods using both synthetic and real data. \textit{InvEvolve} achieves win rates of  83\% and 67\% against the white-box policies on the two datasets, respectively, and its win rates are also higher than those of the deep learning benchmark policies. These results demonstrate the flexibility and competitive performance of \textit{InvEvolve}.
Furthermore, in the classical lost-sales system with positive lead time, we use the capped base-stock policy 
\citep{xin}, an established and well-performing policy in this setting, as the starting point for policy evolution. \textit{InvEvolve} identifies improved policy variants and outperforms the capped base-stock policy under multiple demand distributions. This result suggests that the framework may also 
provide a useful approach for discovering new policy structures and generating insights for  inventory theory.

To the best of our knowledge, this is the first work to introduce an AlphaEvolve-style evolutionary search framework to inventory policy design. We also provide a theoretical framework that connects  training, inference, and deployment. Our main contributions are summarized as follows:

\begin{enumerate}
    \item We propose a confidence interval-based LLM policy evolution framework for inventory problems. The framework balances 
    historical replay performance and future deployment risks. It provides an end-to-end procedure for generating executable white-box inventory policies. enabling large language models to generate executable white-box inventory policies end-to-end.
    \item We construct an unified theoretic framework for analyzing LLM-based white-box policy evolution, which connects the training, inference, and deployment stages. We establish a lower bound on the probability that 
    \textit{InvEvolve} identifies a statistically safe and improved policy, and characterize the multi-period performance gap between the deployed policy and the oracle-safe policy.
    \item We train and open-source an LLM for inventory policy exploration. The model can incorporate both numerical and textual features of demand information. Experiments on synthetic and real-world data show that \textit{InvEvolve} outperforms several white-box inventory policies and representative learning-based methods. In classical inventory settings, our framework also evolves new competitive inventory policies.
\end{enumerate}

\subsection{Literature Review}

Our research is related to three streams of literature:

\paragraph{Classical and learning-based inventory methods.}
The inventory literature has developed structured and interpretable white-box policies for canonical inventory models. The optimal policy and its structural properties are characterized in various settings \citep{zipkin2000foundations}. For settings where the exact optimal policy is out of reach, heuristics built on interpretable structures are proposed and shown to have strong performance guarantees  (e.g, \citealp{zipkin2008oldnew,xin}).  
Such policies and heuristics are simple to implement. However, they are less directly applicable in real-world dynamic environments, where the original policy assumptions may no longer hold, leading to less stable performance. This has motivated a growing stream of research on learning-based inventory management, where decisions are learned directly from data 
Examples include cost-oriented newsvendor learning \citep{ban2019bigdata}, prescriptive analytics with contextual features \citep{bertsimas2020predictive}, decision-aware learning \citep{elmachtoub2022spo}, and reinforcement-learning-based inventory control \citep{zhang2020closinggap,gijsbrechts2022can,oroojlooyjadid2022beergame,xie2026deepstock}. 
However, such methods are often less interpretable than classical white-box policies, and they may also suffer from a cold-start problem when the available data are limited.
Our work complements these two streams   
by using black-box LLMs to generate white-box inventory policies, which leverages the adaptivity 
of black-box models to online demand environments while preserving the interpretability of structured inventory policies. 

\paragraph{LLMs for operations research.}
A recent stream of research examines how LLMs can support operations research by translating natural language into optimization models, code, or heuristic procedures. Representative directions include automated optimization modeling \citep{huang2025orlm,liang2026large,zhou2026steporlm,tang2026calm} and agent-based LLM applications to inventory problems \citep{quan2024invagent,optiguide}. End-to-end LLM solvers further explore direct solution generation from textual descriptions \citep{jiang2025llmcosolver}. Most existing studies on LLM applications in operations research provide limited theoretical analysis of model performance and guarantees from training to inference. 
Our work complements this literature by introducing a unified framework with theoretical guarantees in inventory management that connects these stages and provides performance guarantees for statistically certified policy deployment.

\paragraph{Evolutionary discovery with language models.}
A third stream of related research studies closed-loop discovery using language models, in which candidate rules or programs are iteratively generated, evaluated, and refined. Pioneering work such as FunSearch \citep{romeraparedes2024funsearch} identifies improved algorithms for given combinatorial optimization problems. Its successor, AlphaEvolve \citep{alphaevolve2025}, further demonstrates strong performance for mathematical discovery, for example by improving the lower bound for the $11$-dimensional kissing number problem. More recent studies extend this paradigm to broader decision-making settings. For example, language agents are used for virtual machine scheduling in cloud computing \citep{wu2025learning}, and LLM-enhanced AutoML is used to incorporate organizational memory in marketing analytics \citep{lei2025orgmemory}. Relatedly, \citet{yang2025heuragenix} develop a two-stage LLM-based hyper-heuristic framework for combinatorial optimization that first evolves heuristics and then dynamically selects among them during problem solving. However, most existing studies focus on problems with relatively static and well-defined mathematical structures, rather than on dynamic and data-driven inventory policy design. This leaves a gap for methods that combine generative search with white-box inventory policies and explicit deployment safeguards.

\section{Model Setup and Training Framework}
\label{sec:basic_model_training}

\subsection{Model Setup}
\label{subsec:basic_model}

We consider a single-sourcing lost-sales inventory system with lead time $L$. We model the inventory system on two time scales. The outer, macroscopic time index $t = 1, \ldots, T$ denotes a policy-update epoch. At each such epoch, the decision maker observes the recent performance of the current inventory policy over a historical window and then selects a new policy for the subsequent period, during which the demand distribution may shift over time.

\begin{figure}[!ht]
\centering
\resizebox{\textwidth}{!}{
\includegraphics[width=\textwidth]{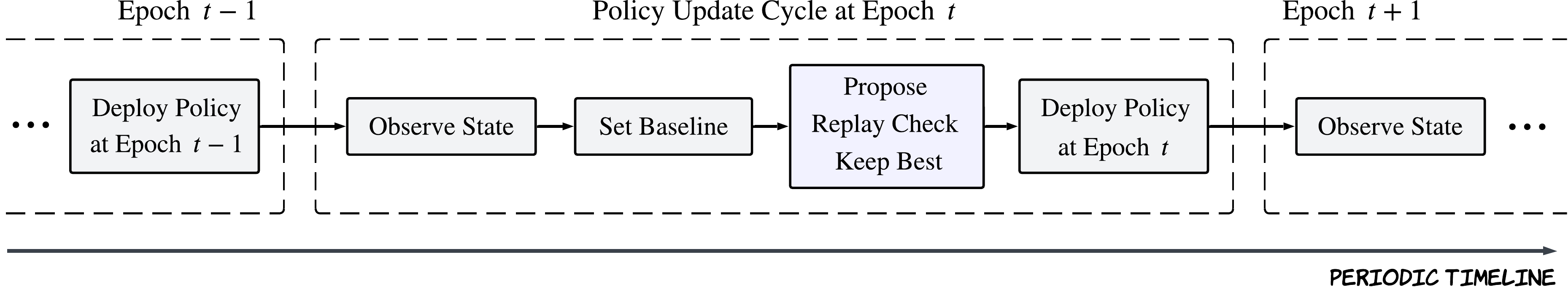}
}
\caption{Timeline of the InvEvolve update cycle. The system reviews recent information, picks a baseline, searches for a better policy, and deploys the selected policy.}
\label{fig:lgps_timeline}
\end{figure}

The inner, microscopic time is indexed by $n=1,2,\dots$, which corresponds to the standard period-level index in the classical lost-sales model. At each period, demand $D_n$ arrives and unmet demand is lost. Let $x_n$ denote the on-hand inventory at the beginning of period $n$, and let $\{p_{n,i}\}_{i=1}^L$ denote the pipeline orders that will arrive in future periods. The inventory position is defined as $\mathrm{IP}_n = x_n + \sum_{i=1}^L p_{n,i}$. After observing the current state, the decision maker places an order $q_n$ according to the inventory policy selected in this period, which arrives after $L$ periods. The objective is to minimize the long-run average cost, including holding cost $h$ and lost-sales penalty $p$.

We propose an LLM-based framework, \emph{InvEvolve}. Operationally, InvEvolve uses an inner-loop \emph{LLM-guided Policy Search} (LGPS) procedure to generate and select the policy to be used in the next period. Specifically, the LLM generates executable code representing candidate inventory policies. Since different code snippets may represent the same policy logic, we work with a \emph{canonical policy space} rather than raw code strings/outputs. Formally, let $\widetilde{\Pi}$ denote the set of syntactically admissible/valid code outputs, and let
\[
\mathrm{can}:\widetilde{\Pi}\to\Pi
\]
denote a canonicalization map that merges semantically or structurally equivalent code into the same canonical policy. All theoretical objects are defined on the canonicalized policy space $\Pi$. In the training-stage analysis, we treat the canonical policy space $\Pi$ as finite or countable after canonicalization.

Each canonical policy $\pi \in \Pi$ maps the current state and observed features to an order decision. To exclude invalid or non-deployable outputs, we introduce a structural validity indicator $g:\Pi \to \{0,1\}$, where $g(\pi)=1$ means that $\pi$ is executable, interpretable, inventory-consistent, and satisfies the required white-box constraints. These constraints may include, for example, valid state access, bounded arithmetic, a monotone threshold structure, or other auditability requirements. The framework searches for improved policies only among policies with $g(\pi)=1$.

Throughout Sections~\ref{sec:basic_model_training} and~\ref{sec:inference_deployment}, we focus on the evolution and update of inventory policies, and therefore primarily work with the outer index $t$. The inner index $n$ is mostly used in Section~\ref{sec:experiment}, where we analyze the structure of individual policies. At each outer epoch $t$, the system selects a policy $d_t\in\Pi$ to be deployed over the subsequent operational window. We assume that a fixed family of interpretable baseline policies is available, denoted by
\[
\Pi^{\mathrm{base}}
=
\{\pi^{\mathrm{base}}_1,\ldots,\pi^{\mathrm{base}}_M\}
\subseteq \Pi.
\]

At the beginning of outer epoch $t$, the decision maker observes a summary state $\mathcal{H}_t$, which includes recent demand trajectories, the current inventory status, in-transit inventory, lead-time information, cost parameters, and historical policy performance. In addition, $\mathcal{H}_t$ provides a fully observed (mature) replay window, 
which allows the decision maker to rerun policies that were not deployed. Let $d_{t-1}$ denote the policy deployed in the previous outer epoch. In each outer epoch $t$, the decision maker may also manually specify a reference baseline policy, denoted by $\pi_t^{\mathrm{ref}} \in \Pi^{\mathrm{base}}$, which represents the policy judged appropriate for the next operational window and serves as the unique safety anchor. 

Given $\pi_t^{\mathrm{ref}}$, the inference stage starts from the fallback pool $\mathcal{A}_{t,0}\triangleq\Pi^{\mathrm{base}}\cup\{d_{t-1}\},$ replay-evaluates these candidates against $\pi_t^{\mathrm{ref}}$, and initializes the champion $\pi_{t,0}^{\mathrm{ch}}$ as the strongest safety-certified fallback, with $\pi_t^{\mathrm{ref}}$ as the default if no other candidate is certified.

For a policy $\pi$ and outer epoch $t$, let $C_t^{\mathrm{dep}}(\pi)$ denote the expected cumulative deployment cost incurred over the operational window associated with epoch $t$. For any comparator policy $\tilde\pi$, define the true deployment gain of $\pi$ relative to $\tilde\pi$ as
\begin{equation}
V_t^{\mathrm{dep}}(\pi \mid \tilde\pi)
\triangleq
C_t^{\mathrm{dep}}(\tilde\pi) - C_t^{\mathrm{dep}}(\pi).
\label{eq:def_v_dep}
\end{equation}
Thus, $V_t^{\mathrm{dep}}(\pi \mid \tilde\pi) > 0$ indicates that $\pi$ has a lower deployment cost than $\tilde\pi$ over the operational window associated with epoch $t$.

During decision making at outer epoch $t$, the system has access to a mature replay window $W_t$, which enables sample-path evaluation of counterfactual policies. Let $V_t^{\mathrm{rep}}(\pi \mid \tilde\pi)$ denote the replay gain of $\pi$ relative to $\tilde\pi$ over this window. Note that replay performance and deployment performance do not necessarily coincide. 
Replay performance is evaluated on historical sample paths that are fully observed at the time of decision making, whereas deployment performance depends on future demand.

We use the same baseline family $\Pi^{\mathrm{base}}$ throughout training, inference, and deployment. Training compares generated policies against the full baseline family. In inference and deployment, the fixed within-period search budget is denoted by $J$, i.e., in each outer epoch $t$, the inner-loop search runs for at most $J$ iterations/rounds, where each iteration generates and evaluates one candidate policy.  
$\pi_{t,j}^{\mathrm{ch}}$ denotes the round-$j$ champion within epoch $t$, and $d_t$ denotes the policy deployed in epoch $t$.

\subsection{Training Procedure and Theoretical Property}
\label{subsec:training_binary_reward}

We train the model using reinforcement learning such that the LLM generates policies that satisfy $g(\pi)=1$ with high probability in each period, while also progressively improving performance. 
In reinforcement learning, 
an agent learns to select better actions 
through repeated interaction with the environment. In our setting, the policy $\pi$ generated by the LLM is viewed as the agent's action. For a  training instance $x \sim \mathcal{D}_{\mathrm{tr}}$, the cost $C(\pi;x)$ incurred by policy $\pi$ is treated as the outcome of that action. During training, the generated policy is compared against the baseline family $\Pi^{\mathrm{base}}=\{\pi^{\mathrm{base}}_1,\ldots,\pi^{\mathrm{base}}_M\}$. The reward is defined as a binary indicator:
\begin{equation*}
r(x,\pi)
\;=\;
\mathbf{1}\!\left\{
g(\pi)=1,\;
C(\pi;x) < \min_{\tilde\pi\in\Pi^{\mathrm{base}}} C(\tilde\pi;x)
\right\}.
\end{equation*}

This definition is consistent with the operational criterion used in the experiments. A generated policy receives a reward only if it is structurally valid and strictly outperforms the best baseline policy on the given instance. The reward therefore serves as a success indicator rather than a dense surrogate of the cost. This design 
both preserves a strong and stable training signal and admits a direct probabilistic interpretation.

Specifically, for any canonical policy $\pi$, define its success probability by $\Delta(\pi)\;\triangleq\;\mathbb{E}_{x\sim\mathcal{D}_{\mathrm{tr}}}[r(x,\pi)].$ The quantity $\Delta(\pi)$ is precisely the probability that $\pi$ is both structurally valid and strictly better than the baseline family on a random training instance. Hence, the role of training is to shape the model's policy-generation distribution so that policies with higher success probability $\Delta(\pi)$ are generated more often. To formalize this idea, 
let $\tau_{\mathrm{good}}\in(0,1)$ denote the  success-probability threshold to classify a policy as good during training. We define the \emph{training-stage good region} as
\begin{equation*}
\mathcal{G}^{\mathrm{tr}}
\;\triangleq\;
\left\{
\pi \in \Pi :
g(\pi)=1,\;
\Delta(\pi)\ge \tau_{\mathrm{good}}
\right\},
\label{eq:def_good_region}
\end{equation*}
and the effective upper envelope over the complement of the good region (i.e., bad region) as
\begin{equation*}
\Delta_{\mathrm{bad}}
\;\triangleq\;
\sup_{\pi \in \Pi \setminus \mathcal{G}^{\mathrm{tr}}} \Delta(\pi),
\qquad
\gamma
\;\triangleq\;
\tau_{\mathrm{good}}-\Delta_{\mathrm{bad}} > 0.
\label{eq:def_margin}
\end{equation*}
The margin $\gamma$ measures how much the good region exceeds the rest of the canonical policy space.

Let $p_k$ denote the proxy distribution over canonical policies induced by the model after the $k$-th reinforcement step, where raw code outputs are mapped through the canonicalization operator. Further define $\widehat{\Delta}_k(\pi)$ as the estimated success probability of policy $\pi$ at training step $k$, and $\eta>0$ be the update step size. 

In this paper, we train the model using the reinforcement learning algorithm Group Relative Policy Optimization (GRPO) \citep{shao2024deepseekmath}. A token-level analysis of the GRPO update is neither tractable nor aligned with the policy-generation quality relevant to our inventory analysis.\footnote{A token is the basic unit in a language model's output, such as a word piece, symbol, or code fragment. Since GRPO updates token probabilities over long code-generation trajectories, a token-level analysis would require tracking a high-dimensional model, token dependence, and the many-to-one mapping from raw code to economically equivalent inventory policies. This is not tractable and is not central to our objective.}. Therefore, we focus on the aggregate change in the proxy distribution over canonical policies induced by the trained model. In particular, training assigns more probability mass to policies with higher estimated success probabilities \citep{freund1997decision}:
\begin{equation}
p_{k+1}(\pi)
=
\frac{
p_k(\pi)\exp\{\eta \widehat{\Delta}_k(\pi)\}
}{
\sum_{\pi' \in \Pi}
p_k(\pi')\exp\{\eta \widehat{\Delta}_k(\pi')\}
}.
\label{eq:proxy_update}
\end{equation}

Equation~\eqref{eq:proxy_update} should be interpreted as an analytical proxy rather than an exact description of GRPO's token-level optimization\footnote{Equivalently, \eqref{eq:proxy_update} is the closed-form solution of the KL-regularized population policy-improvement problem
\[
p_{k+1}
\in
\arg\max_{q\in\Delta(\Pi)}
\left\{
\sum_{\pi\in\Pi} q(\pi)\widehat{\Delta}_k(\pi)
-
\frac{1}{\eta}\mathrm{KL}(q\|p_k)
\right\}.
\]}. It is consistent with the qualitative effect of policy-gradient reinforcement learning, namely that high-success policies are progressively upweighted. For analytical convenience, we assume a positive initial mass on the good region, i.e., $p_0(\mathcal{G}^{\mathrm{tr}})>0$. Accordingly, Theorem~\ref{thm:train_concentration} is a concentration result for the proxy distribution over canonical policies induced by equation \eqref{eq:proxy_update}. 

The following theorem shows that, over 
$K$ training steps, if the cumulative discrepancy between the estimated success signal and the true success probability is bounded by a constant $\varepsilon_K$, then the induced proxy distribution over canonical policies progressively concentrates on the structurally valid, high-success region. The concentration occurs at an exponential rate characterized by $K\gamma - 2\varepsilon_K$.

\begin{theorem}
\label{thm:train_concentration}
Assume there exist a training event $\mathcal{E}_{\mathrm{tr}}$ and a deterministic sequence $(\varepsilon_K)_{K\ge1}$ such that, on $\mathcal{E}_{\mathrm{tr}}$, for every $K\ge1$, $\sup_{\pi:\,p_0(\pi)>0}\left|\sum_{k=1}^{K}\bigl(\widehat{\Delta}_k(\pi)-\Delta(\pi)\bigr)\right|\le\varepsilon_K.$ 
Then, on $\mathcal{E}_{\mathrm{tr}}$, 
\begin{equation}
\frac{p_K(\Pi \setminus \mathcal{G}^{\mathrm{tr}})}{p_K(\mathcal{G}^{\mathrm{tr}})}
\le
\rho_K
\triangleq
\frac{p_0(\Pi \setminus \mathcal{G}^{\mathrm{tr}})}{p_0(\mathcal{G}^{\mathrm{tr}})}
\exp\!\bigl\{-\eta(K\gamma-2\varepsilon_K)\bigr\}.
\label{eq:train_ratio_bound}
\end{equation}
Consequently,
\begin{equation}
p_K(\mathcal{G}^{\mathrm{tr}})\ge \frac{1}{1+\rho_K},
\qquad
p_K(\Pi\setminus\mathcal{G}^{\mathrm{tr}})\le \frac{\rho_K}{1+\rho_K}.
\label{eq:train_bad_mass_bound}
\end{equation}
\end{theorem}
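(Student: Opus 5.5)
We unroll the multiplicative-weights recursion \eqref{eq:proxy_update}. By induction on $K$, the normalizing constants telescope, so for every canonical $\pi$,
\[
p_K(\pi)=\frac{p_0(\pi)\exp\{\eta S_K(\pi)\}}{Z_K},\qquad S_K(\pi)\triangleq\sum_{k=1}^{K}\widehat{\Delta}_k(\pi),
\]
where $Z_K>0$ is a single constant independent of $\pi$. (We index the updates $k=1,\dots,K$ to match the hypothesis; this is only a relabelling.) Because $Z_K$ is common to all policies, it cancels in the ratio
\[
\frac{p_K(\Pi\setminus\mathcal{G}^{\mathrm{tr}})}{p_K(\mathcal{G}^{\mathrm{tr}})}
=\frac{\sum_{\pi\notin\mathcal{G}^{\mathrm{tr}}}p_0(\pi)e^{\eta S_K(\pi)}}{\sum_{\pi\in\mathcal{G}^{\mathrm{tr}}}p_0(\pi)e^{\eta S_K(\pi)}}.
\]
Policies with $p_0(\pi)=0$ keep zero mass for all $k$, so both sums may be restricted to the support of $p_0$. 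On that support the hypothesis applies.

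Next, on $\mathcal{E}_{\mathrm{tr}}$ we have $|S_K(\pi)-K\Delta(\pi)|\le\varepsilon_K$ for every $\pi$ with $p_0(\pi)>0$. This gives the two one-sided bounds we need:
\begin{itemize}
\item For $\pi\in\mathcal{G}^{\mathrm{tr}}$, $\Delta(\pi)\ge\tau_{\mathrm{good}}$, so $S_K(\pi)\ge K\tau_{\mathrm{good}}-\varepsilon_K$.
\item For $\pi\notin\mathcal{G}^{\mathrm{tr}}$, $\Delta(\pi)\le\Delta_{\mathrm{bad}}$, so $S_K(\pi)\le K\Delta_{\mathrm{bad}}+\varepsilon_K$.
\end{itemize}

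Bounding the numerator above by $p_0(\Pi\setminus\mathcal{G}^{\mathrm{tr}})e^{\eta(K\Delta_{\mathrm{bad}}+\varepsilon_K)}$ and the denominator below by $p_0(\mathcal{G}^{\mathrm{tr}})e^{\eta(K\tau_{\mathrm{good}}-\varepsilon_K)}$ yields
\[
\frac{p_K(\Pi\setminus\mathcal{G}^{\mathrm{tr}})}{p_K(\mathcal{G}^{\mathrm{tr}})}\le\frac{p_0(\Pi\setminus\mathcal{G}^{\mathrm{tr}})}{p_0(\mathcal{G}^{\mathrm{tr}})}\,e^{-\eta(K\gamma-2\varepsilon_K)}=\rho_K,
\]
using $\gamma=\tau_{\mathrm{good}}-\Delta_{\mathrm{bad}}$. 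The denominator is strictly positive because $p_0(\mathcal{G}^{\mathrm{tr}})>0$, and this positivity is preserved by the exponential reweighting.

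For \eqref{eq:train_bad_mass_bound}, write $a=p_K(\mathcal{G}^{\mathrm{tr}})$, so that $p_K(\Pi\setminus\mathcal{G}^{\mathrm{tr}})=1-a$. The bound $(1-a)/a\le\rho_K$ rearranges to $a\ge 1/(1+\rho_K)$, and then $1-a\le\rho_K/(1+\rho_K)$.

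The only delicate step is justifying the unrolling and summation when $\Pi$ is countably infinite. Every summand is nonnegative and $\widehat{\Delta}_k\in[0,1]$, so $\sum_\pi p_k(\pi)e^{\eta\widehat{\Delta}_k(\pi)}\le e^{\eta}<\infty$. Hence each normalizer is finite and positive, and Tonelli's theorem justifies bounding the sums termwise. We assume, as is implicit in the theorem, that $\widehat{\Delta}_k$ takes values in $[0,1]$; if not, we would only need finiteness of the normalizers.
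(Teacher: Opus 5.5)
Your proposal is correct and is essentially the paper's proof: the paper also unrolls \eqref{eq:proxy_update} into unnormalized weights $W_K(S)=\sum_{\pi\in S}p_0(\pi)\exp\{\eta\sum_{k}\widehat{\Delta}_k(\pi)\}$, cancels the common normalizer, bounds the good-region weight from below and the bad-region weight from above using the cumulative-error hypothesis, and then rearranges. Your extra care about $p_0$-null policies and finite normalizers goes beyond what the paper writes, and the finiteness does not require assuming $\widehat{\Delta}_k\in[0,1]$, because the hypothesis (holding for every $K$) together with $\Delta(\pi)\le 1$ already gives $\sum_{k\le K}\widehat{\Delta}_k(\pi)\le K+\varepsilon_K$ on the support of $p_0$.
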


Theorem~\ref{thm:train_concentration} provides 
a lower bound on $p_K(\mathcal{G}^{\mathrm{tr}})$, the probability, under the trained policy-generation distribution, of generating a policy in the training-stage good-policy region. 
In the next section, building on this result, 
we develop an inference framework based on replay evaluation and confidence screening and establish its deployment guarantees. 

\section{Inference and Deployment Analysis}
\label{sec:inference_deployment}

In this section, we move from offline training to a periodic decision-making setting. Specifically, we use historical information collected by the decision maker to replay candidate policies. At each round, when the model generates new policies,  replay performance statistics and their associated confidence intervals are used to guide policy selection.

In Section~\ref{subsec:replay_eval_confidence}, we first introduce the key replay statistics and certification score functions used in the analysis. In Section~\ref{subsec:llm_guided_search_setup}, we present the inference framework together with the underlying analytical setting. In Section~\ref{subsec:theoretical_guarantees}, we establish the theoretical guarantees of the proposed framework.

\subsection{Replay Statistics and Certification Scores}
\label{subsec:replay_eval_confidence}

In current inventory systems, a decision maker typically has access to historical demand and operational data collected over multiple periods, such as daily sales, add-to-cart counts, and stockout records. These historical observations provide a replay window. Although only one policy is deployed and observed at each time point, the availability of replayable data allows us to conduct counterfactual evaluation for policies that were not deployed, thereby estimating their potential performance along the observed sample paths.


\noindent\textbf{Replay Evaluation.} Fix a period $t$. At the beginning of the period, the system has access to a mature replay window $W_t = \{\omega_{t,1},\ldots,\omega_{t,m_t}\},$ 
where each $\omega_{t,\ell}$ denotes a replayable sample path.  For theoretical clarity, 
we use the same replay sample size 
$m_t$ for all evaluated pairs in period $t$. 
For a replayable sample path $\omega$, let $C_t(\pi;\omega)$ denote the cumulative cost incurred by policy $\pi$ when it is replayed on path $\omega$ over the period-$t$ evaluation window. For a candidate policy $\pi$ and a reference policy (comparator) $\tilde\pi$, define the sample-path gain (cost reduction) as 
$$
Z_{t,\ell}(\pi \mid \tilde\pi)
\;\triangleq\;
C_t(\tilde\pi;\omega_{t,\ell}) - C_t(\pi;\omega_{t,\ell}),
\qquad
\ell=1,\ldots,m_t,
$$
where $m_t$ is the number of replayable paths used for each evaluated pair in period $t$. The mean of replay gain (replay mean) is defined as 
\begin{equation}
\widehat{\mu}_t(\pi \mid \tilde\pi)
\triangleq
\frac{1}{m_t}
\sum_{\ell=1}^{m_t}
Z_{t,\ell}(\pi \mid \tilde\pi). 
\label{eq:def_mu_hat}
\end{equation}
The empirical variance of replay gains (replay variance) is defined as 
\begin{equation}
\widehat{v}_t(\pi\mid\tilde\pi)
\triangleq
\frac{1}{m_t}\sum_{\ell=1}^{m_t}
\Bigl(Z_{t,\ell}(\pi\mid\tilde\pi)-\widehat{\mu}_t(\pi\mid\tilde\pi)\Bigr)^2.
\label{eq:def_v_hat}
\end{equation}

\noindent\textbf{Evaluation Statistics.} 
To construct confidence bounds, let $\mathcal P_t^{\mathrm{eval}}$ denote the set of policy--comparator pairs evaluated in period $t$. This set includes both pre-loop baseline evaluations and inner-loop candidate evaluations. We provide  an upper bound on the total number of comparisons performed by InvEvolve in period $t$.
Specifically, before the inner loop, at most $|\mathcal{A}_{t,0}| - 1$ policies are evaluated against the reference policy $\pi_t^{\mathrm{ref}}$. 
During the inner loop, each of the $J$ 
iterations contributes at most two additional comparisons, namely between $(\pi_{t,j}, \pi_t^{\mathrm{ref}})$ and between $(\pi_{t,j}, \pi_{t,j-1}^{\mathrm{ch}})$. 
Therefore, the total number of evaluated pairs in period $t$ is bounded by 
$N_t \triangleq (|\mathcal{A}_{t,0}| - 1) + 2J,$ and the realized set 
satisfies $|\mathcal{E}_t| \le N_t$. Let $\delta_t\in(0,1)$ be the period-level confidence budget used for simultaneous replay certification in period $t$. For the canonical bounded-gain implementation, when $|Z_{t,\ell}(\pi\mid\tilde\pi)|\le B_t$, we use the Hoeffding-style radius \citep{hoeffding1963}
\begin{equation}
\operatorname{rad}_t(\pi\mid\tilde\pi)
\triangleq
B_t\sqrt{\frac{2\log(2N_t/\delta_t)}{m_t}}.
\label{eq:def_radius}
\end{equation}
More generally, $\operatorname{rad}_t(\pi\mid\tilde\pi)$ can be any valid replay radius satisfying the confidence-event condition in Assumption~\ref{ass:adaptive_ci}.

Together with equation \eqref{eq:def_mu_hat}, 
this yields the classical upper and lower confidence bounds
\begin{equation}
\mathrm{UCB}_t(\pi \mid \tilde\pi)
\triangleq
\widehat{\mu}_t(\pi \mid \tilde\pi)+\operatorname{rad}_t(\pi \mid \tilde\pi),
\quad
\mathrm{LCB}_t(\pi \mid \tilde\pi)
\triangleq
\widehat{\mu}_t(\pi \mid \tilde\pi)-\operatorname{rad}_t(\pi \mid \tilde\pi).
\label{eq:ucb_lcb}
\end{equation}

The lower confidence bound is used to certify  conservative improvement, while the upper bound is used for optimistic selection among candidates that have already passed the safety screen. 
By convention, the reference policy has zero gain relative to itself:
$
V_t^{\mathrm{rep}}(\pi_t^{\mathrm{ref}}\mid \pi_t^{\mathrm{ref}})=
\widehat{\mu}_t(\pi_t^{\mathrm{ref}}\mid \pi_t^{\mathrm{ref}})=
\operatorname{rad}_t(\pi_t^{\mathrm{ref}}\mid \pi_t^{\mathrm{ref}})=0.
$ 
Hence, $\mathrm{LCB}_t(\pi_t^{\mathrm{ref}}\mid \pi_t^{\mathrm{ref}})=\mathrm{UCB}_t(\pi_t^{\mathrm{ref}}\mid \pi_t^{\mathrm{ref}})=0.$

Recall that replay performance and deployment performance may differ since they are evaluated under different path distributions. Let $P_t^{\mathrm{rep}}$ and $P_t^{\mathrm{dep}}$ denote, respectively, replay and deployment path distributions in period $t$. For each pair $(\pi,\tilde\pi)$ and a sample path $\omega$, define
\[
z_{t,\pi,\tilde\pi}(\omega):=C_t(\tilde\pi;\omega)-C_t(\pi;\omega),
\]
and assume $z_{t,\pi,\tilde\pi}\in\mathfrak F_t$ for a function class $\mathfrak F_t$. Define the replay-to-deployment mismatch as 
\begin{equation}
\xi_t
\triangleq
\sup_{f\in\mathfrak F_t}
\left|
\mathbb{E}_{P_t^{\mathrm{dep}}}[f]-\mathbb{E}_{P_t^{\mathrm{rep}}}[f]
\right|.
\label{eq:gap_assumption}
\end{equation}

This quantity is an integral probability metric (IPM) over $\mathfrak F_t$, which measures the worst-case discrepancy between replay and deployment expectations across the gain functions. 
Hence, for all relevant $(\pi,\tilde\pi)$,  $\left|V_t^{\mathrm{dep}}(\pi\mid \tilde\pi)-V_t^{\mathrm{rep}}(\pi\mid \tilde\pi)\right|\le\xi_t.$

\subsection{LLM-Guided Policy Search and Analytical Setup}
\label{subsec:llm_guided_search_setup}

The central inference mechanism in InvEvolve is an inner-loop procedure that integrates policy  generation and certification. We refer to this inner loop as the \emph{LLM-guided Policy Search} (LGPS) procedure. The key principle is that the LLM does not generate all candidate policies at once. 
Instead, each generated candidate is conditioned on the outcomes of previous replay evaluations and previous accept-or-reject decisions.

At the beginning of period $t$, recall the initial active candidate set $\mathcal{A}_{t,0}\triangleq\Pi^{\mathrm{base}} \cup \{d_{t-1}\}.$ Before the search loop starts, all policies in $\mathcal{A}_{t,0}$ are replay-evaluated against the period-specific reference baseline $\pi_t^{\mathrm{ref}}\in \Pi^{\mathrm{base}}$. Define the safety-feasible initial subset $\mathcal{A}_{t,0}^{\mathrm{feas}}
\triangleq
\left\{
\pi\in\mathcal{A}_{t,0}:
\mathrm{LCB}_t(\pi\mid\pi_t^{\mathrm{ref}})\ge\xi_t
\right\}$. Then initialize the current champion as
\[
\pi_{t,0}^{\mathrm{ch}}
\in
\arg\max_{\pi\in \mathcal{A}_{t,0}^{\mathrm{feas}}\cup\{\pi_t^{\mathrm{ref}}\}}
\mathrm{UCB}_t(\pi\mid\pi_t^{\mathrm{ref}}).
\]

Thus, the initial champion is the strongest safety-certified fallback policy among the baseline family and the incumbent, with $\pi_t^{\mathrm{ref}}$ as the default fallback if no other initial candidate is safety-feasible. Here $\pi_{t,j}^{\mathrm{ch}}$ denotes the current \emph{champion} after the $j$-th inner-loop round, and $\mathcal{A}_{t,j}$ denotes the active candidate pool.

Let $\mathcal H_{t,j-1}$ denote the round-$(j-1)$ prompt context available to the LLM within period $t$. It contains the period-level summary $\mathcal H_t$ together with the replay summaries, previous decisions, and feedback accumulated up to round $j-1$.

For each round $j=1,\ldots,J$, the LLM produces a candidate policy according to
\begin{equation}
\pi_{t,j}
\sim
Q_{\theta}\!\left(
\cdot
\mid
\mathcal{H}_{t,j-1},
\pi_{t,j-1}^{\mathrm{ch}},
\mathcal{A}_{t,j-1},
\mathrm{stats}_{t,j-1}
\right),
\label{eq:def_policy_generation_dist}
\end{equation}
where $\mathrm{stats}_{t,j-1}$ contains all replay summaries, confidence scores, and previous decisions up to round $j-1$. Note that the policy-generation distribution changes as the algorithm learns, within the same period, which directions appear promising and which failure modes should be avoided. For notational simplicity, we write
\[
Q_{t,j}
:=
Q_{\theta}\!\left(
\cdot
\mid
\mathcal{H}_{t,j-1},
\pi_{t,j-1}^{\mathrm{ch}},
\mathcal{A}_{t,j-1},
\mathrm{stats}_{t,j-1}
\right)
\]
for the round-$j$ policy-generation distribution conditional on the information available before generating $\pi_{t,j}$.

For the round-$j$ generated candidate $\pi_{t,j}$, we compute the certification scores defined as follows. 
\begin{align*}
S_t(\pi)
\triangleq
\mathrm{LCB}_t(\pi \mid \pi_t^{\mathrm{ref}}),
\quad
I_{t,j}(\pi)
\triangleq
\mathrm{LCB}_t(\pi \mid \pi_{t,j-1}^{\mathrm{ch}}),
\quad
O_{t,j}(\pi)
\triangleq
\mathrm{UCB}_t(\pi \mid \pi_{t,j-1}^{\mathrm{ch}}).
\end{align*}

The score $S_t$ measures certified safety relative to the period-specific reference baseline, $I_{t,j}$ measures certified improvement relative to the current champion, and $O_{t,j}$ measures the optimistic upside against the current champion. The current champion may be initialized from the baseline family or the incumbent, and may later be updated by promoted generated candidates. Intuitively, the first two scores drive promotion decisions, while the third captures the potential upside relative to the current champion.

For each period $t$, let $\{\mathcal{F}_{t,j}\}_{j=0}^J$ denote the natural filtration of the period-$t$ InvEvolve inner-loop process. 
$\mathcal{F}_{t,0}$ contains the realized period context $\mathcal{H}_t$, the replay window $W_t$, the reference baseline $\pi_t^{\mathrm{ref}}$, the initial active pool $\mathcal{A}_{t,0}$, and all pre-loop replay statistics used to construct $\mathcal{A}_{t,0}^{\mathrm{feas}}$ and $\pi_{t,0}^{\mathrm{ch}}$. For $j\ge 1$, $\mathcal{F}_{t,j}$ augments $\mathcal{F}_{t,j-1}$ by the round-$j$ generated candidate $\pi_{t,j}$, the replay observations used to evaluate $\pi_{t,j}$, and the resulting gate, champion-update, and pool-update decisions.

Fix an improvement threshold $\varepsilon>0$. The candidate promotion rule is as follows.

\noindent
\textbf{Pool update rule.}
For every round $j=1,\ldots,J$, after evaluating $\pi_{t,j}$, update
\[
\mathcal{A}_{t,j} \leftarrow \mathcal{A}_{t,j-1}\cup\{\pi_{t,j}:g(\pi_{t,j})=1\}.
\]

\noindent
\textbf{Champion update rule.}
If
\[
S_t(\pi_{t,j}) \ge \xi_t,
\qquad
I_{t,j}(\pi_{t,j}) \ge \varepsilon+\xi_t,
\]
then the candidate is accepted and promoted, $\pi_{t,j}^{\mathrm{ch}} \leftarrow \pi_{t,j}.$ Otherwise, $\pi_{t,j}^{\mathrm{ch}} \leftarrow \pi_{t,j-1}^{\mathrm{ch}}.$ 
Because the initial champion is selected from the strongest safety-certified baseline, passing the improvement gate means improving relative to the strongest certified fallback available at period start.

The final active pool at period $t$ is $\mathcal{A}_t:=\mathcal{A}_{t,J}$. Accordingly, the final safety-feasible set is defined as $\mathcal{A}_t^{\mathrm{feas}} := \{\pi \in \mathcal{A}_t : \mathrm{LCB}_t(\pi \mid \pi_t^{\mathrm{ref}}) \ge \xi_t\}$. At the end of the period, the system deploys
\begin{equation}
d_t
:=
\begin{cases}
\arg\max_{\pi\in \mathcal{A}_t^{\mathrm{feas}}}
\mathrm{UCB}_t(\pi \mid \pi_t^{\mathrm{ref}}),
& \mathcal{A}_t^{\mathrm{feas}}\neq\varnothing,\\[4pt]
\pi_t^{\mathrm{ref}},
& \mathcal{A}_t^{\mathrm{feas}}=\varnothing.
\end{cases}
\label{eq:deployment_rule}
\end{equation}

If no generated candidate is ultimately selected, deployment uses the best safety-certified policy already available in the active pool, which may be a baseline policy or the incumbent. In particular, $\pi_t^{\mathrm{ref}}$ remains the explicit fallback when no other candidate is certified.
\begin{figure}[!htbp]
\centering
\includegraphics[width=\textwidth]{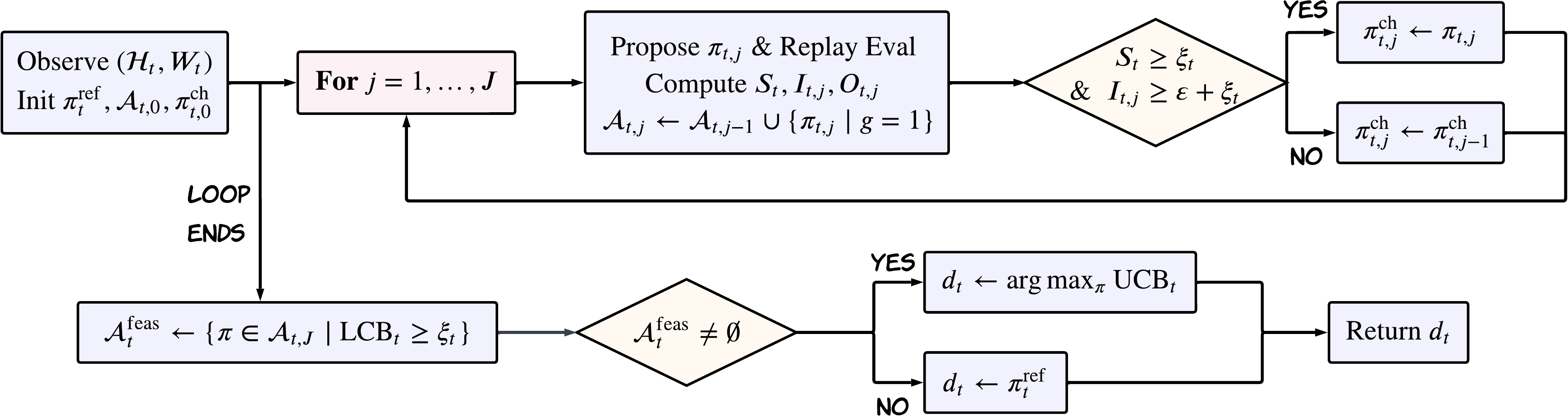}
\caption{Inference workflow in period $t$: after forming the fallback pool and a safety-certified initial champion, InvEvolve iteratively generates candidates, retains all structurally valid evaluated policies in the active pool, promotes only gate-passing policies, and finally deploys a safety-screened policy.}
\label{fig:inference_flowchart}
\end{figure}

\subsection{Analysis Setting and Theoretical Guarantees}
\label{subsec:theoretical_guarantees}

\noindent\textbf{Regularity Conditions for Analysis.} We introduce three assumptions that establish regularity conditions for the LLM-guided search: (i) replay-to-deployment transfer, (ii) validity of adaptive replay confidence, and (iii) effective search coverage. Together, these assumptions make replay-certified LLM deployment a well-posed statistical decision problem, and they can be justified in certain settings.

The first assumption does not require replay and deployment to coincide. It only requires that,
for every pair of policies, the cost difference observed in historical replay is bounded by the IPM-based quantity defined in \eqref{eq:gap_assumption}.

\begin{assumption}[Bounded replay--deployment mismatch]
\label{ass:gap}
For each period $t$, the bound induced by IPM in \eqref{eq:gap_assumption} holds for every policy--comparator pair that enters certification or deployment.
\end{assumption}

Assumption~\ref{ass:gap} can be interpreted as a distributionally robust condition.  $\xi_t$ is the radius of an ambiguity set around the replay distribution, and the replay-based certificate is required to remain valid under all deployment distributions within that set. Appendix~\ref{app:xi-estimation} provides operational estimators based on historical quantile calibration, shift-conditioned quantile regression, and IPM plug-in constructions.

Our analysis also involves statistical confidence intervals constructed for a sequence of adaptively generated candidate policies. To ensure that the certificates produced during the replay process are themselves statistically reliable, we define the replay confidence event as $\mathcal{C}_t:=
\bigcap_{(\pi,\tilde\pi)\in\mathcal{E}_t}\left\{\left|
\widehat{\mu}_t(\pi\mid\tilde\pi)-V_t^{\mathrm{rep}}(\pi\mid\tilde\pi)\right|\le\operatorname{rad}_t(\pi\mid\tilde\pi)
\right\},$ and impose the following assumption.

\begin{assumption}[Adaptive replay confidence validity]
\label{ass:adaptive_ci}
For each period $t$, $\mathbb{P}(\mathcal{C}_t)\ge 1-\delta_t.$
\end{assumption}

Assumption~\ref{ass:adaptive_ci} requires that all replay estimates queried by the algorithm are simultaneously accurate with high probability. This type of high-probability coverage argument is commonly used in UCB-style bandit analysis and adaptive sequential decision making \citep{auer2002finite,abbasi2011online,dwork2015preserving,howard2021time}. A common proof strategy in these works is to first construct a high-probability event under which the empirical means of all relevant arms lie within their confidence intervals, and then show that, on this event, optimistic selection is not systematically distorted by estimation error. Appendix~\ref{app:adaptive-ci-verification} shows that Assumption~\ref{ass:adaptive_ci} follows from the conditional sampling and bounded-gain conditions used in the Hoeffding construction. The Hoeffding radius provides one convenient formal certificate.

We next define the target policy classes that the LLM-guided search must be able to reach. The first target policy class supports single-period promotion, while the second supports rolling comparison with the near-oracle safe policy class. To formalize the promotable policy class, define the replay safety margin and improvement margin as follows:
\[
\begin{aligned}
\Delta_t^{\mathrm{safe}}(\pi)
&\triangleq
V_t^{\mathrm{rep}}(\pi\mid\pi_t^{\mathrm{ref}})
-\xi_t
-2\,\operatorname{rad}_t(\pi\mid\pi_t^{\mathrm{ref}}),
\\
\Delta_{t,j}^{\mathrm{imp}}(\pi)
&\triangleq
V_t^{\mathrm{rep}}(\pi\mid\pi_{t,j-1}^{\mathrm{ch}})
-\varepsilon-\xi_t
-2\,\operatorname{rad}_t(\pi\mid\pi_{t,j-1}^{\mathrm{ch}}).
\end{aligned}
\]

A candidate policy is promotable in replay when both margins are nonnegative. Formally, define
\[
\mathcal{G}^{\mathrm{prom}}_{t,j}(\varepsilon)
\triangleq
\left\{\pi\in\Pi:
g(\pi)=1,
\Delta_t^{\mathrm{safe}}(\pi)\ge 0,
\Delta_{t,j}^{\mathrm{imp}}(\pi)\ge 0
\right\}.
\]
The promotable policy class $\mathcal{G}_{t,j}^{\mathrm{prom}}(\varepsilon)$ connects replay-side certification with gate correctness.

For rolling deployment, define the certifiably safe policy class as
\[
\Pi_t^{\mathrm{safe}}
=
\{\pi_t^{\mathrm{ref}}\}\cup
\left\{\pi\in\Pi:g(\pi)=1,\ \Delta_t^{\mathrm{safe}}(\pi)\ge0\right\}. 
\]
This policy class contains the reference policy and all structurally valid policies that are safe relative to it. The oracle-safe benchmark, i.e., the best deployment gain achievable within this certifiably safe policy class, is
\[
V_t^{\mathrm{safe},*}
=
\sup_{\pi\in\Pi_t^{\mathrm{safe}}}V_t^{\mathrm{dep}}(\pi\mid\pi_t^{\mathrm{ref}}).
\]

For a tolerance $\nu_t\ge0$, define the near-oracle safe policy class as
\[
\Pi_t^{\mathrm{safe},\nu_t}=\left\{\pi\in\Pi_t^{\mathrm{safe}}:V_t^{\mathrm{safe},*}-V_t^{\mathrm{dep}}(\pi\mid\pi_t^{\mathrm{ref}})\le\nu_t\right\}.
\]


Rather than imposing separate structural assumptions on the internal LLM policy-generation mechanism, we summarize the required search capability through a single effective coverage condition.

\begin{assumption}[Effective search coverage]
\label{ass:effective_coverage}
Condition on any training realization in $\mathcal{E}_{\mathrm{tr}}$. For each period $t$ and round $j$, conditional on $\mathcal{F}_{t,j-1}$, there exist constants $q_t,\bar q_t\in[0,1]$ such that, almost surely  $Q_{t,j}\bigl(\mathcal{G}^{\mathrm{prom}}_{t,j}(\varepsilon)\bigr)
\ge q_t,
Q_{t,j}\bigl(\Pi_t^{\mathrm{safe},\nu_t}\bigr)
\ge \bar q_t .$
\end{assumption}

Assumption~\ref{ass:effective_coverage} is a coverage condition of policy search. It does not require the LLM to deterministically identify a good policy in every round of replay evaluation. It only requires that the policy-generation distribution assign nonzero mass to the target policy classes that the certification rule attempts to discover. This condition is analogous to positive-mass conditions in many-armed bandits and randomized search heuristics, where the learner samples policies from a reservoir and the difficulty of identifying a good policy is governed by the reservoir mass of good or optimal policies \citep{deheide2021bandits,solis1981minimization,auger2011theory}. 

Appendix~\ref{app:effective-coverage-sufficient} provides one set of sufficient conditions, tied to the LLM training and inference procedure, under which Assumption~\ref{ass:effective_coverage} holds. The resulting probability bounds are informative when $q_t>0$ or $\bar q_t>0$. If the relevant target policy class is empty, the corresponding coverage parameter $q_t$ or $\bar q_t$ may be zero. In that case, the theory reduces to a safety/fallback statement rather than an improvement guarantee, because an improvement or near-oracle target cannot be discovered when the corresponding target policy class is empty.

\begin{remark}[A canonical stationary case]
In a classical inventory setting with a stationary demand distribution, as studied later in Section~\ref{sec:exp-cbs}, these assumptions can hold naturally. First, Assumption~\ref{ass:gap} is satisfied at the population level. When replay and deployment are conducted under the same distribution, the replay--deployment discrepancy is naturally zero. Second, Assumption~\ref{ass:adaptive_ci} can be enforced by evaluating policies on independent simulation replications, or more generally by using any replay radius that yields a valid replay confidence event. Third, Assumption~\ref{ass:effective_coverage} is supported ex post by the policies discovered in Section~\ref{sec:exp-cbs}. 
Numerical experiments of these policies provide evidence that the policy-generation mechanism assigns non-negligible mass to the target policy classes appearing in Assumption~\ref{ass:effective_coverage}, namely the promotable policy class $G^{\mathrm{prom}}_{t,j}(\varepsilon)$ and the near-oracle safe policy class
$\Pi^{\mathrm{safe},\nu_t}_t$.
\end{remark}

Table~\ref{tab:theorem_assumption_breakdown} summarized where the assumptions are used in the following results. 

\begin{table}[!htbp]
\centering
\caption{Breakdown of the main results by assumption usage}
\label{tab:theorem_assumption_breakdown}
\small
\renewcommand{\arraystretch}{1.12}
\setlength{\tabcolsep}{4pt}
\begin{tabular}{p{0.22\textwidth} c c c p{0.48\textwidth}}
\toprule
Result & A\ref{ass:gap} & A\ref{ass:adaptive_ci} & A\ref{ass:effective_coverage} & Main message \\
\midrule
Theorem~\ref{thm:train_concentration} & -- & -- & -- & Training concentrates probability mass on $\mathcal{G}^{\mathrm{tr}}$. \\

Replay confidence event $\mathcal{C}_t$ & -- & \ding{51} & -- & Replay estimates for adaptively queried pairs are simultaneously accurate. \\
Lemma~\ref{lem:gate_correctness} & \ding{51} & \ding{51} & -- & Replay-certified safety and improvement transfer to deployment. \\

Theorem~\ref{thm:single_month_promotion} & \ding{51} & \ding{51} & \ding{51} & Effective coverage plus gate correctness yields finite-round certified promotion. \\

Theorem~\ref{thm:rolling_regret} & \ding{51} & \ding{51} & \ding{51} & Effective coverage of the near-oracle safe policy class yields rolling safety and an oracle-safe deployment-gap bound. \\
\bottomrule
\end{tabular}
\end{table}

Before presenting the main results, we first establish a useful intermediate property.

\begin{lemma}[Gate correctness under replay confidence and IPM transfer]
\label{lem:gate_correctness}
Fix period $t$ and round $j$, and suppose the period-level replay confidence event $\mathcal{C}_t$ holds. If a candidate $\pi$ belongs to $\mathcal{G}^{\mathrm{prom}}_{t,j}(\varepsilon)$,
then the candidate promotion rule promotes $\pi$, and the promoted policy satisfies
\[
V_t^{\mathrm{dep}}(\pi \mid \pi_t^{\mathrm{ref}})\ge 0,
\qquad
V_t^{\mathrm{dep}}(\pi \mid \pi_{t,j-1}^{\mathrm{ch}})\ge \varepsilon.
\]
\end{lemma}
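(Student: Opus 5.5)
The plan is a direct chain of inequalities with three links: membership in $\mathcal{G}^{\mathrm{prom}}_{t,j}(\varepsilon)$ gives a margin, the confidence event $\mathcal{C}_t$ turns that margin into gate passage, and Assumption~\ref{ass:gap} transfers the replay gains to deployment. We treat the two gates in parallel. One comparator is $\tilde\pi=\pi_t^{\mathrm{ref}}$, with threshold $\xi_t$. The other is $\tilde\pi=\pi_{t,j-1}^{\mathrm{ch}}$, with threshold $\varepsilon+\xi_t$.

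\emph{Step 1 (the gates pass).} Both pairs $(\pi,\pi_t^{\mathrm{ref}})$ and $(\pi,\pi_{t,j-1}^{\mathrm{ch}})$ are queried in round $j$, so they belong to the evaluated set $\mathcal{E}_t$. On $\mathcal{C}_t$ we therefore have $\widehat{\mu}_t(\pi\mid\tilde\pi)\ge V_t^{\mathrm{rep}}(\pi\mid\tilde\pi)-\operatorname{rad}_t(\pi\mid\tilde\pi)$. This gives
\[
S_t(\pi)=\widehat{\mu}_t(\pi\mid\pi_t^{\mathrm{ref}})-\operatorname{rad}_t(\pi\mid\pi_t^{\mathrm{ref}})\ge V_t^{\mathrm{rep}}(\pi\mid\pi_t^{\mathrm{ref}})-2\operatorname{rad}_t(\pi\mid\pi_t^{\mathrm{ref}})=\Delta_t^{\mathrm{safe}}(\pi)+\xi_t\ge\xi_t .
\]
In the same way, $I_{t,j}(\pi)\ge\Delta^{\mathrm{imp}}_{t,j}(\pi)+\varepsilon+\xi_t\ge\varepsilon+\xi_t$. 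By the champion update rule, $\pi$ is promoted.

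\emph{Step 2 (transfer to deployment).} Here we use the other side of the confidence interval, $V_t^{\mathrm{rep}}(\pi\mid\tilde\pi)\ge\widehat{\mu}_t(\pi\mid\tilde\pi)-\operatorname{rad}_t(\pi\mid\tilde\pi)=\mathrm{LCB}_t(\pi\mid\tilde\pi)$. Combining it with the IPM bound $V_t^{\mathrm{dep}}\ge V_t^{\mathrm{rep}}-\xi_t$ from Assumption~\ref{ass:gap} gives $V_t^{\mathrm{dep}}(\pi\mid\pi_t^{\mathrm{ref}})\ge S_t(\pi)-\xi_t\ge 0$. It also gives $V_t^{\mathrm{dep}}(\pi\mid\pi_{t,j-1}^{\mathrm{ch}})\ge I_{t,j}(\pi)-\xi_t\ge\varepsilon$. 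One could instead go directly from the margins: they imply $V^{\mathrm{rep}}\ge\xi_t$ (respectively $\varepsilon+\xi_t$), since the radii are nonnegative. The gate-based route has the advantage of showing that the conclusion holds for any promoted policy, not only for members of $\mathcal{G}^{\mathrm{prom}}$.

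\emph{Main obstacle.} The algebra is trivial. The care needed is in the bookkeeping of the adaptive setting. First, $\pi_{t,j-1}^{\mathrm{ch}}$ is random and $\mathcal{F}_{t,j-1}$-measurable. Second, the pairs $(\pi,\tilde\pi)$ are generated adaptively. So I must check that the confidence statement in $\mathcal{C}_t$ is a simultaneous statement over all realized evaluated pairs, which is exactly what Assumption~\ref{ass:adaptive_ci} provides. I must also check that Assumption~\ref{ass:gap} covers these pairs, which holds because they enter certification. In addition, I will note the degenerate case $\pi_{t,j-1}^{\mathrm{ch}}=\pi_t^{\mathrm{ref}}$, where the zero-gain convention makes both statements consistent.
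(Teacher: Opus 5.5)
Your proof is correct and follows the paper's: Step~1 is identical, and for Step~2 the paper takes the alternative you mention, combining the margins directly with the IPM bound to get $V_t^{\mathrm{dep}}(\pi\mid\pi_t^{\mathrm{ref}})\ge 2\operatorname{rad}_t(\pi\mid\pi_t^{\mathrm{ref}})\ge 0$ and $V_t^{\mathrm{dep}}(\pi\mid\pi_{t,j-1}^{\mathrm{ch}})\ge\varepsilon+2\operatorname{rad}_t(\pi\mid\pi_{t,j-1}^{\mathrm{ch}})$, without using $\mathcal{C}_t$ a second time. Your LCB-based transfer gives slightly weaker constants but, as you say, covers every policy promoted on $\mathcal{C}_t$ (the paper uses the same device for $d_t$ in Step~2 of the proof of Theorem~\ref{thm:rolling_regret}); also, because the lemma is stated on $\mathcal{C}_t$, Assumption~\ref{ass:adaptive_ci} plays no role here, and all you need is that the two round-$j$ pairs lie in $\mathcal{E}_t$.
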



Lemma~\ref{lem:gate_correctness} shows that any candidate in the promotable policy class $G^{\mathrm{prom}}_{t,j}(\varepsilon)$ passes the promotion gates whenever the replay confidence event $\mathcal C_t$ holds. Specifically, the margin conditions $\Delta^{\mathrm{safe}}_t(\pi)\ge 0$ and $\Delta^{\mathrm{imp}}_{t,j}(\pi)\ge 0$ ensure that $S_t(\pi)\ge \xi_t$ and $I_{t,j}(\pi)\ge \varepsilon+\xi_t$. The IPM transfer bound then converts these replay-side certificates into deployment guarantees, ensuring that the promoted policy is safe relative to $\pi^{\mathrm{ref}}_t$ and achieves at least an $\varepsilon$ improvement over the previous champion.

\noindent\textbf{Main Results.} 
We introduce Theorem \ref{thm:single_month_promotion} and \ref{thm:rolling_regret} to address two main questions, respectively. First, within a single-period loop, with what probability can the LLM identify a policy that outperforms the current champion while remaining safe relative to the baseline policy? Second, under future deployment, how far is the policy selected by the inference framework from the truly best-performing policy within the certifiably safe policy class?


\begin{theorem}[Single-period certified promotion]
\label{thm:single_month_promotion}
Fix any period $t$ and improvement threshold $\varepsilon>0$, and condition on any 
training realization in $\mathcal{E}_{\mathrm{tr}}$. Under Assumptions~\ref{ass:gap}, \ref{ass:adaptive_ci}, and \ref{ass:effective_coverage},
define 
\[
\mathcal{E}^{\mathrm{prom}}_t(\varepsilon)
:=
\left\{
\begin{aligned}
&\exists j\le J:\ \pi_{t,j}\ \text{is promoted,}
\\
&V_t^{\mathrm{dep}}(\pi_{t,j}\mid\pi_t^{\mathrm{ref}})\ge 0,
\quad V_t^{\mathrm{dep}}(\pi_{t,j}\mid \pi_{t,j-1}^{\mathrm{ch}})\ge\varepsilon
\end{aligned}
\right\},
\]
then 
$\mathbb{P}\!\left(\mathcal{E}^{\mathrm{prom}}_t(\varepsilon)\right)\ge1-\delta_t-(1-q_t)^J.$
\end{theorem}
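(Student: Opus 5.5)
The plan is a union bound over two failure modes: the replay confidence event $\mathcal{C}_t$ fails, or the LLM never generates a promotable candidate in the $J$ rounds. Define the hitting event
\[
\mathcal{H}_t:=\bigl\{\exists j\le J:\ \pi_{t,j}\in\mathcal{G}^{\mathrm{prom}}_{t,j}(\varepsilon)\bigr\}.
\]
I will show that $\mathcal{C}_t\cap\mathcal{H}_t\subseteq\mathcal{E}^{\mathrm{prom}}_t(\varepsilon)$. Then
\[
\mathbb{P}\bigl(\mathcal{E}^{\mathrm{prom}}_t(\varepsilon)\bigr)\ge 1-\mathbb{P}(\mathcal{C}_t^c)-\mathbb{P}(\mathcal{H}_t^c)\ge 1-\delta_t-\mathbb{P}(\mathcal{H}_t^c)
\]
by Assumption~\ref{ass:adaptive_ci}. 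All probabilities are conditional on the fixed training realization in $\mathcal{E}_{\mathrm{tr}}$.

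\textbf{Step 1 (the inclusion).} On $\mathcal{C}_t\cap\mathcal{H}_t$, pick any $j$ with $\pi_{t,j}\in\mathcal{G}^{\mathrm{prom}}_{t,j}(\varepsilon)$. Lemma~\ref{lem:gate_correctness} applies directly, since $\mathcal{C}_t$ holds and Assumption~\ref{ass:gap} supplies the IPM transfer. It gives that $\pi_{t,j}$ is promoted, with $V_t^{\mathrm{dep}}(\pi_{t,j}\mid\pi_t^{\mathrm{ref}})\ge0$ and $V_t^{\mathrm{dep}}(\pi_{t,j}\mid\pi_{t,j-1}^{\mathrm{ch}})\ge\varepsilon$. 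This is exactly the witness required by $\mathcal{E}^{\mathrm{prom}}_t(\varepsilon)$.

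One subtlety needs checking. The pairs $(\pi_{t,j},\pi_t^{\mathrm{ref}})$ and $(\pi_{t,j},\pi_{t,j-1}^{\mathrm{ch}})$ are evaluated at round $j$, so they belong to the realized evaluated set over which $\mathcal{C}_t$ is defined. The confidence event therefore covers them regardless of the adaptivity with which $\pi_{t,j}$ was generated.

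\textbf{Step 2 (bounding the miss probability).} Write $M_j:=\{\pi_{t,i}\notin\mathcal{G}^{\mathrm{prom}}_{t,i}(\varepsilon)\ \forall i\le j\}$, so that $\mathcal{H}_t^c=M_J$. The set $\mathcal{G}^{\mathrm{prom}}_{t,j}(\varepsilon)$ depends on the champion $\pi_{t,j-1}^{\mathrm{ch}}$ and on period-level quantities, all of which are $\mathcal{F}_{t,j-1}$-measurable. Hence $M_{j-1}\in\mathcal{F}_{t,j-1}$. Assumption~\ref{ass:effective_coverage} then gives
\[
\mathbb{P}\bigl(\pi_{t,j}\notin\mathcal{G}^{\mathrm{prom}}_{t,j}(\varepsilon)\mid\mathcal{F}_{t,j-1}\bigr)\le 1-q_t \quad\text{a.s.}
\]
Using the tower property,
\[
\mathbb{P}(M_j)=\mathbb{E}\Bigl[\mathbf{1}_{M_{j-1}}\,\mathbb{P}\bigl(\pi_{t,j}\notin\mathcal{G}^{\mathrm{prom}}_{t,j}(\varepsilon)\mid\mathcal{F}_{t,j-1}\bigr)\Bigr]\le(1-q_t)\,\mathbb{P}(M_{j-1}).
\]
Induction from $\mathbb{P}(M_0)=1$ yields $\mathbb{P}(\mathcal{H}_t^c)\le(1-q_t)^J$.

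\textbf{Main obstacle.} The key difficulty is that the rounds are not independent. The generation distribution $Q_{t,j}$, the target set $\mathcal{G}^{\mathrm{prom}}_{t,j}$, and the champion all evolve adaptively. The bound therefore rests on two facts: the coverage bound holds almost surely conditional on $\mathcal{F}_{t,j-1}$, and $\mathcal{C}_t$ holds simultaneously over all adaptively queried pairs.

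A second point needs care. The confidence event $\mathcal{C}_t$ and the hitting event are dependent, since both are driven by the same replay data. The union bound sidesteps this, so no independence between them is needed and the two failure probabilities simply add. Combining Steps 1 and 2 gives $\mathbb{P}(\mathcal{E}^{\mathrm{prom}}_t(\varepsilon))\ge 1-\delta_t-(1-q_t)^J$.
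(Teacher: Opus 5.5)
Your proposal is correct and follows the paper's proof essentially step for step. The paper's $\mathcal{S}_t$ and $F^{\mathrm{prom}}_{t,j}$ are your hitting event and $M_j$, the miss probability is bounded by the same tower-property recursion under Assumption~\ref{ass:effective_coverage}, and Lemma~\ref{lem:gate_correctness} yields $\mathcal{C}_t\cap\mathcal{S}_t\subseteq\mathcal{E}^{\mathrm{prom}}_t(\varepsilon)$ before the same union bound. The only thing to change is notation: $\mathcal{H}_t$ already denotes the period-level context in the paper, so rename your hitting event.
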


Theorem~\ref{thm:single_month_promotion} states that if replay certificates are statistically valid and transferable, and if each search round has probability at least $q_t$ of generating a candidate in the promotable policy class, then a certifiable promotion occurs within $J$ search rounds with probability at least $1-\delta_t-(1-q_t)^J$. The quantity $q_t$ summarizes the effective search difficulty of the period: larger $q_t$ or larger $J$ increases the chance of discovering a gate-passing improvement. Appendix~\ref{app:effective-coverage-sufficient} relates $q_t$ back to training concentration, contextual generation shift, and overlap with the training-good region. Moreover, since the initial champion is the strongest safety-certified baseline fallback policy, period-level promotion is benchmarked against a stronger initial comparator. Therefore, the improvement guaranteed by this theorem is nontrivial.

Theorem~\ref{thm:single_month_promotion} provides a single-period guarantee. 
It establishes certified improvement within a fixed period, but it does not control long-horizon performance relative to an oracle over the full certifiably safe policy class. 
We thus introduce Theorem \ref{thm:rolling_regret}
to connect the search procedure to the oracle-safe benchmark and bounds the gap to $V_t^{\mathrm{safe},*}$. 

Recall the period-level replay confidence event $\mathcal{C}_t$. We define
\[
\mathcal{D}_t := \{\exists j \le J : \pi_{t,j} \in \Pi_t^{\mathrm{safe},\nu_t}\}, 
\quad
G_T := \bigcap_{t=1}^T (\mathcal{C}_t \cap \mathcal{D}_t).
\]
Here, $\mathcal{D}_t$ denotes a discovery event that at least one candidate in period $t$ lies in the near-oracle safe policy class. This discovery event is sufficient but not necessary for finding such a policy: the initial active pool may already contain a near-oracle safe baseline or incumbent before the search loop begins. We focus on generated-policy discoveries because they are directly controlled by the effective coverage parameter $\bar q_t$. Therefore,  $G_T$  ensures that, over the entire horizon, both statistical validity and successful discovery of policies in the near-oracle safe policy class hold simultaneously. Under this event, we establish the following property:

\begin{theorem}[Rolling safe deployment and oracle-safe dynamic deployment gap]
\label{thm:rolling_regret}
Fix a horizon $T$, and condition on any training realization in $\mathcal{E}_{\mathrm{tr}}$, under Assumptions~\ref{ass:gap}, \ref{ass:adaptive_ci}, and \ref{ass:effective_coverage}, 
\[
\mathbb{P}(G_T)
\ge
1-\sum_{t=1}^T\left[\delta_t+(1-\bar q_t)^J\right].
\]
Moreover, on $G_T$, for any $\mathcal{F}_{t,J}$-measurable selector $\tilde{\pi}_t \in \Pi_t^{\mathrm{safe},\nu_t} \cap \mathcal{A}_t,$ let $d_t$ be the policy selected by the deployment rule in \eqref{eq:deployment_rule} at period $t$, we have
\[
V_t^{\mathrm{dep}}(d_t\mid\pi_t^{\mathrm{ref}})\ge 0,
\quad\text{for all } t=1,\ldots,T,
\quad
\text{and}
\quad
\sum_{t=1}^T
\Bigl(
V_t^{\mathrm{safe},*}-V_t^{\mathrm{dep}}(d_t\mid\pi_t^{\mathrm{ref}})
\Bigr)
\le
\sum_{t=1}^T\Gamma_t(\tilde\pi_t,d_t).
\]
where $\Gamma_t(\tilde\pi_t,d_t):=\nu_t+2\,\operatorname{rad}_t(\tilde\pi_t\mid\pi_t^{\mathrm{ref}})+2\,\operatorname{rad}_t(d_t\mid\pi_t^{\mathrm{ref}})+2\xi_t.$
\end{theorem}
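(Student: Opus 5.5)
The proof splits into a probability bound for $G_T$ and a deterministic argument on $G_T$.

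\textbf{Probability bound.} We take a union bound over the complements:
\[
\mathbb{P}(G_T^c)\le\sum_{t=1}^T\bigl[\mathbb{P}(\mathcal{C}_t^c)+\mathbb{P}(\mathcal{D}_t^c)\bigr].
\]
Assumption~\ref{ass:adaptive_ci} gives $\mathbb{P}(\mathcal{C}_t^c)\le\delta_t$.

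For $\mathcal{D}_t^c$ we chain conditional probabilities along the inner-loop filtration $\{\mathcal{F}_{t,j}\}$. The event $\{\pi_{t,j}\notin\Pi_t^{\mathrm{safe},\nu_t}\}$ is $\mathcal{F}_{t,j}$-measurable. By Assumption~\ref{ass:effective_coverage}, its conditional probability given $\mathcal{F}_{t,j-1}$ is at most $1-\bar q_t$ almost surely. Iterated conditioning, peeling off rounds $j=J,J-1,\dots,1$, then yields $\mathbb{P}(\mathcal{D}_t^c)\le(1-\bar q_t)^J$.

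One subtlety must be addressed: $\Pi_t^{\mathrm{safe},\nu_t}$ has to be fixed given $\mathcal{F}_{t,0}$. It depends only on $W_t$, $\pi_t^{\mathrm{ref}}$, $\xi_t$ and the radii, which are deterministic in $m_t,N_t,\delta_t$ under the Hoeffding form. Combining the two bounds gives the stated lower bound on $\mathbb{P}(G_T)$.

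\textbf{Safety on $G_T$.} Fix $t$ and work on $\mathcal{C}_t$. There are two cases.

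If $d_t=\pi_t^{\mathrm{ref}}$, then $V_t^{\mathrm{dep}}=0$.

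Otherwise $d_t\in\mathcal{A}_t^{\mathrm{feas}}$, so $\mathrm{LCB}_t(d_t\mid\pi_t^{\mathrm{ref}})\ge\xi_t$. On $\mathcal{C}_t$ this gives $V_t^{\mathrm{rep}}(d_t\mid\pi_t^{\mathrm{ref}})\ge\mathrm{LCB}_t\ge\xi_t$. Assumption~\ref{ass:gap} then gives $V_t^{\mathrm{dep}}\ge V_t^{\mathrm{rep}}-\xi_t\ge 0$.

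This requires the pair $(d_t,\pi_t^{\mathrm{ref}})$ to lie in $\mathcal{E}_t$. That holds because every pool member is replay-evaluated against the reference, either pre-loop or in its round.

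\textbf{Gap bound on $G_T$.} Let $\tilde\pi_t\in\Pi_t^{\mathrm{safe},\nu_t}\cap\mathcal{A}_t$. The first step is to show that $\tilde\pi_t$ is deployment-feasible, i.e.\ $\tilde\pi_t\in\mathcal{A}_t^{\mathrm{feas}}$.
\begin{itemize}
\item If $\tilde\pi_t=\pi_t^{\mathrm{ref}}$, its LCB is $0$. Feasibility then needs $\xi_t\le0$, or we treat the reference as the fallback handled by the convention. If $\mathcal{A}_t^{\mathrm{feas}}=\varnothing$, then $d_t=\pi_t^{\mathrm{ref}}$ and the bound is checked directly.
\item Otherwise $\Delta_t^{\mathrm{safe}}(\tilde\pi_t)\ge0$. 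On $\mathcal{C}_t$,
\[
\mathrm{LCB}_t(\tilde\pi_t\mid\pi_t^{\mathrm{ref}})\ge V_t^{\mathrm{rep}}-2\operatorname{rad}_t\ge\xi_t .
\]
\end{itemize}

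By the UCB maximization in \eqref{eq:deployment_rule},
\[
\mathrm{UCB}_t(d_t\mid\pi_t^{\mathrm{ref}})\ge\mathrm{UCB}_t(\tilde\pi_t\mid\pi_t^{\mathrm{ref}}).
\]
Now chain the bounds:
\[
V^{\mathrm{dep}}(d_t)\ge V^{\mathrm{rep}}(d_t)-\xi_t\ge\mathrm{UCB}(d_t)-2\operatorname{rad}(d_t)-\xi_t\ge\mathrm{UCB}(\tilde\pi_t)-2\operatorname{rad}(d_t)-\xi_t.
\]
Continuing,
\[
\mathrm{UCB}(\tilde\pi_t)\ge V^{\mathrm{rep}}(\tilde\pi_t)\ge V^{\mathrm{dep}}(\tilde\pi_t)-\xi_t\ge V_t^{\mathrm{safe},*}-\nu_t-\xi_t .
\]
This gives $V_t^{\mathrm{safe},*}-V_t^{\mathrm{dep}}(d_t)\le\nu_t+2\operatorname{rad}(d_t)+2\xi_t$, which lies within $\Gamma_t$. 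The slack $2\operatorname{rad}(\tilde\pi_t)$ is extra room, and it is useful when routing through LCB instead.

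Summing over $t$ finishes the proof.

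\textbf{Main obstacle.} The hardest part is the bookkeeping, not the inequalities:
\begin{itemize}
\item ensuring every compared pair, including $(\tilde\pi_t,\pi_t^{\mathrm{ref}})$ for adaptively chosen $\tilde\pi_t$, lies in $\mathcal{E}_t$, so that $\mathcal{C}_t$ applies;
\item handling the reference-policy edge case with $\xi_t>0$.
\end{itemize}
For the edge case, I would first try arguing directly: if $d_t=\pi_t^{\mathrm{ref}}$ because no other policy is feasible, then any non-reference $\tilde\pi_t$ would have been feasible, a contradiction. Hence $\tilde\pi_t=\pi_t^{\mathrm{ref}}$, and the gap is at most $\nu_t$.
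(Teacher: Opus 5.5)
Your argument follows the paper's proof step for step. The probability bound uses a union bound over $\mathcal C_t^c$ and $\mathcal D_t^c$, with the recursive conditioning bound $(1-\bar q_t)^J$. Safety of $d_t$ comes from the LCB screen plus IPM transfer. The gap bound comes from feasibility of a non-reference $\tilde\pi_t$ (via $\Delta_t^{\mathrm{safe}}\ge0$ on $\mathcal C_t$) combined with UCB-optimality of $d_t$. Your chain in fact gives the sharper per-period bound $\nu_t+2\operatorname{rad}_t(d_t\mid\pi_t^{\mathrm{ref}})+2\xi_t$; the paper's extra $2\operatorname{rad}_t(\tilde\pi_t\mid\pi_t^{\mathrm{ref}})$ is pure slack. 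Your remarks on the $\mathcal F_{t,0}$-measurability of $\Pi_t^{\mathrm{safe},\nu_t}$, and on $(d_t,\pi_t^{\mathrm{ref}})$ and $(\tilde\pi_t,\pi_t^{\mathrm{ref}})$ lying in $\mathcal E_t$, make explicit what the paper leaves implicit.

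The one case you do not close is $\tilde\pi_t=\pi_t^{\mathrm{ref}}$ with $\xi_t>0$ and $\mathcal A_t^{\mathrm{feas}}\neq\varnothing$. There $\pi_t^{\mathrm{ref}}\notin\mathcal A_t^{\mathrm{feas}}$, so you cannot invoke UCB maximization against it. Your contradiction argument only handles $\mathcal A_t^{\mathrm{feas}}=\varnothing$, and ``treat the reference as the fallback handled by the convention'' is not an argument.

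The paper closes this case in one line using your own safety step: $V_t^{\mathrm{dep}}(d_t\mid\pi_t^{\mathrm{ref}})\ge 0=V_t^{\mathrm{dep}}(\pi_t^{\mathrm{ref}}\mid\pi_t^{\mathrm{ref}})\ge V_t^{\mathrm{safe},*}-\nu_t$. So the gap is at most $\nu_t\le\Gamma_t(\tilde\pi_t,d_t)$ whether or not $\mathcal A_t^{\mathrm{feas}}$ is empty, and no split on emptiness is needed. Alternatively, in the nonempty case $\mathrm{UCB}_t(d_t\mid\pi_t^{\mathrm{ref}})\ge\mathrm{LCB}_t(d_t\mid\pi_t^{\mathrm{ref}})\ge\xi_t\ge 0=\mathrm{UCB}_t(\pi_t^{\mathrm{ref}}\mid\pi_t^{\mathrm{ref}})$. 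Your UCB chain therefore goes through for $\tilde\pi_t=\pi_t^{\mathrm{ref}}$ even though it is not in the feasible set. With either fix the proof is complete.
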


Theorem~\ref{thm:rolling_regret} shows that, under the stated assumptions, with high probability, the policy deployed in each period does not perform worse than the reference baseline. 
It also bounds the cumulative gap to the best policy in the certifiably safe policy class over the horizon by 
$\sum_{t=1}^T \Gamma_t\left(\tilde{\pi}_t, d_t\right)$. This bound consists of four terms, which correspond to (1) the tolerance $\nu_t$ of the near-oracle safe policy class, (2) the statistical error in the replay estimate of $\tilde{\pi}_t$, (3) the statistical uncertainty of the final deployed policy $d_t$, and (4) the distribution shift error from replay to deployment.

The tolerance $\nu_t$ measures the approximation level of the near-oracle safe policy class. The probability of discovering such a policy is controlled by $\bar q_t$ and $J$: stronger generation quality or a larger search budget increases the probability that $\mathcal{D}_t$ occurs. Conditional on this event, the realized deployment gap is bounded by the statistical radii, the transfer margin $\xi_t$, and the approximation tolerance $\nu_t$. Larger replay sample sizes reduce the statistical uncertainty terms. The final term arises from Assumption~\ref{ass:gap} and can be interpreted as a distributionally robust ambiguity parameter.

We summarize the main theoretical thread linking 
the three stages of the framework: training, inference, and deployment. Training improves generation quality by concentrating the model's policy-generation distribution on $\mathcal{G}^{\mathrm{tr}}$ (Theorem \ref{thm:train_concentration}). Inference converts this improved generation behavior into within-period certified improvements (Theorem \ref{thm:single_month_promotion}). Deployment guarantees rolling safety and dynamic performance relative to the oracle-safe policy class (Theorem \ref{thm:rolling_regret}).

\section{Operational Framework}
\label{sec:experiments}

In this section, we describe the experimental setup. 
Section~\ref{subsec:synthetic_data} presents the construction of the training data, and Section~\ref{sec:exp-setup} presents the training procedure and implementation settings.

\subsection{Synthetic Seed Data Generation}
\label{subsec:synthetic_data}

The design of the synthetic data directly affects the training performance of the LLM. 
We therefore construct data that closely resembles real-world scenarios. 
Our synthetic data construction consists of two steps: we first construct seed data and then apply a slicing strategy to expand the dataset.

\subsubsection{Seed Data Construction} In real-world inventory settings, demand data often include both demand information and features such as holidays, weather conditions, and promotional activities. These features are not limited to numerical values and may also include textual information. For example, unexpected events are often recorded in text format. 
we construct a sparse textual feature, denoted as \textit{note}, to simulate textual records under unexpected events.

Our data synthesis follows a structured design. Each dataset is constructed from timestamps and demand values, where demand is generated as a function of three components: numerical features, a textual feature denoted by \textit{note}, and unobserved latent variables. The numerical features and the textual field \texttt{note} are observed contextual variables available to the decision maker. Consistent with the preceding theoretical framework, these variables are part of the period-level information set \(\mathcal H_t\) and are therefore included in the algorithmic filtration \(\mathcal F_{t,0}\). In contrast, the latent variables used in data generation are not observed by the algorithm. They affect the realized demand process and the resulting costs, but they are not measurable with respect to the filtration available to the LLM.

We construct a family of \textbf{47 synthetic seed datasets}. These datasets cover a range of scenarios, including consumer products, industrial components, medical supplies, and regional power load. They are designed to simulate contextual demand patterns, distribution shifts, and rare disruptions observed in real-world settings. Note that in the synthetic data, a \textit{note} is recorded only on the first day of an event window, although the event may continue to affect demand throughout that window.

Specifically, each dataset is indexed by daily timestamps from January 1, 2024 to December 31, 2025. For dataset $j$ and calendar day $r$, we generate an observed context vector $x_{j,r}$, an optional textual note $n_{j,r}$, an unobserved latent state $z_{j,r}$, and a realized demand $y_{j,r}$.  The context vector is dataset-specific and may include calendar features, weather variables, and promotion intensity. We do not impose a unified feature schema across datasets. Instead, the set of covariates is tailored to the operational characteristics of each SKU or demand entity.

Demand is generated from a time-varying distribution, $y_{j,r} \sim \mathcal{D}_{j,r}\bigl(\theta_{j,r}\bigr)$, where both the distribution family $\mathcal{D}_{j,r}$ and its parameter vector $\theta_{j,r}$ may vary across datasets and over time. The conditional parameter vector is defined as $\theta_{j,r} = g_j\!\left(x_{j,r}, z_{j,r}, e_{j,r}, r\right)$, where $e_{j,r}$ is a latent event-state vector related to $n_{j,r}$. 

The variable $e_{j,r}$ captures the persistent effects of disruptive events, such as typhoons, strikes, or rumors. On ordinary days, $e_{j,r}$ is typically zero. However, once an event occurs and is recorded in $n_{j,r}$, $e_{j,r}$ may remain nonzero for several subsequent days, even if no additional note is observed. Therefore, $n_{j,r}$ serves as an imperfect observable proxy for the latent event-state process $e_{j,r}$, rather than a one-to-one representation.

We focus on count-valued demand and 
model the conditional mean $\mu_{j,r} := \mathbb{E}[\,y_{j,r}\mid x_{j,r}, z_{j,r}, e_{j,r}\,]$. The logarithm of the conditional mean is specified as 
\[
\log \mu_{j,r} = \beta_{j,0}(r)+\beta_j(r)^\top x_{j,r}+\gamma_j^\top z_{j,r}+\delta_j^\top e_{j,r}.
\]

Each term has a clear interpretation. The scalar $\beta_{j,0}(r)$ is a time-varying intercept that determines the baseline demand level for dataset $j$. The vector $\beta_j(r)$ contains the coefficients on observed structured features and captures how demand responds to covariates such as temperature, promotions, utilization, or admissions. The vector $\gamma_j$ represents the effect of the latent demand regime $z_{j,r}$, while the vector $\delta_j$ represents the effect of the latent event-state process $e_{j,r}$. Allowing $\beta_{j,0}(r)$ and $\beta_j(r)$ to vary over time enables the model to capture both baseline drift and coefficient drift.

This formulation captures two distinct forms of nonstationarity. First, the distribution of the observed covariates $x_{j,r}$ may evolve over time, leading to \emph{context drift}. Second, even conditional on $x_{j,r}$, the mapping from covariates and latent states to demand may change over time through $\beta_{j,0}(r)$ and $\beta_j(r)$, leading to \emph{concept drift}. In addition, the probabilities of different latent demand regimes and the effects of disruptive events 
are state-dependent and time-varying, which further implies that the conditional distribution of $y_{j,r}$ is nonstationary. 

Finally, to make the synthetic demand patterns more plausible, 
we use Gemini-3.1-Pro 
to generate contextual features and disruptive events, and to calibrate the signs and approximate magnitudes of their effects on demand. 
This improves the realism and credibility of the synthetic data.

Overall, the 47 seed datasets provide a flexible testbed for evaluating  inventory control under contextual demand, persistent effects of  disruptive events, and multiple forms of nonstationarity. Appendix~\ref{app:synthetic_data_generation} provides additional implementation details, including the mechanisms for event effects, within-archetype heterogeneity, and the demand distribution families.

\subsubsection{Slicing Strategy} We expand the dataset by slicing the 47 seed datasets. Each seed dataset spans 731 daily observations from January 1, 2024 to December 31, 2025. We apply a randomized slicing procedure. For each seed dataset $j$, we extract $N_{\mathrm{slice}}=10$ temporal slices. Each slice consists of a \textbf{100-day historical window} followed by a \textbf{30-day evaluation window}, resulting in a total length of 130 consecutive days.

To avoid excessive redundancy across slices, we randomly sample $N_{\mathrm{slice}}=10$ positions subject to a minimum separation constraint of 15 days between any two selected endpoints. This constraint reduces the number of near-duplicate slices (i.e., slices with substantial overlap in their underlying time windows), while still allowing partial overlap among the 130-day windows, which helps preserve temporal diversity.

This procedure yields $47 \times N_{\mathrm{slice}} = \mathbf{470}$ problem workspaces in total, spanning 15 industry domains. Figure~\ref{fig:slicing-stats} summarizes the distributional characteristics of the resulting 470~workspaces. The dataset exhibits substantial heterogeneity in both demand scale and variability: mean daily demand ranges from below~1 (intermittent industrial spare parts) to nearly~1{,}000 (power-grid loads), with a median of~57; the coefficient of variation ranges from~0.07 to~2.72, with a median of~0.35. This diversity allows the trained agent to encounter a wide spectrum of inventory environments during learning.

\begin{figure}[!htbp]
\centering
\includegraphics[width=\textwidth]{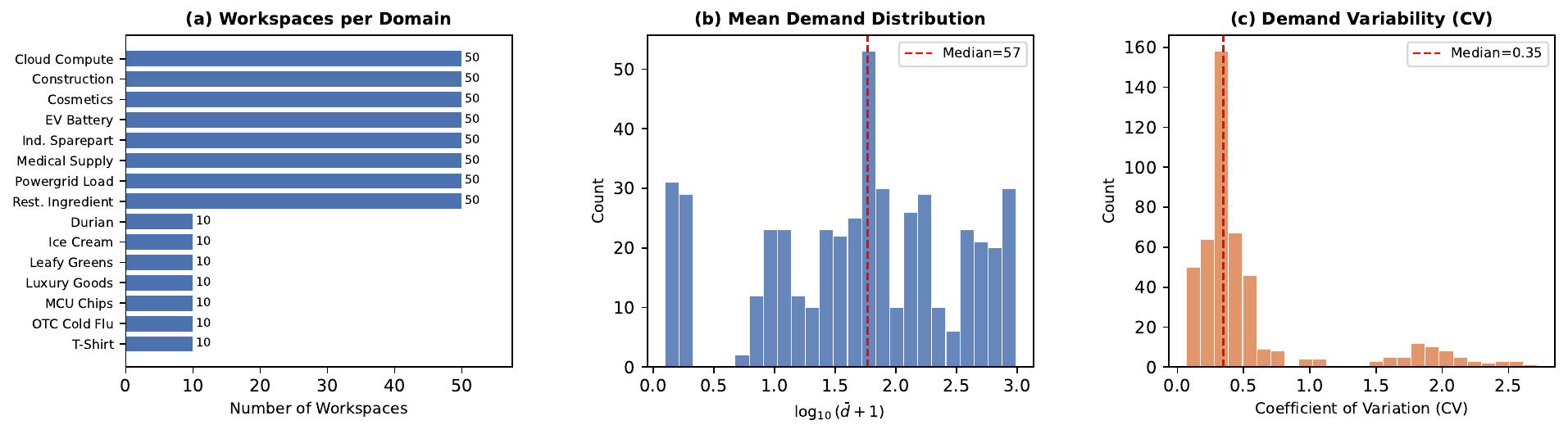}
\caption{Summary statistics of the 470 synthetic workspaces obtained by slicing 47 seed datasets. (a)~Number of workspaces per industry domain.  (b)~Distribution of mean daily demand across workspaces (log scale); the red dashed line marks the median.  (c)~Distribution of the demand coefficient of variation (CV); the red dashed line marks the median.}
\label{fig:slicing-stats}
\end{figure}

\subsection{Training Procedure and Experimental Setup}
\label{sec:exp-setup}

In this section, we describe how training instances are constructed, the agent architecture, the reinforcement learning configuration, and the evaluation procedure.

\paragraph{Workspace construction.}
Each problem instance is packaged as a self-contained \emph{workspace}, which provides all the information required for the agent to analyze a demand management problem and generate an inventory policy. Specifically, each workspace contains four components.

\begin{enumerate}[leftmargin=2em, itemsep=2pt, topsep=4pt]
    \item \textbf{Problem description} 
          (\texttt{problem\_description.md}): a natural-language 
          specification of the inventory setting, including the SKU 
          (stock-keeping unit) type, cost parameters (unit holding cost $h$, unit penalty cost $p$), 
          and replenishment lead time $L$.
    \item \textbf{Historical demand data} 
          (\texttt{data/historical\_sequence.json}): 100 days of 
          realized demand observations together with the corresponding 
          exogenous feature vectors. The agent uses this sequence to 
          learn demand patterns before designing a policy.
    \item \textbf{Evaluation script} (\texttt{evaluation.py}): a 
          simulator that evaluates any policy 
          generated by the agent by computing its average cost (holding cost plus penalty cost) over the 
          historical sequence. The hyperparameters of each submitted 
          policy are further tuned via Bayesian optimization (Optuna) 
          before reporting the cost, ensuring a well-tuned evaluation  
          for each candidate solution.
    \item \textbf{Baseline policies} (\texttt{baseline\_policies/}): 
          five standard parametric policies, including base stock, capped base 
          stock, constant order, newsvendor, and $(s,S)$ policies. Each policy is 
          pre-tuned via Bayesian optimization on the same historical 
          sequence. These baselines represent the performance level of 
          classical inventory methods and serve as benchmarks that the 
          agent aims to surpass.
\end{enumerate}

\paragraph{Agent architecture.}
We adopt a design aligned with mainstream coding agents such as Claude Code \citep{claudecode2025}, where an agent scaffold enables the LLM to autonomously explore the workspace. The agent interacts with the workspace exclusively through a general-purpose \texttt{bash} shell, which allows it to read files, execute Python scripts, and write new policy files. Each executed \texttt{bash} command is counted as one tool call. The design is intentionally minimal: the agent has no built-in numerical solvers, no persistent memory module, and no access to external knowledge sources during a rollout. All reasoning is grounded solely in the workspace contents.

Within a budget of $J = 60$ tool-call iterations, the agent follows a structured \emph{explore--develop--submit} workflow:
\begin{enumerate}[leftmargin=2em, itemsep=1pt, topsep=2pt]
    \item \textbf{Explore}: issue arbitrary \texttt{bash} commands to read the problem description and analyze the historical demand data or files in order to understand the demand environment.
    \item \textbf{Benchmark}: invoke the evaluation script on each of the five baseline policies and record their costs to establish the performance threshold.
    \item \textbf{Develop}: write Python policy files for candidate policies, evaluate them using the simulator, and iteratively refine the design based on cost feedback.
    \item \textbf{Submit}: write the best-performing policy to \texttt{final\_submit.py}, which signals completion. The rollout terminates either upon submission or when the iteration budget is exhausted.
\end{enumerate}

\paragraph{High-quality trajectory generation.}
Before reinforcement learning, we first use an advanced large model DeepSeek-R1-0528 under the above architecture to construct high-quality demonstration demand trajectories for training. Each trajectory records how a capable model analyzes and solves a given workspace. It consists of the full sequence of tool calls, including file reads, script executions, and policy edits, together with the corresponding model outputs that lead to a submitted policy.

A trajectory is labeled as \emph{improved} if the final submitted policy achieves a strictly lower average cost than the best of the five pre-tuned baseline policies. Among the total 470 workspaces, we  select 10 improved workspaces together with their corresponding trajectories as the training data for reinforcement learning.

\paragraph{Reinforcement learning training.}
We apply Group Relative Policy Optimization 
\citep[GRPO;][]{shao2024deepseekmath} to GLM-4.7-Flash (30B, MoE), a compact language model suitable for agentic tasks under computational constraints. GRPO is a policy-gradient method that estimates advantages by comparing outcomes within a group of rollouts generated from the same prompt, without requiring a separate value function. The training is implemented using the SlimeRL framework \citep{slime_github}.

The reward is binary, defined as $r \in \{0,1\}$, where $r=1$ if and only if the submitted policy improves upon the best baseline, which is consistent with the theoretical analysis in Section~\ref{sec:basic_model_training}. Detailed training configurations are provided in Appendix~\ref{app:rl_config}. We refer to the trained model implemented by the best checkpoint\footnote{A checkpoint is a saved version of a model's parameters at a particular point during training, which can later be used for evaluation, continued training, or deployment.} as \textit{InvEvolve}.

Recall that each workspace contains a 130-day demand sequence. The first 100 days form the \emph{training horizon}, during which the agent observes historical data and develops its inventory policy. The final submitted policy is then evaluated on the subsequent \emph{30-day test horizon}, which is not observed by the agent. This temporal split ensures that the reported costs reflect out-of-sample generalization rather than in-sample overfitting.

\section{Experiments}
\label{sec:experiment}

In this section, we evaluate the trained model on both synthetic and real-world datasets through comprehensive experiments. In addition to the inference framework in Section~\ref{sec:basic_model_training}, we use a practical calibration of the replay--deployment discrepancy budget $\xi_t$ (Appendix~\ref{app:xi-estimation}) and a practical small-sample replay-radius adjustment for short replay windows (Appendix~\ref{app:small-sample-radius}). All other inference and deployment procedures follow the framework illustrated in Figure~\ref{fig:inference_flowchart}.

For comparison, we use five standard baseline policies, including the base-stock policy, capped base-stock policy, constant-order policy, newsvendor policy, and $(s,S)$ policy. These policies play a dual role in our framework. First, they serve as classical benchmark policies for comparison. Second, they provide the starting point for policy evolution, from which InvEvolve iteratively improves the policies. Consequently, the policies generated by InvEvolve remain comparable to these baselines in terms of structure and interpretation. They are still explicit and executable white-box ordering rules, although their functional forms may differ.  We present several representative examples in Appendix~\ref{app:structural-evolved-policies}.

In addition, we include two representative black-box models, denoted as A3C \citep{gijsbrechts2022can} and E2E \citep{qi2023practicale2einventory}. The former represents a general-purpose reinforcement learning approach, while the latter corresponds to a practical end-to-end deep learning method. 
Finally, we examine whether InvEvolve can extend the performance frontier of classical inventory policies.

\subsection{Experiment~1: Synthetic Inventory Benchmark}
\label{sec:exp-synthetic}

\paragraph{Dataset.}
We randomly select 30 instances from 10 industry domains from the remaining 460 synthetic workspaces after excluding the 10 training workspaces, and use them to form 30 test workspaces. These test instances are distinct from the 10 instances used for training.
Crucially, no SKU appears in both training and test, ensuring a strict out-of-distribution (OOD) evaluation at the product level.

In addition, the test set includes industry-specific exogenous features (e.g., maintenance indices, weather variables, and promotional indicators) and stochastic shock events (e.g., supplier disruptions and demand spikes). 
These features and shocks vary across industry domains, creating heterogeneous feature structures.
Moreover, 5 of the 10 test industry domains are entirely absent from the training data, providing a stringent test of cross-domain generalization. Across all 30 test workspaces in the test set, the lead time is set at $L = 5$, while the holding cost and lost-sales cost vary across SKUs, with a fixed ratio of $p/h = 10$.

\paragraph{Results.}
Table~\ref{tab:synthetic-results} reports the performance of the base GLM-4.7-Flash and \textit{InvEvolve} across 30 workspaces at the domain level.
We evaluate each model using two metrics: the \emph{success rate}, 
defined as 
the proportion of instances in which each model outperforms the best baseline policy (reported under the ``Base GLM'' and ``InvEvolve'' columns), 
and the average relative cost reduction of the trained model compared to the best baseline within each domain (defined as \((C_{\mathrm{baseline}}-C_{\mathrm{method}})/C_{\mathrm{baseline}}\times100\%\) and reported under the ``Avg.\ Cost $\downarrow$'' columns)

The evaluation protocol follows the same procedure as in previous sections. The LLM operates over the past 100-day window, where it iteratively 
generates, evaluates, filters, and selects a final policy. The selected policy is then compared with the five baseline policies, whose hyperparameters are tuned over the same 100-day window under an identical Bayesian optimization budget. All policies are subsequently evaluated on the future 30-day window, where the final costs are computed.

\begin{table}[!htbp]
\centering
\caption{Success rate and average cost reduction on the synthetic inventory benchmark (30 OOD test workspaces). Domains marked with $\dagger$ are entirely absent from the training data. Avg.\ Cost $\downarrow$ is the mean relative cost reduction over successful cases compared to the best classical baseline.}
\label{tab:synthetic-results}
\small
\begin{tabular}{lccccc}
\toprule
\textbf{Industry Domain} & \textbf{\# Cases} & \textbf{Base GLM} & \textbf{InvEvolve} & \textbf{Avg.\ Cost $\downarrow$} & \textbf{Domain OOD} \\
\midrule
Cloud Compute & 6 & 2/6\;\;(33\%) & 6/6\;\;(100\%) & 7.6\% & $\dagger$ \\
Power Grid Load & 5 & 2/5\;\;(40\%) & 3/5\;\;(60\%) & 20.6\% & $\dagger$ \\
Construction Material & 4 & 4/4\;(100\%) & 4/4\;\;(100\%) & 28.7\% & $\dagger$ \\
EV Battery Supply & 4 & 2/4\;\;(50\%) & 4/4\;\;(100\%) & 6.8\% & \\
Medical Supply & 3 & 1/3\;\;(33\%) & 3/3\;\;(100\%) & 15.4\% & \\
Restaurant Ingredient & 3 & 1/3\;\;(33\%) & 2/3\;\;(67\%) & 20.0\% & \\
Cosmetics & 2 & 1/2\;\;(50\%) & 1/2\;\;(50\%) & 31.2\% & \\
Leafy Greens & 1 & 0/1\;\;(0\%) & 1/1\;\;(100\%) & 15.2\% & $\dagger$ \\
T-Shirt & 1 & 0/1\;\;(0\%) & 1/1\;\;(100\%) & 14.5\% & $\dagger$ \\
Industrial Spare Parts & 1 & 0/1\;\;(0\%) & 0/1\;\;(0\%) & --- & \\
\midrule
\textbf{Overall} & \textbf{30} & \textbf{13/30\;(43\%)} & \textbf{25/30\;(83\%)} & \textbf{15.9\%} & \\
\quad Domain-OOD only & 17 & 8/17\;\;(47\%) & 15/17\;(88\%) & 16.8\% & \\
\bottomrule
\end{tabular}
\end{table}

The base model achieves an overall success rate of 43\%, while InvEvolve attains 83\%, corresponding to a \textbf{93\% relative improvement}. Notably, on the 5 OOD industry domains, 
InvEvolve achieves 15/17 (88\%), which indicates that the learned generation quality generalizes effectively to previously unseen industry domains.

\paragraph{Further Analysis: Training Dynamics.}
We further examine how the performance of the LLM evolves with cumulative training compute on the 30 OOD test workspaces. Figure~\ref{fig:checkpoint} plots the success rate against training compute. Starting from 43\% for the base model, the success rate remains in the 43\%--47\% range at lower compute budgets, rises to 62\% at \(2.3\times10^{17}\) FLOPs, peaks at 83\% at \(2.9\times 10^{17}\) FLOPs, and then declines to 67\% at \(4.1\times 10^{17}\) FLOPs, suggesting overfitting beyond the best checkpoint. The rise toward the best checkpoint is qualitatively consistent with the training-to-inference interface suggested by Theorem~\ref{thm:train_concentration}: The proxy distribution over canonical policies concentrates on the training-stage good region $\mathcal{G}^{\operatorname{tr}}$ at an exponential rate in $K$, while observable performance gains emerge only after the probability mass ratio $\rho_K$ exceeds a critical threshold.

\begin{figure}[!htbp]
    \centering
    \includegraphics[width=0.55\textwidth]{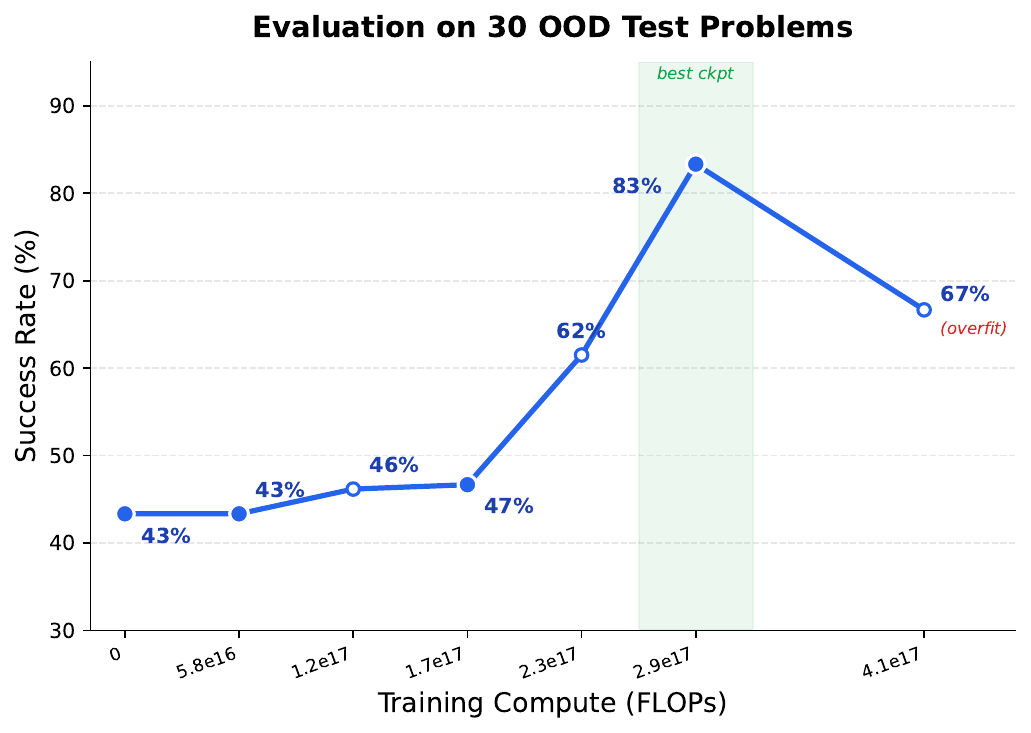}
    \caption{Success rate vs. GRPO training compute on 30 OOD test workspaces.}
    \label{fig:checkpoint}
\end{figure}

\subsection{Experiment~2: Real-World Retail Data (Dunnhumby Complete Journey)}
\label{sec:exp-cj}

In this section, we evaluate the performance of our framework on real-world data. To ensure that deep learning policies can perform effectively with sufficiently rich training data, we extend the lookback window in the real-data setting and primarily use the previous 365 days of data to determine the inventory ordering policy for the subsequent 30 days. Across all test instances, we set the lead time to 5 and the shortage-to-holding cost ratio to $p/h = 10$.

\paragraph{Dataset.}
We construct a second testbed from the Dunnhumby Complete Journey dataset \citep{dunnhumby2020completejourney}, which contains 2.6 million point-of-sale transactions covering 92{,}000 products and 582 retail stores over the 2016--2018 period. The dataset records product-level transactions, coupon usage, and calendar indicators such as weekends and holidays, which makes it well aligned with everyday retail inventory settings. We aggregate transactions into daily demand at the product level and then randomly sample 30 time windows of length $365+30$ days from product sales histories to construct 30 workspaces, which we refer to as the \textit{CJ 365+30 benchmark} throughout the rest of the paper. Each workspace includes eight exogenous features: \texttt{name}, \texttt{categories}, \texttt{is\_weekend}, \texttt{is\_holiday}, \texttt{discount\_rate}, \texttt{is\_on\_display}, \texttt{is\_in\_mailer}, and \texttt{day\_of\_week}. Notably, \texttt{name} and \texttt{categories} are textual features. LLM-based methods can use these fields as auxiliary signals, for example to infer whether an item is perishable or exhibits seasonal demand patterns, whereas standard deep learning methods generally cannot exploit such information as directly.

This test set differs from the synthetic benchmark in two main respects. First, it has a different feature structure, relying on retail promotion and calendar variables rather than the synthetic covariates used in the benchmark. Second, it reflects a different data-generating environment: its demand scale and variability arise from real consumer behavior, and its product categories and data source are unrelated to those in the training data. 

Because the CJ dataset records point-of-sale transactions rather than true customer demand, observed sales may be censored under a lost-sales setting. We therefore construct a real-data benchmark in two steps. First, we generate rolling product-level temporal slices to reduce the share of zero-sales observations and to remove extremely sparse or irregular retail records. Second, on days with potential stockout signals, we treat observed sales as right-censored demand and recover latent demand using a censored count-data model \citep{trapero2024demand,huh2011adaptive}. Appendix~\ref{app:cj-lost-sales} presents the full construction procedure and reports summary statistics for the 30 selected workspaces.

\paragraph{Results.}

Table~\ref{tab:cj-selected30-by-dept} reports the success rates on the CJ 365+30 benchmark across 30 workspaces, where success means outperforming the strongest classical baseline for the same workspace. Overall, \textit{InvEvolve} achieves the highest success rate, succeeding in 20 out of 30 cases (67\%). This success rate is substantially higher than that of A3C, which succeeds in 11 out of 30 cases (37\%), and also exceeds that of E2E, which succeeds in 16 out of 30 cases (53\%). Conditional on the successful cases, \textit{InvEvolve} reduces average cost by 9.2\% relative to the best classical baseline.

\begin{table}[!htbp]
  \centering
  \caption{Success rates on the CJ 365+30 benchmark across 30 workspaces. Each entry reports the number and percentage of cases in which the learner outperforms the strongest classical baseline for the same workspace. For InvEvolve, we also report the mean relative cost reduction over successful cases compared to the best classical baseline.}
  \label{tab:cj-selected30-by-dept}
  \small
  \begin{tabular}{lccccc}
    \toprule
    \textbf{Department} & \textbf{\# Cases} & \textbf{InvEvolve} & \textbf{Avg.\ Cost $\downarrow$} & \textbf{A3C} & \textbf{E2E} \\
    \midrule
    Deli               & 1  & 1/1\;(100\%)    & 6.5\%  & 0/1\;\;(0\%)    & 0/1\;\;(0\%) \\
    Drug \& GM         & 2  & 0/2\;\;(0\%)    & --     & 2/2\;(100\%)    & 0/2\;\;(0\%) \\
    Grocery            & 12 & 9/12\;(75\%)    & 9.4\%  & 4/12\;(33\%)    & 9/12\;(75\%) \\
    Meat               & 1  & 0/1\;\;(0\%)    & --     & 0/1\;\;(0\%)    & 1/1\;(100\%) \\
    Meat (Packaged)    & 1  & 1/1\;(100\%)    & 11.7\% & 0/1\;\;(0\%)    & 1/1\;(100\%) \\
    Produce            & 13 & 9/13\;(69\%)    & 8.9\%  & 5/13\;(38\%)    & 5/13\;(38\%) \\
    \midrule
    \textbf{Overall} & \textbf{30} & \textbf{20/30\;(67\%)} & \textbf{9.2\%} & \textbf{11/30\;(37\%)} & \textbf{16/30\;(53\%)} \\
    \bottomrule
  \end{tabular}
\end{table}

Beyond the aggregate success rates, Table~\ref{tab:cj-selected30-by-dept} suggests that the three methods have different strengths across inventory categories. \textit{InvEvolve} performs particularly well in \textit{Grocery} and \textit{Produce}, which are categories where demand often exhibits repeated calendar, promotion, display, and seasonal patterns. A possible explanation is that these signals can be effectively translated into compact white-box ordering rules, such as rule-based adjustments for promotional exposure, weekday effects, or short-term demand shifts. This is especially useful for perishable or semi-perishable products, where the policy needs to react to demand variation while remaining simple and robust.

By contrast, A3C appears stronger in \textit{Drug \& GM}, a category that is typically less perishable and may involve slower-moving or more intermittent demand. One possible reason is that reinforcement learning can benefit from sequential state-dependent adjustments when the inventory dynamics are less dominated by short shelf-life constraints. E2E is more competitive in categories such as \textit{Meat} and \textit{Meat (Packaged)}, where nonlinear demand patterns or SKU-level heterogeneity may be harder to summarize with a small number of explicit rules. However, its black-box flexibility may also make performance less stable across heterogeneous workspaces.

Overall, these results suggest that \textit{InvEvolve} is most effective when the key demand drivers are structured and interpretable enough to be encoded into executable inventory rules. This makes it especially suitable for retail categories with recurring calendar and promotion effects, while black-box learners may have advantages in categories where the relevant demand response is more nonlinear or less easily captured by simple policy forms.

\subsection{Experiment 3: Classical Inventory Benchmark and Policy Discovery}
\label{sec:exp-cbs}

In the previous experiments, we show that the policies synthesized by InvEvolve frequently outperform the best of the five standard baselines. However, the policies generated by InvEvolve may exploit additional feature variables that are not available to classical inventory policies. 

We next examine whether InvEvolve can discover more effective and structurally novel policies in a classical setting: a single-sourcing lost-sales inventory system with lead time. This setting provides a clean basis for comparison, since all methods operate under the same information structure. More importantly, it allows us to assess whether InvEvolve can contribute to classical inventory policy design by identifying new policy structures. 

\subsubsection{Preliminaries and settings.}

We consider a single-sourcing lost-sales inventory system with lead time $L$ under a stationary demand distribution. We now focus on the inventory dynamics at the microscopic time scale. Time is indexed by $n=1,2,\dots$, as introduced at the beginning of Section~\ref{subsec:basic_model}.

We follow the comparison framework and methodology in \citep{xin,zipkin2008oldnew}, while incorporating a broader set of demand distributions. Specifically, we construct a controlled evaluation testbed with six demand distributions that span a range of variability patterns, as summarized in Table~\ref{tab:cbs-dists}.

\begin{table}[!htbp]
\centering
\caption{Demand distributions used in the CBS comparison experiment. All distributions have mean $\approx 5$ and produce non-negative integer demand over 2{,}000 periods.}
\label{tab:cbs-dists}
\small
\begin{tabular}{llc}
\toprule
\textbf{Distribution} & \textbf{Description} & \textbf{CV} \\
\midrule
Geometric$(p{=}1/6)$ & Memoryless, heavy-tailed discrete & ${\sim}1.0$ \\
Poisson$(\lambda{=}5)$ & Classical count data, equi-dispersed & ${\sim}0.45$ \\
Binomial$(10, 0.5)$ & Bounded, low variance & ${\sim}0.32$ \\
Gamma$(k{=}2,\, \mu{=}5)$ & Moderate tail, continuous-rounded & ${\sim}0.71$ \\
HalfNormal$(\mu{\approx}5)$ & Folded normal, moderate variance & ${\sim}0.76$ \\
Uniform$[0,10]$ & Bounded, equi-dispersed & ${\sim}0.58$ \\
\bottomrule
\end{tabular}
\end{table}

Following the experimental design in \cite{xin}, we consider six demand distributions. For each distribution, we evaluate all combinations of four lead times, $L \in \{1,2,3,4\}$, and four penalty-to-holding-cost ratios, $p/h \in \{4,9,19,39\}$, with $h=1$ fixed. This setup yields a total of $6 \times 4 \times 4 = 96$ scenarios. In each scenario, every policy receives the same budget of 50 Bayesian optimization trials for hyperparameter tuning, using a fixed random seed.

We emphasize the capped base-stock (CBS) policy, which shows consistently strong empirical performance. Table~\ref{tab:baseline-dominance} reports the number of scenarios in which each policy achieves the lowest cost. 

\begin{table}[!htbp]
\centering
\caption{Number of scenarios (out of 16) in which each baseline achieves the lowest cost, across six demand distributions (\(L \in \{1,2,3,4\}\), \(p/h \in \{4,9,19,39\}\), \(h=1\), \(N_{\mathrm{sim}}=2{,}000\), and 50 Optuna TPE trials per policy).}
\label{tab:baseline-dominance}
\small
\begin{tabular}{lccccc}
\toprule
\textbf{Distribution} & \textbf{Base stock} & \textbf{Capped base stock} & \textbf{Constant order} & \textbf{Newsvendor} & \(\boldsymbol{(s,S)}\) \\
\midrule
Geometric\((p=1/6)\)          & 0 & \textbf{15} & 1 & 0 & 0 \\
Poisson\((\lambda=5)\)        & 2 & \textbf{14} & 0 & 0 & 0 \\
Binomial\((10, 0.5)\)         & 4 & \textbf{12} & 0 & 0 & 0 \\
Gamma\((k=2,\mu=5)\)          & 0 & \textbf{15} & 0 & 1 & 0 \\
HalfNormal\((\mu \approx 5)\) & 1 & \textbf{15} & 0 & 0 & 0 \\
Uniform\([0,10]\)             & 1 & \textbf{15} & 0 & 0 & 0 \\
\midrule
\textbf{Total (96)}           & 8 & \textbf{86} & 1 & 1 & 0 \\
\bottomrule
\end{tabular}
\end{table}

The results show that CBS is the most effective baseline, achieving the lowest cost in 86 out of 96 scenarios (89.6\%). This finding is consistent with the theoretical insights in \citep{xin,zipkin2008oldnew}. The remaining wins are primarily attributed to the base-stock policy (8/96), which coincides with CBS when the cap is non-binding. In contrast, constant order, newsvendor, and $(s,S)$ policies are rarely optimal.

These observations motivate the use of CBS as the primary benchmark in the subsequent analysis. We will further investigate whether the proposed InvEvolve framework can discover policies that are both more effective and more robust than CBS.

\subsubsection{Policies Exploration}

Our exploration procedure is designed as follows. We begin with the five baseline policies as the initial policy set. The LLM then operates under the same reasoning framework as before, with a tool-call budget of $J=60$ in each round. Different from Experiments~1 and~2, we further introduce an iterative refinement mechanism. After one round of reasoning is completed, the best policy selected by InvEvolve is added to the current baseline set, and the LLM starts the next round of reasoning using this expanded candidate pool. We repeat this process for 10 rounds. In addition, we conduct two independent runs of this 10-round iterative procedure, which finally produce two LLM-discovered policies.

The motivation for this design is that CBS performs extremely well under stationary demand distributions, which makes it difficult for the LLM to discover a better policy in a single round of reasoning. A direct increase in the tool-call budget, such as setting $J=200$, would lead to excessively long contexts and substantially higher hardware costs. In contrast, the iterative refinement scheme offers a practical alternative. It allows the model to accumulate useful policy structures across rounds while keeping the context length manageable. This idea can also be applied to other experiments and practical use to further improve performance.

Recall that $\mathrm{IP}_n$ denotes the inventory position at time $n$. Define the inventory gap as $\Delta_n := \max(0,\, S - \mathrm{IP}_n)$, which measures the shortfall from the target level. The capped base-stock (CBS) policy  orders $q_n = \min\!\bigl(\Delta_n,\; r\bigr)$, where $S$ is the base-stock level and $r$ is the order cap. Our framework generates the following two policies:

\begin{enumerate}
    \item \textbf{Tilted-CBS (Elastic Capped Base-Stock).} The Tilted-CBS policy replaces the fixed cap with a state-dependent cap:
$$q_n = \min\!\bigl(\Delta_n,\; r_{\mathrm{base}} + \alpha\,\Delta_n\bigr),$$
where $r_{\mathrm{base}} \ge 0$ is a base cap and $\alpha \in [0,1]$ controls the elasticity. When $\alpha=0$, the policy reduces to CBS. When $\alpha>0$, the cap increases with the gap.

    \item \textbf{Tilted-PIC (Proportional Inventory Controller with Elastic Cap).} The Tilted-PIC policy introduces a proportional gain $K_p \in (0,1.5]$ and $\alpha \in [0,1]$:
$$
q_n = \max\!\Bigl(0,\;\min\!\bigl(\lfloor K_p \cdot \Delta_n \rceil,\; r_{\mathrm{base}} + \alpha\,\Delta_n\bigr)\Bigr).
$$

Here $\lfloor \cdot \rceil$ denotes rounding to the nearest integer. When $K_p = 1$, Tilted-PIC reduces to Tilted-CBS. When $K_p = 1$ and $\alpha = 0$, Tilted-PIC further reduces to CBS.
\end{enumerate}

We find that the two policies generated by the agent are both structural extensions of CBS. A plausible explanation is that CBS performs exceptionally well under geometric demand, making it difficult to identify fundamentally different policies that can outperform it. The results in \citet{xin} support this observation, showing that under geometric demand, CBS is already very close to optimal, with a minimum optimality gap of approximately $0.3\%$ and a maximum of about $1.4\%$.

We further evaluate these two policies against CBS under alternative demand distributions. Specifically, a policy is classified as a \emph{win} (W) if its optimized cost is more than $2\%$ lower than that of CBS, a \emph{loss} (L) if it is more than $2\%$ higher, and a \emph{tie} (T) otherwise.

\subsubsection{Results.} Table~\ref{tab:cbs-aggregate} reports the aggregate comparison results. Tilted-PIC achieves 41 wins with only 8 losses, corresponding to a beat-or-tie rate of 91.7\%. In comparison, Tilted-CBS achieves 18 wins and 4 losses, with a higher beat-or-tie rate of 95.8\%. The two policies exhibit complementary performance profiles. Tilted-CBS is more conservative, with fewer losses, while Tilted-PIC identifies substantially more improvements at the cost of a slightly higher number of losses.

\begin{table}[!htbp]
\centering
\caption{Aggregate comparison with CBS across 96 scenarios (6 distributions $\times$ 16 $(L, p/h)$ pairs).  W/T/L counts use a $\pm 2\%$ threshold relative to CBS cost.}
\label{tab:cbs-aggregate}
\small
\begin{tabular}{lcccccc}
\toprule
\textbf{Policy} & \textbf{W} & \textbf{T} & \textbf{L} & \textbf{Beat-or-Tie} & \textbf{Mean \%} & \textbf{Worst \%} \\
\midrule
Tilted-CBS & 18 & 74 & 4 & 95.8\% & $-$0.60\% & $+$5.4\% \\
\textbf{Tilted-PIC} & \textbf{41} & \textbf{47} & \textbf{8} & \textbf{91.7\%} & $\boldsymbol{-}$\textbf{1.52\%} & $+$4.2\% \\
\bottomrule
\end{tabular}
\end{table}

Table~\ref{tab:cbs-perdist} reports the performance of Tilted-PIC across different demand distributions. Three distributions, namely Geometric, Binomial, and Gamma, exhibit \emph{zero losses}, meaning that Tilted-PIC is never worse than CBS across all 16 $(L, p/h)$ scenarios.
The remaining losses are concentrated in Poisson (3), Uniform (4), and HalfNormal (1). These cases mainly occur in the short lead-time and high penalty regime, where the aggressive full-gap ordering behavior of CBS is close to optimal.
\begin{table}[!htbp]
\centering
\caption{Per-distribution performance of Tilted-PIC vs CBS.  Bold rows indicate zero-loss distributions.}
\label{tab:cbs-perdist}
\small
\begin{tabular}{lcccccc}
\toprule
\textbf{Distribution} & \textbf{W} & \textbf{T} & \textbf{L} & \textbf{Mean \%} & \textbf{Best \%} & \textbf{Worst \%} \\
\midrule
\textbf{Geometric}$(p{=}1/6)$ & \textbf{8} & \textbf{8} & \textbf{0} & $\boldsymbol{-}$\textbf{2.3\%} & $-$7.8\% & $+$0.2\% \\
Poisson$(\lambda{=}5)$ & 6 & 7 & 3 & $-$0.8\% & $-$7.2\% & $+$4.2\% \\
\textbf{Binomial}$(n{=}10)$ & \textbf{8} & \textbf{8} & \textbf{0} & $\boldsymbol{-}$\textbf{1.7\%} & $-$8.3\% & $+$2.0\% \\
\textbf{Gamma}$(k{=}2)$ & \textbf{7} & \textbf{9} & \textbf{0} & $\boldsymbol{-}$\textbf{2.1\%} & $-$5.9\% & $+$0.2\% \\
HalfNormal$(\mu{\approx}5)$ & 6 & 9 & 1 & $-$1.4\% & $-$6.2\% & $+$3.2\% \\
Uniform$[0,10]$ & 6 & 6 & 4 & $-$0.9\% & $-$5.6\% & $+$3.7\% \\
\bottomrule
\end{tabular}
\end{table}

Figure~\ref{fig:cbs-comparison} provides a multi-dimensional view of the comparison.  Panel~(a) shows the W/T/L composition per distribution for both Tilted-PIC and Tilted-CBS; Tilted-PIC consistently exhibits a larger green (win) region.  Panel~(b) presents a heatmap of the average cost change by $(L, p/h)$.  The improvement is strongest in the bottom-left region ($L \ge 3$, $p/h \le 9$), reaching $-5.3\%$ at $(L{=}4, p/h{=}4)$, and weakest in the top-right ($L{=}1$, $p/h \ge 19$), where CBS is near-optimal.  Panel~(c) compares the mean improvement of Tilted-PIC and Tilted-CBS across distributions, showing that Tilted-PIC's advantage is broad-based.  Panel~(d) reveals that the optimized $K_p$ decreases monotonically with lead time (from~0.88 at $L{=}1$ to~0.56 at $L{=}4$) and increases with the penalty ratio (from~0.57 at $p/h{=}4$ to~0.81 at $p/h{=}39$). This pattern is consistent with the intuition underlying Tilted-PIC. Longer lead times call for stronger damping, which is reflected in a lower proportional gain \(K_p\). By contrast, higher lost-sales penalties make more aggressive replenishment more desirable and push the controller toward the CBS limit \(K_p = 1\).

\begin{figure}[!htbp]
\centering
\includegraphics[width=\textwidth]{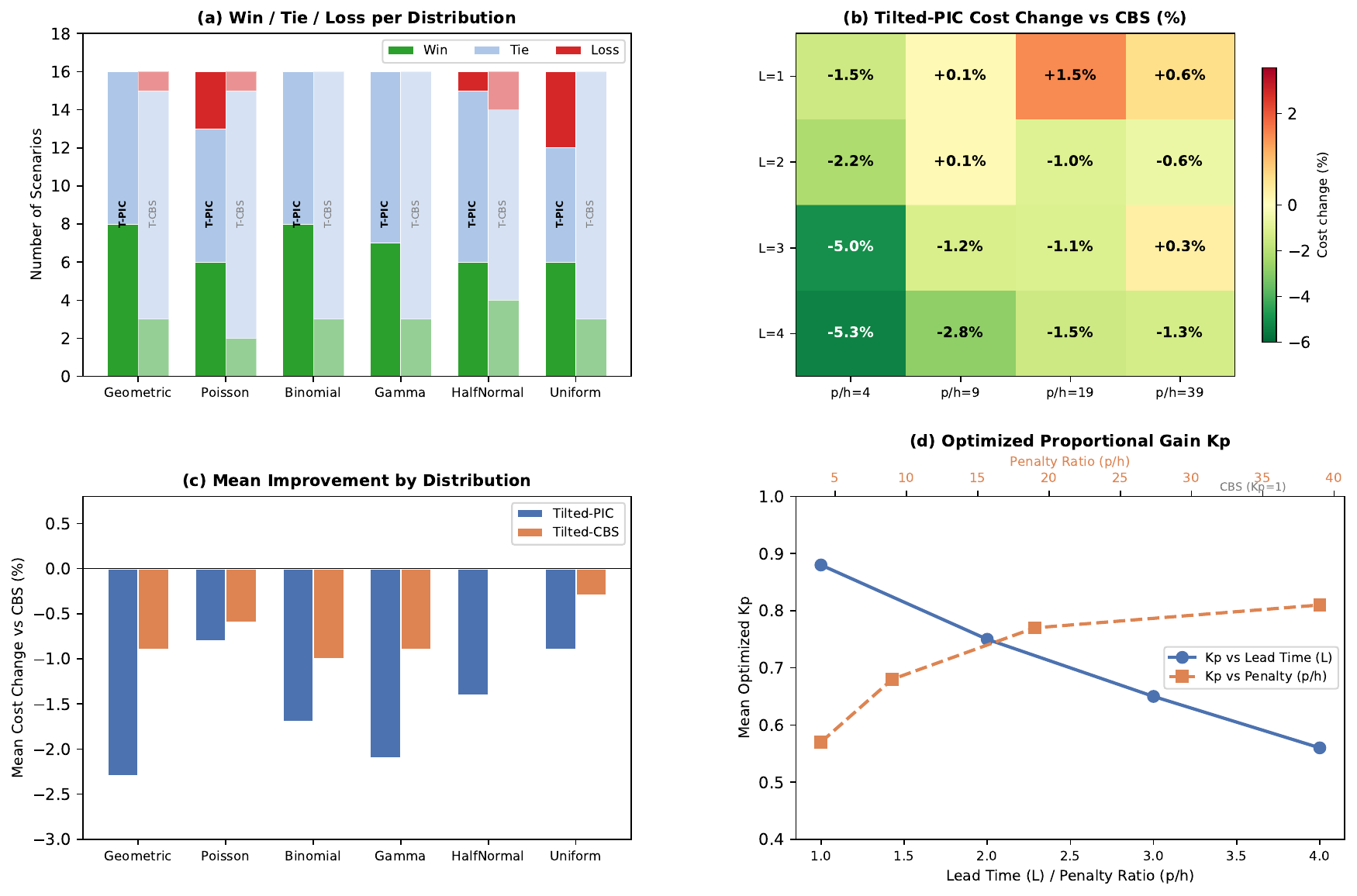}
\caption{Comparison of Tilted-PIC and Tilted-CBS against CBS across 96 scenarios.  (a)~Win/Tie/Loss composition by distribution.  (b)~Heatmap of Tilted-PIC's average cost change vs CBS by lead time and penalty ratio; green indicates improvement.  (c)~Mean cost reduction by distribution.  (d)~Optimized proportional gain $K_p$ as a function of lead time (solid, decreasing) and penalty ratio (dashed, increasing); the horizontal reference at $K_p{=}1$ corresponds to CBS behavior.}
\label{fig:cbs-comparison}
\end{figure}

\subsubsection{Structural Interpretation}
We provide a qualitative interpretation of the above experimental results rather than a formal analysis which is beyond the scope of this paper. 
For a rigorous discussion of CBS, we refer to \citet{xin}. 

A useful way to interpret these results
is that lost-sales systems with positive lead times are affected by two structural frictions: \emph{excessively aggressive replenishment after demand spikes} and \emph{slow recovery after severe inventory depletion}. CBS uses a fixed target level together with a fixed order cap. While this design is robust and easy to tune, it is structurally  rigid. It may order  too aggressively when the inventory gap is moderate, and too conservatively when the gap is large. 

The first friction arises because lost demand does not accumulate as backlog. After a temporary demand spike, CBS attempts to close the full inventory gap via $\min(\Delta_n,r)$. When demand returns to normal, this aggressive response may lead to excess inventory and increased holding cost, especially under long lead times. Tilted-PIC addresses this issue through the proportional factor $K_p<1$, which dampens the replenishment response and reduces overshooting.

The second friction occurs after large stockout events. When $\mathrm{IP}_n$ falls far below $S$, the fixed cap $r$ limits the recovery speed, creating a bottleneck. This effect is amplified by positive lead times, which delay replenishment arrivals. Tilted-CBS addresses this issue by introducing an elastic cap $r_{\mathrm{base}}+\alpha \Delta_n$, which increases the allowable order size when the gap is large. Tilted-PIC inherits this mechanism.

Overall, Tilted-CBS improves upon CBS by relaxing the fixed-cap constraint in large-gap states, while preserving the base-stock structure. Tilted-PIC further enhances performance by combining proportional damping with an elastic cap, allowing it to respond conservatively when the gap is small and aggressively when the gap is large. 

The gains are most pronounced under high demand variability. In such environments, both over-reaction and slow recovery occur more frequently. The elastic cap accelerates recovery after large shocks, while proportional damping reduces over-ordering after temporary spikes. This combination leads to consistent improvements over CBS in high-variance settings.

\section{Conclusion}
\label{conclusion}

This paper studies how to evolve executable white-box inventory policies with large language models in online, non-stationary settings. 
We develop a unified framework that connects training, inference, and deployment. 
The framework explicitly incorporates structural validity, replay-based certification, and deployment safety into the policy-evolution process. 
Our theoretical analysis links the guarantees across these three stages. 
Taken together, these results show that LLM-based policy evolution can be both statistically disciplined and operationally deployable in weakly structured inventory environments.
Training makes structurally valid and high-success policies more likely to be generated. Inference converts this improved generation behavior into single-period certified improvements through confidence-screened replay evaluation. During deployment, the framework remains conservatively safe while allowing us to bound the dynamic gap relative to the oracle-safe benchmark. Taken together, these results show that LLM-based policy evolution can be both 
statistically disciplined and operationally deployable for inventory problems.

The numerical results further demonstrate the practical value of the framework. 
InvEvolve substantially outperforms the base model on both the synthetic out-of-distribution benchmark and the real-world retail benchmark. It also achieves stronger task completion and broader cross-domain generalization. 
In the classical lost-sales setting, the framework evolves  meaningful structural variations including Tilted-CBS and Tilted-PIC. This finding suggests that generative policy search can serve not only as a competitive decision tool, but also as a useful approach for discovering new structures of inventory policies. Overall, the paper shows that generative AI may support inventory-policy design 
while preserving interpretability and deployment discipline. This points to a potentially useful direction for data-driven and auditable policy design in operations management.

\newpage

\bibliography{reference}
\bibliographystyle{plainnat}

\newpage
\begin{APPENDICES}
\counterwithin{proposition}{section}
\counterwithin{lemma}{section}
\counterwithin{example}{section}
\counterwithin{theorem}{section}
\numberwithin{equation}{section}
\renewcommand{\theequation}{\thesection. \arabic{equation}}
\setcounter{page}{1}

\setcounter{equation}{0}

\begin{center}
		{\LARGE \vspace{.1cm}
	  \Large \textbf{Online Supplement} 
		}
		\medskip
\end{center}

\section{Notation Summary}\label{sec:appendix_notation}

\paragraph{Global indexing conventions.}
Throughout the paper, we use the following indexing convention unless stated otherwise. In particular, in Tables~\ref{tab:notation_summary_inference} and \ref{tab:notation_summary_deployment}, $t$ denotes the outer policy-update epoch.

\begin{table}[H]
\centering
\caption{Global indexing conventions}
\label{tab:global_indexing_conventions}
\begin{tabular}{ll}
\toprule
Symbol & Meaning \\
\midrule
$t$ & outer policy-update / deployment epoch \\
$j$ & inner InvEvolve search round \\
$r$ & calendar-day index in synthetic or real datasets \\
$n$ & period index in the standalone classical inventory model of Section~\ref{sec:exp-cbs} \\
\bottomrule
\end{tabular}
\end{table}

To improve readability, we split notation into stage-specific tables.

\begin{table}[H]
\centering
\caption{Training-stage notation}
\label{tab:notation_summary_training}
\begin{tabular}{p{0.30\textwidth}p{0.64\textwidth}}
\toprule
\textbf{Notation} & \textbf{Meaning} \\
\midrule
$K,\ \eta$ & Number of training update steps and training update step size. \\
$\widetilde{\Pi},\ \Pi,\ \mathrm{can}(\cdot),\ g(\pi)$ & Raw code space, canonical policy space, canonicalization map, and structural-validity indicator. \\
$x\sim\mathcal{D}_{\mathrm{tr}},\ \Pi^{\mathrm{base}}$ & Training instance and shared baseline-policy family used in binary reward evaluation. \\
$r(x,\pi),\ \Delta(\pi)$ & Binary training success indicator and success probability under $\mathcal{D}_{\mathrm{tr}}$. \\
$\mathcal{G}^{\mathrm{tr}},\ \tau_{\mathrm{good}},\ \Delta_{\mathrm{bad}},\ \gamma$ & Training-stage good region, training-good threshold, bad-region envelope, and separation margin. \\
$p_k,\ \widehat{\Delta}_k(\pi),\ \mathcal{E}_{\mathrm{tr}},\ \varepsilon_K,\ \rho_K$ & Proxy distribution over canonical policies, estimated success, concentration event, cumulative estimation error budget, and bad-to-good mass ratio bound. \\
\bottomrule
\end{tabular}
\end{table}

\begin{table}[H]
\centering
\caption{Inference-stage notation}
\label{tab:notation_summary_inference}
\begin{tabular}{p{0.30\textwidth}p{0.64\textwidth}}
\toprule
\textbf{Notation} & \textbf{Meaning} \\
\midrule
$J,\ \mathcal{H}_t,\ d_{t-1},\ \pi_t^{\mathrm{ref}}$ & Per-period search rounds, period-level context, previously deployed policy, and period-specific reference baseline. \\
$W_t,\ C_t(\pi;\omega),\ Z_{t,\ell}(\pi\mid \tilde\pi)$ & Replay window, pathwise replay cost, and sample-path gain. \\
$\widehat{\mu}_t(\pi\mid \tilde\pi),\ \widehat{v}_t(\pi\mid \tilde\pi),\ \operatorname{rad}_t(\pi\mid \tilde\pi)$ & Empirical replay mean, empirical replay variance, and pairwise confidence radius. \\
$m_t,\ \delta_t,\ B_t,\ N_t,\ \mathcal{C}_t$ & Replay sample size, period-level confidence budget, Hoeffding bounded-gain constant used in the canonical verification, period-level evaluated-pair budget, and replay confidence event. \\
$\{\mathcal{F}_{t,j}\}_{j=0}^J,\ \mathcal{Q}_t(\pi,\tilde\pi)$ & Period-$t$ InvEvolve filtration and appendix-only query-time sigma-field for an evaluated policy pair. \\
$\mathrm{LCB}_t,\ \mathrm{UCB}_t$ & Lower and upper confidence bounds used for screening and optimistic selection. \\
$S_t(\pi),\ I_{t,j}(\pi),\ O_{t,j}(\pi)$ & Empirical safety, improvement, and optimistic screening scores used by the round-$j$ gate. \\
$\Delta_t^{\mathrm{safe}}(\pi),\ \Delta_{t,j}^{\mathrm{imp}}(\pi)$ & Replay-population safety and improvement margins used in promotability analysis. \\
$\mathcal{G}^{\mathrm{prom}}_{t,j}(\varepsilon),\ \mathcal{E}^{\mathrm{prom}}_t(\varepsilon)$ & Round-$j$ promotable policy class and period-level certified promotion event. \\
$Q_{t,j},\ \mathcal{H}_{t,j-1},\ \pi_{t,j},\ \pi_{t,j}^{\mathrm{ch}}$ & Round-$j$ policy-generation distribution, round-level prompt context, generated candidate, and running champion. \\
$\pi_{t,0}^{\mathrm{ch}},\ \mathcal{A}_{t,0},\ \mathcal{A}_{t,j},\ \mathcal{A}_t$ & Initial champion and initial/evolving/final active pools. \\
$\mathcal{A}_{t,0}^{\mathrm{feas}},\ \mathcal{A}_t^{\mathrm{feas}},\ \mathcal{E}_t$ & Initial and final safety-feasible active subsets, together with the period-level evaluated policy--comparator pair set. \\
$q_t,\ \bar q_t$ & Effective one-round coverage lower bounds for promotable and near-oracle safe policy classes. \\
$\tau_t,\ \kappa_t,\ \bar\kappa_t$ & Appendix-only primitive constants used to verify effective coverage through contextual generation shift and overlap. \\
\bottomrule
\end{tabular}
\end{table}

\begin{table}[H]
\centering
\caption{Deployment-stage notation}
\label{tab:notation_summary_deployment}
\begin{tabular}{p{0.30\textwidth}p{0.64\textwidth}}
\toprule
\textbf{Notation} & \textbf{Meaning} \\
\midrule
$d_t$ & Policy finally deployed at period $t$. \\
$V_t^{\mathrm{rep}}(\pi\mid \tilde\pi),\ V_t^{\mathrm{dep}}(\pi\mid \tilde\pi)$ & Replay gain and deployment gain of policy $\pi$ relative to comparator $\tilde\pi$. \\
$P_t^{\mathrm{rep}},\ P_t^{\mathrm{dep}},\ \mathfrak F_t,\ D_{\mathfrak F_t},\ \xi_t$ & Replay/deployment path distributions, IPM function class, induced replay-to-deployment discrepancy, and mismatch budget. \\
$\Pi_t^{\mathrm{safe}},\ V_t^{\mathrm{safe},*}$ & Certifiably safe policy class over $\Pi$ and the corresponding oracle safe value. \\
$\Pi_t^{\mathrm{safe},\nu_t},\ \nu_t$ & Near-oracle safe policy class and search-approximation tolerance. \\
$\Gamma_t(\tilde\pi_t,d_t),\ \mathcal{D}_t,\ G_T$ & Per-period oracle-safe deployment-gap bound, period-level near-oracle discovery event, and horizon-level rolling good event. \\
\bottomrule
\end{tabular}
\end{table}

\section{Proofs and Verification of Interface Conditions}
\label{sec:appendix_proofs}

This appendix provides proofs fully aligned with the revised interface assumptions in the main text.

\subsection{Proof of Theorem~\ref{thm:train_concentration}}

For notational convenience, we relabel the \(K\) multiplicative reweighting
updates by \(k=1,\ldots,K\), so that \(p_K\) denotes the proxy distribution over canonical policies
after these \(K\) updates. This is only an indexing convention and does not
change the update rule in \eqref{eq:proxy_update}.

For any policy subset \(S\subseteq \Pi\), define
\begin{equation}
W_K(S)
\triangleq
\sum_{\pi\in S}p_0(\pi)
\exp\!\left\{\eta\sum_{k=1}^K\widehat{\Delta}_k(\pi)\right\},
\label{eq:def_WK}
\end{equation}
which represents the total unnormalized weight of all policies in \(S\) after
\(K\) multiplicative reweighting steps. By a straightforward induction argument
based on \eqref{eq:proxy_update}, we have
\[
p_K(\pi)
=
\frac{
p_0(\pi)\exp\!\left\{\eta\sum_{k=1}^K\widehat{\Delta}_k(\pi)\right\}
}{
\sum_{\pi'\in\Pi}
p_0(\pi')\exp\!\left\{\eta\sum_{k=1}^K\widehat{\Delta}_k(\pi')\right\}
}.
\]

By comparing this expression with the definition in \eqref{eq:def_WK}, we obtain that, for any subset $S\subseteq\Pi$, 
\begin{equation} 
p_K(S)=\frac{W_K(S)}{W_K(\Pi)}. \label{eq:def_pK_via_WK} 
\end{equation}

By \eqref{eq:def_pK_via_WK}, the comparison of probability masses reduces to the comparison of the corresponding unnormalized weights. Therefore, in the sequel, it is sufficient to compare $W_K(\cdot)$ across different subsets, rather than tracking the distribution $p_K$ directly, since all such quantities share the same denominator $W_K(\Pi)$.

By assumption, there exists an event $\mathcal{E}_{\mathrm{tr}}$ such that, for any $\pi$ with $p_0(\pi)>0$,
\[
\sum_{k=1}^K\widehat{\Delta}_k(\pi)
=
K\Delta(\pi)+\sum_{k=1}^K\bigl(\widehat{\Delta}_k(\pi)-\Delta(\pi)\bigr).
\]
The second term is bounded in absolute value by $\varepsilon_K$. Hence, we immediately obtain
\[
K\Delta(\pi)-\varepsilon_K
\le
\sum_{k=1}^K\widehat{\Delta}_k(\pi)
\le
K\Delta(\pi)+\varepsilon_K.
\]

Next, we apply this bound to the good region $\mathcal{G}^{\mathrm{tr}}$ and its complement.

\begin{enumerate}
    \item For $\pi\in\mathcal{G}^{\mathrm{tr}}$, the definition of $\mathcal{G}^{\mathrm{tr}}$ implies that $\Delta(\pi)\ge \tau_{\mathrm{good}}$. Therefore, $\sum_{k=1}^K\widehat{\Delta}_k(\pi)\ge K\tau_{\mathrm{good}}-\varepsilon_K$. Substituting this lower bound into \eqref{eq:def_WK} yields
    \[
    W_K(\mathcal{G}^{\mathrm{tr}})
    \ge
    p_0(\mathcal{G}^{\mathrm{tr}})
    \exp\!\bigl\{\eta(K\tau_{\mathrm{good}}-\varepsilon_K)\bigr\}.
    \]

    \item For $\pi\in\Pi\setminus\mathcal{G}^{\mathrm{tr}}$, the definition of the complement region gives $\sum_{k=1}^K\widehat{\Delta}_k(\pi)\le K\Delta_{\mathrm{bad}}+\varepsilon_K.$ By the same argument,
    \[
    W_K(\Pi\setminus\mathcal{G}^{\mathrm{tr}})
    \le
    p_0(\Pi\setminus\mathcal{G}^{\mathrm{tr}})
    \exp\!\bigl\{\eta(K\Delta_{\mathrm{bad}}+\varepsilon_K)\bigr\}.
    \]
\end{enumerate}

Now, taking the ratio yields
\[
\frac{W_K(\Pi\setminus\mathcal{G}^{\mathrm{tr}})}{W_K(\mathcal{G}^{\mathrm{tr}})}
\le
\frac{p_0(\Pi\setminus\mathcal{G}^{\mathrm{tr}})}{p_0(\mathcal{G}^{\mathrm{tr}})}
\exp\!\bigl\{-\eta(K\gamma-2\varepsilon_K)\bigr\}
=\rho_K.
\]

By \eqref{eq:def_pK_via_WK},
\[
\frac{p_K(\Pi\setminus\mathcal{G}^{\mathrm{tr}})}{p_K(\mathcal{G}^{\mathrm{tr}})}
=
\frac{W_K(\Pi\setminus\mathcal{G}^{\mathrm{tr}})}{W_K(\mathcal{G}^{\mathrm{tr}})}
\le\rho_K,
\]
proving \eqref{eq:train_ratio_bound}. Since $p_K(\mathcal{G}^{\mathrm{tr}})+p_K(\Pi\setminus\mathcal{G}^{\mathrm{tr}})=1,$ we obtain
\[
p_K(\mathcal{G}^{\mathrm{tr}})\ge\frac{1}{1+\rho_K},
\qquad
p_K(\Pi\setminus\mathcal{G}^{\mathrm{tr}})\le\frac{\rho_K}{1+\rho_K},
\]
which is \eqref{eq:train_bad_mass_bound}.

\subsection{Verifying adaptive replay confidence validity}
\label{app:adaptive-ci-verification}

The main text states Assumption~\ref{ass:adaptive_ci} directly as an
interface-level confidence requirement. This appendix provides one primitive
sufficient condition under which the assumption is valid. The condition is
sufficient but not necessary: any other replay radius or resampling procedure
that delivers the same simultaneous replay confidence event can be substituted without
changing the main proof.

The key issue is adaptivity. InvEvolve generates candidate policies using
previous replay feedback, and therefore the candidate queried at a later round
may depend on earlier replay outcomes. The following verification is valid when
the replay observations used to evaluate each realized query are conditionally
fresh relative to the information available when that query is formed. In
practice, this can be implemented by independent simulation replications,
pre-specified holdout replay blocks, sample splitting of the mature replay
window, or another valid adaptive-inference construction. If the same replay
paths are reused both to guide policy generation and to evaluate later adaptive
queries, the conditional sampling condition below should be regarded as an
additional design requirement rather than an automatic consequence of having a
mature replay window.

\begin{assumption}[Conditionally fresh replay blocks]
\label{ass:fixed_replay_appendix}
For each period \(t\), let \(\{\mathcal{F}_{t,j}\}_{j=0}^J\) denote the
InvEvolve filtration. For every realized policy--comparator query
\((\pi,\tilde\pi)\) evaluated by the algorithm in period \(t\), let
\(\mathcal{Q}_t(\pi,\tilde\pi)\) denote the query-time sigma-field, i.e., the
information available immediately before the replay observations used for this
query are revealed.

For this realized query, the replay observations
\[
\{Z_{t,\ell}(\pi\mid\tilde\pi)\}_{\ell=1}^{m_t}
\]
are conditionally independent given \(\mathcal{Q}_t(\pi,\tilde\pi)\), satisfy
\[
\mathbb{E}\!\left[
Z_{t,\ell}(\pi\mid\tilde\pi)
\mid
\mathcal{Q}_t(\pi,\tilde\pi)
\right]
=
V_t^{\mathrm{rep}}(\pi\mid\tilde\pi),
\]
and are uniformly bounded as
\[
|Z_{t,\ell}(\pi\mid\tilde\pi)|\le B_t
\quad\text{almost surely}
\]
for some finite constant \(B_t<\infty\). The replay blocks used for different
realized queries need not be mutually independent; the proof only requires the
conditional validity of each realized confidence statement at its query time.
\end{assumption}

\begin{lemma}[Conditional Hoeffding verification of Assumption~\ref{ass:adaptive_ci}]
\label{lem:adaptive_ci_appendix}
Under Assumption~\ref{ass:fixed_replay_appendix}, the period-\(t\) confidence
event satisfies
\[
\mathbb{P}(\mathcal{C}_t)\ge 1-\delta_t.
\]
\end{lemma}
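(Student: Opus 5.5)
The plan is to apply a conditional Hoeffding inequality to each realized query and then take a union bound over the at most $N_t$ query slots. Adaptivity is handled by indexing queries by their position in the algorithm's execution rather than by policy identity.

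\textbf{Step 1: fix the query slots.} Enumerate the potential evaluation slots in period $t$ as $s=1,\ldots,N_t$. There are $|\mathcal{A}_{t,0}|-1$ pre-loop slots, comparing each fallback to $\pi_t^{\mathrm{ref}}$. There are also two slots per inner round $j$, namely $(\pi_{t,j},\pi_t^{\mathrm{ref}})$ and $(\pi_{t,j},\pi_{t,j-1}^{\mathrm{ch}})$. Each slot $s$ yields a random pair $(\pi^{(s)},\tilde\pi^{(s)})$ that is measurable with respect to the query-time sigma-field $\mathcal{Q}^{(s)}$. The slot may also be inactive: it could be a self-comparison with zero radius, which holds trivially, or unused. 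The realized set $\mathcal{E}_t$ is the set of active slots, so $|\mathcal{E}_t|\le N_t$ and
\[
\mathcal{C}_t^c\subseteq\bigcup_{s=1}^{N_t} B_s,\qquad B_s:=\bigl\{s \text{ active},\ |\widehat{\mu}_t(\pi^{(s)}\mid\tilde\pi^{(s)})-V_t^{\mathrm{rep}}(\pi^{(s)}\mid\tilde\pi^{(s)})|>\operatorname{rad}_t\bigr\}.
\]

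\textbf{Step 2: conditional Hoeffding per slot.} Condition on $\mathcal{Q}^{(s)}$. Under Assumption~\ref{ass:fixed_replay_appendix}, the pair is then fixed. The $m_t$ gains are conditionally independent, have conditional mean $V_t^{\mathrm{rep}}$, and take values in $[-B_t,B_t]$, an interval of width $2B_t$. Hoeffding's inequality therefore gives
\[
\mathbb{P}\bigl(B_s\mid\mathcal{Q}^{(s)}\bigr)\le 2\exp\!\Bigl(-\tfrac{2m_t^2\operatorname{rad}_t^2}{m_t(2B_t)^2}\Bigr)=2\exp\!\Bigl(-\tfrac{m_t\operatorname{rad}_t^2}{2B_t^2}\Bigr).
\]
Plugging in the radius from \eqref{eq:def_radius}, we have $m_t\operatorname{rad}_t^2/(2B_t^2)=\log(2N_t/\delta_t)$. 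The conditional bound is therefore $\delta_t/N_t$. Taking expectations (tower property) gives $\mathbb{P}(B_s)\le\delta_t/N_t$. This holds because the radius is deterministic and the bound is uniform in $\mathcal{Q}^{(s)}$. Whether the pair depends on earlier replay data does not matter, because Assumption~\ref{ass:fixed_replay_appendix} only asks for conditional validity at query time, and no independence across blocks is needed.

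\textbf{Step 3: union bound.} Summing over the $N_t$ slots gives $\mathbb{P}(\mathcal{C}_t^c)\le N_t\cdot\delta_t/N_t=\delta_t$, which is the claim.

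\textbf{Main obstacle.} The delicate point is Step 1, which must justify a union bound over a \emph{random} set of pairs. The fix is to bound over the deterministic slot index rather than over realized pairs. One must also check that the fresh replay block for slot $s$ is revealed after $\mathcal{Q}^{(s)}$, so that the pair is $\mathcal{Q}^{(s)}$-measurable and conditional Hoeffding applies. The same care covers duplicate pairs, since re-evaluations simply occupy distinct slots.
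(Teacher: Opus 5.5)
Your proposal is correct and takes essentially the same route as the paper: a conditional Hoeffding bound at each query's query-time sigma-field, with the radius calibrated to give $\delta_t/N_t$, then the tower property to remove the conditioning, then a union bound over at most $N_t$ confidence statements. Indexing by deterministic slots, some possibly inactive, is simply a more explicit version of the paper's chronological enumeration $r=1,\ldots,R_t$ with $R_t\le N_t$.
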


\noindent\textit{Proof.}
Fix a period \(t\). The InvEvolve procedure generates at most \(N_t\) realized
confidence statements in period \(t\): the pre-loop comparisons against
\(\pi_t^{\mathrm{ref}}\) and at most two additional queried pairs per inner-loop
round. Enumerate these realized confidence statements in their chronological
order as \(r=1,\ldots,R_t\), where \(R_t\le N_t\). Let
\((\pi_{t,r},\tilde\pi_{t,r})\) denote the \(r\)-th realized query and let
\(\mathcal{Q}_{t,r}\) denote its query-time sigma-field.

For a fixed realized query \(r\), condition on \(\mathcal{Q}_{t,r}\). By
Assumption~\ref{ass:fixed_replay_appendix}, the replay observations used for
this query are conditionally independent, bounded in \([-B_t,B_t]\), and have
conditional mean
\(V_t^{\mathrm{rep}}(\pi_{t,r}\mid\tilde\pi_{t,r})\). Therefore Hoeffding's
inequality gives
\[
\mathbb{P}\!\left(
\left|
\widehat{\mu}_t(\pi_{t,r}\mid\tilde\pi_{t,r})
-
V_t^{\mathrm{rep}}(\pi_{t,r}\mid\tilde\pi_{t,r})
\right|
>
B_t\sqrt{\frac{2\log(2N_t/\delta_t)}{m_t}}
\ \middle|\
\mathcal{Q}_{t,r}
\right)
\le
\frac{\delta_t}{N_t}.
\]
Taking expectations gives the same unconditional bound for the \(r\)-th realized
confidence statement. Since there are at most \(N_t\) such statements, a union
bound yields
\[
\mathbb{P}(\mathcal{C}_t^c)
\le
\sum_{r=1}^{R_t}\frac{\delta_t}{N_t}
\le
\delta_t.
\]
Hence \(\mathbb{P}(\mathcal{C}_t)\ge 1-\delta_t\).

\subsection{Verifying effective search coverage from primitive conditions}
\label{app:effective-coverage-sufficient}

The main text uses Assumption~\ref{ass:effective_coverage} as the interface between the trained policy-generation distribution and the deployment search problem. These primitive conditions are one way to verify effective coverage; they are not required by the main theorems. This appendix provides one set of primitive sufficient conditions under which this interface condition holds. The argument uses the training concentration guarantee in Theorem~\ref{thm:train_concentration}, a bounded contextual shift condition between the inference-time policy-generation distribution and the trained proxy distribution over canonical policies, and overlap between the training-good region and the deployment target policy classes.

\begin{assumption}[Primitive condition: bounded contextual generation shift]
\label{ass:uniform_ci_appendix}
For each period $t$, there exists a $\tau_t \in [0,1]$ such that, for each round $j$, conditional on $\mathcal{F}_{t,j-1}$, the round-$j$ policy-generation distribution $Q_{t,j} := Q_\theta(\cdot\mid \mathcal{H}_{t,j-1},\pi_{t,j-1}^{\mathrm{ch}},\mathcal{A}_{t,j-1},\mathrm{stats}_{t,j-1})$ satisfies, almost surely,
\[
\operatorname{TV}(Q_{t,j},p_K)
:=
\sup_{A\subseteq\Pi}|Q_{t,j}(A)-p_K(A)|
\le \tau_t.
\]
\end{assumption}

\begin{assumption}[Primitive condition: overlap with the promotable policy class]
\label{ass:overlap_promotable_appendix}
There exists a deterministic constant $\kappa_t\in[0,1]$ such that, for every round $j$, almost surely,
\[
p_K(\mathcal{G}^{\mathrm{prom}}_{t,j}(\varepsilon)\mid\mathcal{G}^{\mathrm{tr}})
\ge \kappa_t.
\]
\end{assumption}

\begin{assumption}[Primitive condition: overlap with the near-oracle safe policy class]
\label{ass:overlap_safe_appendix}
There exists a deterministic constant $\bar\kappa_t\in[0,1]$ such that, almost surely,
\[
p_K(\Pi_t^{\mathrm{safe},\nu_t}\mid\mathcal{G}^{\mathrm{tr}})
\ge\bar\kappa_t.
\]
\end{assumption}

\begin{lemma}[Sufficient conditions for effective search coverage]
\label{lem:sufficient_effective_coverage}
Condition on any training realization in $\mathcal{E}_{\mathrm{tr}}$. Suppose the primitive conditions in Assumptions~\ref{ass:uniform_ci_appendix}, \ref{ass:overlap_promotable_appendix}, and \ref{ass:overlap_safe_appendix} hold. Then Assumption~\ref{ass:effective_coverage} holds with
\[
q_t=
\left(\frac{\kappa_t}{1+\rho_K}-\tau_t\right)_+,
\qquad
\bar q_t=
\left(\frac{\bar\kappa_t}{1+\rho_K}-\tau_t\right)_+.
\]
\end{lemma}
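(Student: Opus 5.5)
The argument chains the training concentration bound of Theorem~\ref{thm:train_concentration} with the overlap conditions, and then transfers from $p_K$ to $Q_{t,j}$ through the total-variation bound. Fix a training realization in $\mathcal{E}_{\mathrm{tr}}$, a period $t$ and a round $j$, and condition on $\mathcal{F}_{t,j-1}$. I treat the promotable class first; the near-oracle safe class is handled identically with $\bar\kappa_t$ in place of $\kappa_t$.

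First, lower-bound the mass that the trained proxy distribution places on the target class. Since $\mathcal{G}^{\mathrm{prom}}_{t,j}(\varepsilon)\supseteq \mathcal{G}^{\mathrm{prom}}_{t,j}(\varepsilon)\cap\mathcal{G}^{\mathrm{tr}}$, we have
\[
p_K\bigl(\mathcal{G}^{\mathrm{prom}}_{t,j}(\varepsilon)\bigr)\ \ge\ p_K\bigl(\mathcal{G}^{\mathrm{prom}}_{t,j}(\varepsilon)\mid\mathcal{G}^{\mathrm{tr}}\bigr)\,p_K(\mathcal{G}^{\mathrm{tr}})\ \ge\ \frac{\kappa_t}{1+\rho_K}.
\]
The last inequality uses Assumption~\ref{ass:overlap_promotable_appendix} together with \eqref{eq:train_bad_mass_bound}, which holds on $\mathcal{E}_{\mathrm{tr}}$. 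The conditional probability is well defined because $p_K(\mathcal{G}^{\mathrm{tr}})\ge 1/(1+\rho_K)>0$, which follows from $p_0(\mathcal{G}^{\mathrm{tr}})>0$ and the fact that $\rho_K$ is finite.

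Next, transfer this bound to the inference-time generator. The class $\mathcal{G}^{\mathrm{prom}}_{t,j}(\varepsilon)$ is determined by replay quantities and the champion $\pi^{\mathrm{ch}}_{t,j-1}$, so given $\mathcal{F}_{t,j-1}$ it is a fixed subset $A\subseteq\Pi$. Assumption~\ref{ass:uniform_ci_appendix} then gives $Q_{t,j}(A)\ge p_K(A)-\tau_t$ almost surely. Combining this with the first step, and using $Q_{t,j}(A)\ge 0$ to take the positive part, yields $Q_{t,j}(A)\ge\bigl(\kappa_t/(1+\rho_K)-\tau_t\bigr)_+=q_t$.

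The near-oracle safe class $\Pi_t^{\mathrm{safe},\nu_t}$ is likewise a fixed set given $\mathcal{F}_{t,0}\subseteq\mathcal{F}_{t,j-1}$. Repeating the two steps with Assumption~\ref{ass:overlap_safe_appendix} gives $\bar q_t$. Both constants depend only on deterministic quantities ($\kappa_t,\bar\kappa_t,\tau_t,\rho_K$), so they serve as the required constants in Assumption~\ref{ass:effective_coverage}.

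The main point to check is measurability. The target classes depend on the random champion and replay window, so the TV bound must be applied to an $\mathcal{F}_{t,j-1}$-measurable set. This is acceptable because the supremum over all $A\subseteq\Pi$ in Assumption~\ref{ass:uniform_ci_appendix} holds almost surely and therefore covers any realized set. One further subtlety is that $p_K$ is itself random through training; I handle this by conditioning on the training realization in $\mathcal{E}_{\mathrm{tr}}$ from the outset.
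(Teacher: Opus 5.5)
Your proposal is correct and follows the paper's proof essentially step for step. You lower-bound $p_K$ on the target class through its intersection with $\mathcal{G}^{\mathrm{tr}}$, Theorem~\ref{thm:train_concentration}, and the overlap assumption, then transfer to $Q_{t,j}$ through the total-variation bound and take the positive part; your remarks on measurability and on the conditional probability being well defined are harmless refinements that the paper leaves implicit.
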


\noindent\textit{Proof.} Condition on any training realization in $\mathcal{E}_{\mathrm{tr}}$. By Theorem~\ref{thm:train_concentration}, we have $p_K(\mathcal{G}^{\mathrm{tr}})\ge\frac{1}{1+\rho_K}.$ For the promotable policy class, Assumption~\ref{ass:overlap_promotable_appendix} gives
\[
p_K\bigl(\mathcal{G}^{\mathrm{prom}}_{t,j}(\varepsilon)\bigr)
\ge p_K\left(\mathcal{G}^{\mathrm{prom}}_{t,j}(\varepsilon) \cap \mathcal{G}^{\mathrm{tr}}\right)=
p_K(\mathcal{G}^{\mathrm{tr}})\,p_K\bigl(\mathcal{G}^{\mathrm{prom}}_{t,j}(\varepsilon)\mid\mathcal{G}^{\mathrm{tr}}\bigr)
\ge
\frac{\kappa_t}{1+\rho_K}.
\]

According to Assumption~\ref{ass:uniform_ci_appendix}, we have $\operatorname{TV}(Q_{t,j},p_K)=\sup_{A\subseteq\Pi}|Q_{t,j}(A)-p_K(A)|,$ which implies that for every subset $A\subseteq\Pi$,
\[
Q_{t,j}(A)\ge p_K(A)-\tau_t.
\]
Taking $A=\mathcal{G}^{\mathrm{prom}}_{t,j}(\varepsilon)$ yields
\[
Q_{t,j}\bigl(\mathcal{G}^{\mathrm{prom}}_{t,j}(\varepsilon)\bigr)
\ge
\frac{\kappa_t}{1+\rho_K}-\tau_t.
\]
Because probabilities are nonnegative, then we obtain
\[
Q_{t,j}\bigl(\mathcal{G}^{\mathrm{prom}}_{t,j}(\varepsilon)\bigr)
\ge
\max\!\left\{\frac{\kappa_t}{1+\rho_K}-\tau_t,0\right\}
= q_t.
\]

For the near-oracle safe policy class, Assumption~\ref{ass:overlap_safe_appendix} similarly gives
\[
p_K\bigl(\Pi_t^{\mathrm{safe},\nu_t}\bigr)
\ge
p_K(\mathcal{G}^{\mathrm{tr}})\,p_K\bigl(\Pi_t^{\mathrm{safe},\nu_t}\mid\mathcal{G}^{\mathrm{tr}}\bigr)
\ge
\frac{\bar\kappa_t}{1+\rho_K}.
\]
Applying Assumption~\ref{ass:uniform_ci_appendix} with $A=\Pi_t^{\mathrm{safe},\nu_t}$ yields $Q_{t,j}\bigl(\Pi_t^{\mathrm{safe},\nu_t}\bigr)
\ge
\frac{\bar\kappa_t}{1+\rho_K}-\tau_t.$ Again using nonnegativity of probabilities,
\[
Q_{t,j}\bigl(\Pi_t^{\mathrm{safe},\nu_t}\bigr)
\ge
\max\!\left\{\frac{\bar\kappa_t}{1+\rho_K}-\tau_t,0\right\}
= \bar q_t.
\]
This proves the lemma.

\subsection{Auxiliary lemmas for inference and deployment}

\begin{lemma}[IPM transfer]
\label{lem:ipm_transfer_appendix}
For every relevant pair $(\pi,\tilde\pi)$, $\left|V_t^{\mathrm{dep}}(\pi\mid \tilde\pi)-V_t^{\mathrm{rep}}(\pi\mid \tilde\pi)\right|\le\xi_t.$
\end{lemma}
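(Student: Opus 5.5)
The plan is to unwind the definitions: the statement is essentially the IPM bound \eqref{eq:gap_assumption} specialized to the gain function of the pair, combined with Assumption~\ref{ass:gap}. Fix a period $t$ and a relevant pair $(\pi,\tilde\pi)$, meaning one that enters certification or deployment. First I would express both gains as expectations of the same pathwise function under the two path laws,
\[
V_t^{\mathrm{dep}}(\pi\mid\tilde\pi)=\mathbb{E}_{P_t^{\mathrm{dep}}}\bigl[z_{t,\pi,\tilde\pi}\bigr],
\qquad
V_t^{\mathrm{rep}}(\pi\mid\tilde\pi)=\mathbb{E}_{P_t^{\mathrm{rep}}}\bigl[z_{t,\pi,\tilde\pi}\bigr].
\]
The deployment identity follows from \eqref{eq:def_v_dep}, using that $C_t^{\mathrm{dep}}(\cdot)$ is the expected cumulative cost over the operational window, i.e.\ the expectation of $C_t(\cdot;\omega)$ under $P_t^{\mathrm{dep}}$, together with linearity of expectation (costs are integrable, e.g.\ bounded over a finite window). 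The replay identity holds because $V_t^{\mathrm{rep}}$ is the population replay gain, i.e.\ the mean of $Z_{t,\ell}(\pi\mid\tilde\pi)=z_{t,\pi,\tilde\pi}(\omega_{t,\ell})$ under $P_t^{\mathrm{rep}}$. This matches the conditional mean in Assumption~\ref{ass:fixed_replay_appendix} and the centering in $\mathcal{C}_t$.

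Second, by the standing membership condition $z_{t,\pi,\tilde\pi}\in\mathfrak F_t$, which Assumption~\ref{ass:gap} guarantees for every relevant pair, I would bound
\[
\bigl|V_t^{\mathrm{dep}}(\pi\mid\tilde\pi)-V_t^{\mathrm{rep}}(\pi\mid\tilde\pi)\bigr|
=\bigl|\mathbb{E}_{P_t^{\mathrm{dep}}}[z_{t,\pi,\tilde\pi}]-\mathbb{E}_{P_t^{\mathrm{rep}}}[z_{t,\pi,\tilde\pi}]\bigr|
\le\sup_{f\in\mathfrak F_t}\bigl|\mathbb{E}_{P_t^{\mathrm{dep}}}[f]-\mathbb{E}_{P_t^{\mathrm{rep}}}[f]\bigr|=\xi_t .
\]

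The degenerate pair $(\pi_t^{\mathrm{ref}},\pi_t^{\mathrm{ref}})$ needs no separate argument: both gains are zero (the pathwise difference vanishes identically), so the bound is trivial.

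The only real obstacle is definitional. It is making sure the population objects $C_t^{\mathrm{dep}}$ and $V_t^{\mathrm{rep}}$ are indeed expectations of the same pathwise cost functional $C_t(\cdot;\omega)$, with the same evaluation window, under $P_t^{\mathrm{dep}}$ and $P_t^{\mathrm{rep}}$ respectively. I would state this identification explicitly as the modeling convention the paper adopts when it writes ``hence'' after \eqref{eq:gap_assumption}, and restrict the claim to pairs covered by Assumption~\ref{ass:gap}. Once that is done, the proof is a single line.
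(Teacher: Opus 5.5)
Your proposal is correct and matches the paper's proof. The paper likewise writes $V_t^{\mathrm{dep}}(\pi\mid\tilde\pi)-V_t^{\mathrm{rep}}(\pi\mid\tilde\pi)=\mathbb{E}_{P_t^{\mathrm{dep}}}[z_{t,\pi,\tilde\pi}]-\mathbb{E}_{P_t^{\mathrm{rep}}}[z_{t,\pi,\tilde\pi}]$ ``by definition'' and then uses $z_{t,\pi,\tilde\pi}\in\mathfrak F_t$ to bound the absolute value by the supremum defining $\xi_t$. Your remarks on identifying $C_t^{\mathrm{dep}}$ and $V_t^{\mathrm{rep}}$ as expectations of the same pathwise cost only make explicit what the paper leaves implicit.
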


\noindent\textit{Proof.} By definition,
\[
V_t^{\mathrm{dep}}(\pi\mid \tilde\pi)-V_t^{\mathrm{rep}}(\pi\mid \tilde\pi)
=
\mathbb{E}_{P_t^{\mathrm{dep}}}[z_{t,\pi,\tilde\pi}] - \mathbb{E}_{P_t^{\mathrm{rep}}}[z_{t,\pi,\tilde\pi}],
\]
and since $z_{t,\pi,\tilde\pi}\in\mathfrak F_t$, the absolute value is at most
\[
\sup_{f\in\mathfrak F_t}\left|\mathbb{E}_{P_t^{\mathrm{dep}}}f-\mathbb{E}_{P_t^{\mathrm{rep}}}f\right|=\xi_t.
\]

\subsection{Proof of Lemma~\ref{lem:gate_correctness}}

Recall that $\mathcal{G}_{t, j}^{\mathrm{prom}}(\varepsilon) \triangleq\left\{\pi \in \Pi: g(\pi)=1, \Delta_t^{\mathrm{safe}}(\pi) \geq 0, \Delta_{t, j}^{\mathrm{imp}}(\pi) \geq 0\right\}$. For any $\pi\in\mathcal{G}^{\mathrm{prom}}_{t,j}(\varepsilon)$, expanding $\Delta_t^{\mathrm{safe}}(\pi) \geq 0$ and $\Delta_{t, j}^{\mathrm{imp}}(\pi) \geq 0$ gives:
\begin{equation}
V_t^{\mathrm{rep}}(\pi\mid \pi_t^{\mathrm{ref}})
\ge
\xi_t + 2\,\operatorname{rad}_t(\pi\mid \pi_t^{\mathrm{ref}}),
\label{eq:detailed-safe-replay}
\end{equation}
\begin{equation}
V_t^{\mathrm{rep}}(\pi\mid \pi_{t,j-1}^{\mathrm{ch}})
\ge
\varepsilon + \xi_t + 2\,\operatorname{rad}_t(\pi\mid \pi_{t,j-1}^{\mathrm{ch}}).
\label{eq:detailed-imp-replay}
\end{equation}
We now prove the claim in two steps.

\paragraph{Step 1: On \(\mathcal{C}_t\), every promotable candidate passes the replay gate.}
Assume throughout this step that \(\mathcal{C}_t\) holds and that \(\pi\in\mathcal{G}_{t,j}^{\mathrm{prom}}(\varepsilon)\).
Because \(\mathcal{C}_t\) holds, the replay estimate against the reference baseline satisfies
\[
\left|
\widehat{\mu}_t(\pi\mid \pi_t^{\mathrm{ref}})
-
V_t^{\mathrm{rep}}(\pi\mid \pi_t^{\mathrm{ref}})
\right|
\le
\operatorname{rad}_t(\pi\mid \pi_t^{\mathrm{ref}}) \Rightarrow
\widehat{\mu}_t(\pi\mid \pi_t^{\mathrm{ref}})
\ge
V_t^{\mathrm{rep}}(\pi\mid \pi_t^{\mathrm{ref}})
-
\operatorname{rad}_t(\pi\mid \pi_t^{\mathrm{ref}}).
\]
Subtracting another radius from both sides yields
\begin{align*}
\mathrm{LCB}_t(\pi\mid \pi_t^{\mathrm{ref}})
=
\widehat{\mu}_t(\pi\mid \pi_t^{\mathrm{ref}})
-
\operatorname{rad}_t(\pi\mid \pi_t^{\mathrm{ref}}) \ge
V_t^{\mathrm{rep}}(\pi\mid \pi_t^{\mathrm{ref}})
-
2\,\operatorname{rad}_t(\pi\mid \pi_t^{\mathrm{ref}}).
\end{align*}
Now apply \eqref{eq:detailed-safe-replay}, which implies $V_t^{\mathrm{rep}}(\pi\mid \pi_t^{\mathrm{ref}})-2\,\operatorname{rad}_t(\pi\mid \pi_t^{\mathrm{ref}})\ge\xi_t,$ combining the last two displays gives $\mathrm{LCB}_t(\pi\mid \pi_t^{\mathrm{ref}})\ge \xi_t.$ Thus the \emph{safety gate} is passed.

Exactly the same reasoning applies to the comparison with the current champion. Since \(\mathcal{C}_t\) holds, we can also obtain
\[
\widehat{\mu}_t(\pi\mid \pi_{t,j-1}^{\mathrm{ch}})
\ge
V_t^{\mathrm{rep}}(\pi\mid \pi_{t,j-1}^{\mathrm{ch}})
-
\operatorname{rad}_t(\pi\mid \pi_{t,j-1}^{\mathrm{ch}}).
\]
Subtracting one more radius gives
\begin{align*}
\mathrm{LCB}_t(\pi\mid \pi_{t,j-1}^{\mathrm{ch}})
=
\widehat{\mu}_t(\pi\mid \pi_{t,j-1}^{\mathrm{ch}})
-
\operatorname{rad}_t(\pi\mid \pi_{t,j-1}^{\mathrm{ch}}) 
\geq V_t^{\mathrm{rep}}(\pi\mid \pi_{t,j-1}^{\mathrm{ch}})
-
2\,\operatorname{rad}_t(\pi\mid \pi_{t,j-1}^{\mathrm{ch}}).
\end{align*}
Using \eqref{eq:detailed-imp-replay}, we obtain
\[
V_t^{\mathrm{rep}}(\pi\mid \pi_{t,j-1}^{\mathrm{ch}})
-
2\,\operatorname{rad}_t(\pi\mid \pi_{t,j-1}^{\mathrm{ch}})
\ge
\varepsilon+\xi_t,
\]
and therefore $\mathrm{LCB}_t(\pi\mid \pi_{t,j-1}^{\mathrm{ch}})
\ge
\varepsilon+\xi_t.$ Thus the \emph{improvement gate} is also passed. Since both gate inequalities hold, the candidate promotion rule accepts and promotes \(\pi\).

\paragraph{Step 2: A promoted promotable candidate is safe and genuinely improving under deployment.}

We now transfer the replay guarantees to deployment. By Lemma~\ref{lem:ipm_transfer_appendix},
for every relevant pair \((\pi,\widetilde{\pi})\), we have $
\left|
V_t^{\mathrm{dep}}(\pi\mid \widetilde{\pi})
-
V_t^{\mathrm{rep}}(\pi\mid \widetilde{\pi})
\right|
\le
\xi_t.
$ Equivalently, 
\[
V_t^{\mathrm{dep}}(\pi\mid \widetilde{\pi})
\ge
V_t^{\mathrm{rep}}(\pi\mid \widetilde{\pi})-\xi_t.
\]

Apply this first with \(\widetilde{\pi}=\pi_t^{\mathrm{ref}}\). Then we have $
V_t^{\mathrm{dep}}(\pi\mid \pi_t^{\mathrm{ref}})
\ge
V_t^{\mathrm{rep}}(\pi\mid \pi_t^{\mathrm{ref}})-\xi_t.
$. Using \eqref{eq:detailed-safe-replay}, we obtain
\[
V_t^{\mathrm{dep}}(\pi\mid \pi_t^{\mathrm{ref}})
\ge
\bigl(\xi_t + 2\,\operatorname{rad}_t(\pi\mid \pi_t^{\mathrm{ref}})\bigr)-\xi_t
=
2\,\operatorname{rad}_t(\pi\mid \pi_t^{\mathrm{ref}})
\ge 0.
\]
Hence \(\pi\) is deployment-safe relative to the reference baseline: $V_t^{\mathrm{dep}}(\pi\mid \pi_t^{\mathrm{ref}})\ge 0.$

Next apply the same transfer inequality with
\(\widetilde{\pi}=\pi_{t,j-1}^{\mathrm{ch}}\). Then
\[
V_t^{\mathrm{dep}}(\pi\mid \pi_{t,j-1}^{\mathrm{ch}})
\ge
V_t^{\mathrm{rep}}(\pi\mid \pi_{t,j-1}^{\mathrm{ch}})-\xi_t.
\]
Invoking \eqref{eq:detailed-imp-replay}, we get
\[
V_t^{\mathrm{dep}}(\pi\mid \pi_{t,j-1}^{\mathrm{ch}})
\ge
\bigl(\varepsilon+\xi_t+2\,\operatorname{rad}_t(\pi\mid \pi_{t,j-1}^{\mathrm{ch}})\bigr)-\xi_t
=
\varepsilon+2\,\operatorname{rad}_t(\pi\mid \pi_{t,j-1}^{\mathrm{ch}})
\ge
\varepsilon.
\]
Therefore the promoted candidate is not only replay-certified but also truly better under deployment: $V_t^{\mathrm{dep}}(\pi\mid \pi_{t,j-1}^{\mathrm{ch}})\ge \varepsilon.$

Combining Steps 1 and 2 proves the claim: if \(\mathcal{C}_t\) holds and
\(\pi\in \mathcal{G}_{t,j}^{\mathrm{prom}}(\varepsilon)\), then the candidate promotion rule promotes \(\pi\), and the promoted
policy satisfies
\[
V_t^{\mathrm{dep}}(\pi\mid \pi_t^{\mathrm{ref}})\ge 0,
\qquad
V_t^{\mathrm{dep}}(\pi\mid \pi_{t,j-1}^{\mathrm{ch}})\ge \varepsilon.
\]

\subsection{Proof of Theorem~\ref{thm:single_month_promotion}}

Condition throughout on the event $\mathcal{E}_{\mathrm{tr}}$. The proof relies on Assumption~\ref{ass:effective_coverage} and Lemma~\ref{lem:gate_correctness}. Since the initial comparator $\pi_{t,0}^{\mathrm{ch}}$ may already be the strongest safety certified baseline or incumbent fallback policy, successful promotion is benchmarked against this stronger period-initial fallback. Define
\[
F_{t,0}^{\mathrm{prom}}:=\Omega,
\qquad
F_{t,j}^{\mathrm{prom}}
:=
\bigcap_{r=1}^j\{\pi_{t,r}\notin\mathcal{G}^{\mathrm{prom}}_{t,r}(\varepsilon)\},
\qquad j=1,\ldots,J.
\]
The event $F_{t,j}^{\mathrm{prom}}$ means that, over the first $j$ rounds, none of the generated candidates belongs to the corresponding promotable policy class. Equivalently,
\[
F_{t,J}^{\mathrm{prom}}
=
\Bigl\{\forall j\le J:\ \pi_{t,j}\notin\mathcal{G}^{\mathrm{prom}}_{t,j}(\varepsilon)\Bigr\}
=
\mathcal{S}_t^c, \text{ where } 
\mathcal{S}_t
:=
\{\exists j\le J:\ \pi_{t,j}\in\mathcal{G}^{\mathrm{prom}}_{t,j}(\varepsilon)\}.
\]

By Assumption~\ref{ass:effective_coverage}, for each round $j$,
\[
\mathbb{P}\bigl(\pi_{t,j}\in\mathcal{G}^{\mathrm{prom}}_{t,j}(\varepsilon)\mid\mathcal{F}_{t,j-1}\bigr)
=
Q_{t,j}\bigl(\mathcal{G}^{\mathrm{prom}}_{t,j}(\varepsilon)\bigr)
\ge q_t.
\]
Therefore,
\[
\mathbb{P}(F_{t,j}^{\mathrm{prom}})
=
\mathbb{E}\!\left[
\mathbf{1}_{F_{t,j-1}^{\mathrm{prom}}}
\mathbb{P}\bigl(\pi_{t,j}\notin\mathcal{G}^{\mathrm{prom}}_{t,j}(\varepsilon)\mid\mathcal{F}_{t,j-1}\bigr)
\right]
\le
(1-q_t)\mathbb{P}(F_{t,j-1}^{\mathrm{prom}}).
\]
Here, $F_{t,j-1}^{\mathrm{prom}}$ is $\mathcal{F}_{t,j-1}$-measurable. Moreover, since the conditional success probability is bounded below by $q_t$, the corresponding conditional failure probability is bounded above by $1-q_t$.

Applying this inequality recursively gives
\[
\mathbb{P}(\mathcal{S}_t^c)
=
\mathbb{P}(F_{t,J}^{\mathrm{prom}})
\le
(1-q_t)^J.
\]

Next, Assumption~\ref{ass:adaptive_ci} implies that
\[
\mathbb{P}(\mathcal{C}_t)\ge 1-\delta_t.
\]
On the event $\mathcal{C}_t\cap\mathcal{S}_t$, Lemma~\ref{lem:gate_correctness} guarantees the existence of a promoted candidate that satisfies deployment safety and achieves $\varepsilon$-improvement. Hence, $\mathcal{C}_t\cap\mathcal{S}_t
\subseteq
\mathcal{E}^{\mathrm{prom}}_t(\varepsilon).$ It follows that
\[
\mathbb{P}\bigl(\mathcal{E}^{\mathrm{prom}}_t(\varepsilon)\bigr)
\ge
1-\delta_t-(1-q_t)^J,
\]
which proves Theorem~\ref{thm:single_month_promotion}.

\subsection{Proof of Theorem~\ref{thm:rolling_regret}}

Condition on $\mathcal{E}_{\mathrm{tr}}$. We first control the discovery event $\mathcal{D}_t$. Define
\[
\mathcal{D}_t:=\{\exists j\le J:\pi_{t,j}\in\Pi_t^{\mathrm{safe},\nu_t}\},
\qquad
F_{t,0}^{\mathrm{safe}}:=\Omega,
\]
$\mathcal{D}_t$ can be interpreted as the event that, within the $J$ search rounds of period $t$, the search process hits at least one candidate in the near-oracle safe policy class $\Pi_{t}^{\mathrm{safe},\nu_t}$. This discovery event is sufficient but not necessary, because a near-oracle safe baseline or incumbent may already be present in $\mathcal A_{t,0}$. For $j=1,\ldots,J$,
\[
F_{t,j}^{\mathrm{safe}}
:=
\bigcap_{r=1}^j\{\pi_{t,r}\notin\Pi_t^{\mathrm{safe},\nu_t}\}.
\]
Then $F_{t,J}^{\mathrm{safe}}=\mathcal{D}_t^c$. 

Recall that $\mathcal{C}_t$ is the period-$t$ replay confidence event under which all replay estimates used by the algorithm are simultaneously accurate. Thus, $\mathcal{C}_t$ guarantees that replay-based certificates are reliable, and $\mathcal{D}_t$ guarantees that the active pool contains a candidate that is close to the best policy in the certifiably safe policy class. 

Accordingly, the proof proceeds in three steps:
\begin{enumerate}
    \item First, we bound the probability of $\mathcal{D}_t$, which corresponds to the event that the near-oracle safe policy class is hit at least once within period $t$.
    \item Second, on the event $\mathcal{C}_t$, we establish deployment safety.
    \item Third, on the event $\mathcal{C}_t \cap \mathcal{D}_t$, we derive the oracle-safe deployment gap bound.
\end{enumerate}

\medskip
\noindent
\textbf{Step 1: Control Probability of  $\mathcal{D}_t$.}

By Assumption~\ref{ass:effective_coverage}, for each round $j$, $\mathbb{P}(\pi_{t,j}\in\Pi_t^{\mathrm{safe},\nu_t}\mid\mathcal{F}_{t,j-1})
=
Q_{t,j}\bigl(\Pi_t^{\mathrm{safe},\nu_t}\bigr)
\ge \bar q_t.$ 
Hence
\[
\mathbb{P}(F_{t,j}^{\mathrm{safe}})
=
\mathbb{E}\!\left[
\mathbf{1}_{F_{t,j-1}^{\mathrm{safe}}}
\mathbb{P}(\pi_{t,j}\notin\Pi_t^{\mathrm{safe},\nu_t}\mid\mathcal{F}_{t,j-1})
\right]
\le
(1-\bar q_t)\mathbb{P}(F_{t,j-1}^{\mathrm{safe}}),
\]
so by induction,
\[
\mathbb{P}(\mathcal{D}_t^c)=\mathbb{P}(F_{t,J}^{\mathrm{safe}})\le(1-\bar q_t)^J.
\]
By Assumption~\ref{ass:adaptive_ci}, $\mathbb{P}(\mathcal{C}_t)\ge1-\delta_t$.

\medskip
\noindent
\textbf{Step 2: Safety on $\mathcal{C}_t$.}
If $\mathcal{A}_t^{\mathrm{feas}}=\varnothing$, then by definition $d_t=\pi_t^{\mathrm{ref}}$, and therefore $V_t^{\mathrm{dep}}(d_t\mid\pi_t^{\mathrm{ref}})=0.$ Suppose now that $\mathcal{A}_t^{\mathrm{feas}}\neq\varnothing$. Then $d_t\in\mathcal{A}_t^{\mathrm{feas}}$, based on definition of $\mathcal{A}_t^{\mathrm{feas}}$, we have:
\[
\mathrm{LCB}_t(d_t\mid\pi_t^{\mathrm{ref}})\ge\xi_t.
\]
Since $\mathcal{C}_t$ holds, by definition we have $\left|\widehat{\mu}_t\left(d_t \mid \pi_t^{\mathrm{ref}}\right)-V_t^{\mathrm{rep}}\left(d_t \mid \pi_t^{\mathrm{ref}}\right)\right| \leq \operatorname{rad}_t\left(d_t \mid \pi_t^{\mathrm{ref}}\right),$ which implies
$$
V_t^{\mathrm{rep}}\left(d_t \mid \pi_t^{\mathrm{ref}}\right) \geq \widehat{\mu}_t\left(d_t \mid \pi_t^{\mathrm{ref}}\right)-\operatorname{rad}_t\left(d_t \mid \pi_t^{\mathrm{ref}}\right)=\operatorname{LCB}_t\left(d_t \mid \pi_t^{\mathrm{ref}}\right) \ge\xi_t.
$$

By Lemma~\ref{lem:ipm_transfer_appendix}, we have $V_t^{\mathrm{dep}}(d_t\mid\pi_t^{\mathrm{ref}})\ge0.$ The deployment rule itself is safety screened. It does not select arbitrarily from the entire active pool. Instead, it only selects from the set of candidates satisfying $\mathrm{LCB}\ge \xi_t$. Here, $\xi_t$ is chosen to match the worst-case replay-to-deployment mismatch budget. Therefore, on the event $\mathcal{C}_t$, any candidate that appears safe in replay is also guaranteed to be no worse than the baseline after deployment.

\medskip
\noindent
\textbf{Step 3: Oracle-safe deployment-gap bound on $\mathcal{C}_t\cap\mathcal{D}_t$.}
Assume now that \(\mathcal{C}_t\cap\mathcal{D}_t\) holds. Then there exists
some round \(j^*\le J\) such that
\[
\pi_{t,j^*}\in\Pi_t^{\mathrm{safe},\nu_t}.
\]
Because every structurally valid generated candidate is inserted into the active
pool and
\[
\Pi_t^{\mathrm{safe},\nu_t}
\subseteq
\{\pi_t^{\mathrm{ref}}\}\cup\{\pi:g(\pi)=1\},
\]
the intersection
\[
\Pi_t^{\mathrm{safe},\nu_t}\cap\mathcal{A}_t
\]
is nonempty on \(\mathcal{D}_t\). Let \(\tilde\pi_t\) be any
\(\mathcal{F}_{t,J}\)-measurable selector from this intersection.
If \(V_t^{\mathrm{safe},*}\) is defined as a supremum rather than a maximum,
the argument is interpreted for any \(\nu_t>0\) such that
\(\Pi_t^{\mathrm{safe},\nu_t}\) is nonempty. When \(\nu_t=0\), this corresponds
to the case where the safe oracle value is attained.
If $\tilde\pi_t=\pi_t^{\mathrm{ref}}$, then
\[
V_t^{\mathrm{safe},*}-V_t^{\mathrm{dep}}(d_t\mid\pi_t^{\mathrm{ref}})
\le
V_t^{\mathrm{safe},*}-V_t^{\mathrm{dep}}(\pi_t^{\mathrm{ref}}\mid\pi_t^{\mathrm{ref}})
\le \nu_t
\le \Gamma_t(\tilde\pi_t,d_t),
\]
where the first inequality uses results in Step~2, which states that $d_t$ is not worse than $\pi_t^{\mathrm{ref}}$,  and the second uses $\tilde\pi_t\in\Pi_t^{\mathrm{safe},\nu_t}$. Therefore, it remains to consider the case $\tilde\pi_t\neq\pi_t^{\mathrm{ref}}$.

Because $\tilde\pi_t\in\Pi_t^{\mathrm{safe},\nu_t}\setminus\{\pi_t^{\mathrm{ref}}\}\subseteq\Pi_t^{\mathrm{safe}}$, we have $\Delta_t^{\mathrm{safe}}(\tilde\pi_t)\ge0$, that is,
\[
V_t^{\mathrm{rep}}(\tilde\pi_t\mid\pi_t^{\mathrm{ref}})
\ge
\xi_t+2\,\operatorname{rad}_t(\tilde\pi_t\mid\pi_t^{\mathrm{ref}}).
\]
On $\mathcal{C}_t$, this implies
\[
\mathrm{LCB}_t(\tilde\pi_t\mid\pi_t^{\mathrm{ref}})
\ge
V_t^{\mathrm{rep}}(\tilde\pi_t\mid\pi_t^{\mathrm{ref}})-2\,\operatorname{rad}_t(\tilde\pi_t\mid\pi_t^{\mathrm{ref}})
\ge\xi_t,
\]
so $\tilde\pi_t\in\mathcal{A}_t^{\mathrm{feas}}$. By the deployment rule, $d_t$ maximizes $\mathrm{UCB}_t(\cdot\mid\pi_t^{\mathrm{ref}})$ over $\mathcal{A}_t^{\mathrm{feas}}$, so
\[
\widehat{\mu}_t(d_t\mid\pi_t^{\mathrm{ref}})+\operatorname{rad}_t(d_t\mid\pi_t^{\mathrm{ref}})
\ge
\widehat{\mu}_t(\tilde\pi_t\mid\pi_t^{\mathrm{ref}})+\operatorname{rad}_t(\tilde\pi_t\mid\pi_t^{\mathrm{ref}}).
\]
Since \(\mathcal{C}_t\) holds, we have
\[
V_t^{\mathrm{rep}}(\tilde\pi_t\mid\pi_t^{\mathrm{ref}})
\le
\widehat{\mu}_t(\tilde\pi_t\mid\pi_t^{\mathrm{ref}})
+
\operatorname{rad}_t(\tilde\pi_t\mid\pi_t^{\mathrm{ref}})
\]
and
\[
V_t^{\mathrm{rep}}(d_t\mid\pi_t^{\mathrm{ref}})
\ge
\widehat{\mu}_t(d_t\mid\pi_t^{\mathrm{ref}})
-
\operatorname{rad}_t(d_t\mid\pi_t^{\mathrm{ref}}).
\]
Combining these two inequalities with the UCB optimality of \(d_t\) gives
\[
V_t^{\mathrm{rep}}(\tilde\pi_t\mid\pi_t^{\mathrm{ref}})
-
V_t^{\mathrm{rep}}(d_t\mid\pi_t^{\mathrm{ref}})
\le
2\,\operatorname{rad}_t(\tilde\pi_t\mid\pi_t^{\mathrm{ref}})
+
2\,\operatorname{rad}_t(d_t\mid\pi_t^{\mathrm{ref}}).
\]
By Lemma~\ref{lem:ipm_transfer_appendix},
\[
V_t^{\mathrm{dep}}(\tilde\pi_t\mid\pi_t^{\mathrm{ref}})-V_t^{\mathrm{dep}}(d_t\mid\pi_t^{\mathrm{ref}})
\le 2\,\operatorname{rad}_t(\tilde\pi_t\mid\pi_t^{\mathrm{ref}})+2\,\operatorname{rad}_t(d_t\mid\pi_t^{\mathrm{ref}})+2\xi_t.
\]
By definition of $\Pi_t^{\mathrm{safe},\nu_t}$, we have  $V_t^{\mathrm{safe},*}-V_t^{\mathrm{dep}}(\tilde\pi_t\mid\pi_t^{\mathrm{ref}})\le\nu_t.$ Hence
\begin{equation}
\label{eq:Gamma_t eq}
    V_t^{\mathrm{safe},*}-V_t^{\mathrm{dep}}(d_t\mid\pi_t^{\mathrm{ref}})
\le \nu_t+2\,\operatorname{rad}_t(\tilde\pi_t\mid\pi_t^{\mathrm{ref}})+2\,\operatorname{rad}_t(d_t\mid\pi_t^{\mathrm{ref}})+2\xi_t.
\end{equation}

Finally, define $G_T:=\bigcap_{t=1}^T(\mathcal{C}_t\cap\mathcal{D}_t).$ The previous results show that $\mathbb{P}\left(\mathcal{D}_t^c\right) \leq\left(1-\bar{q}_t\right)^J$ and $\mathbb{P}\left(\mathcal{C}_t^c\right) \leq \delta_t.$
By a union bound,
\[
\mathbb{P}(G_T)
\ge 1-\sum_{t=1}^T\left[\delta_t+(1-\bar q_t)^J\right].
\]
On $G_T$, Step~2 gives safety for every period, and Step~3 gives the per-period oracle-safe deployment-gap bound. Summing \eqref{eq:Gamma_t eq} over $t=1,\ldots,T$ yields
\[
\sum_{t=1}^T\left[V_t^{\mathrm{safe},*}-V_t^{\mathrm{dep}}(d_t\mid\pi_t^{\mathrm{ref}})\right]
\le
\sum_{t=1}^T\Gamma_t(\tilde\pi_t,d_t),
\]
which proves Theorem~\ref{thm:rolling_regret}.

\section{Training Configuration}
\label{app:rl_config}

We implemented the reinforcement learning training on a hardware platform consisting of eight NVIDIA H200 GPUs, featuring a memory configuration of $8 \times 141$ GB VRAM. The detailed configuration is listed in Table \ref{tab:rl_config}.

\begin{table}[!htbp]
\centering
\caption{Reinforcement learning training configuration}
\label{tab:rl_config}
\begin{tabular}{ll}
\toprule
\textbf{Parameter} & \textbf{Value} \\
\midrule
\multicolumn{2}{l}{\textit{Model}} \\
Base model & GLM-4.7-Flash (30B MoE, 3B active) \\
\midrule
\multicolumn{2}{l}{\textit{RL Algorithm}} \\
Algorithm & GRPO \\
Group size ($N$) & 4 \\
Clip range ($\epsilon$) & 0.2 \\
KL penalty coefficient & 0.001 \\
Reward & Binary $\{0, 1\}$ \\
\midrule
\multicolumn{2}{l}{\textit{Optimization}} \\
Optimizer & Adam ($\beta_1{=}0.9, \beta_2{=}0.98$) \\
Learning rate & $5 \times 10^{-6}$ (constant) \\
Weight decay & 0.1 \\
Gradient clipping & 1.0 \\
\midrule
\multicolumn{2}{l}{\textit{Rollout}} \\
Prompts per step & 2 \\
Rollouts per step & 8 ($2 \times 4$) \\
Gradient steps & 5 \\
Max response length & 81{,}920 tokens \\
Sampling temperature & 1.0 \\
Agent tool-call budget ($J$) & 60 \\
\bottomrule
\end{tabular}
\end{table}

\section{Synthetic Data Construction Details}
\label{app:synthetic_data_generation}

This appendix records the additional technical details of the synthetic generator beyond the summary in Section~\ref{subsec:synthetic_data}. The synthetic data used in this study, together with the inference traces of InvEvolve, will be released.

\subsection{Generation Primitives and Covariate Construction}
\label{app:synthetic_primitives}

We construct \textbf{47 synthetic seed datasets}, each spanning daily observations from January 1, 2024 to December 31, 2025. The datasets cover multiple inventory-relevant environments, including consumer products, industrial spare parts, restaurant ingredients, medical supplies, power-grid load, cloud-computing demand, EV battery supply chains, and construction materials.

Let $r=1,\dots,R$ index calendar days, where $R=731$. For each seed dataset $j$ and day $r$, we generate an observed covariate vector $x_{j,r}$, an optional textual note $n_{j,r}$, a latent regime variable $z_{j,r}$, an event-state variable $e_{j,r}$, and realized demand $y_{j,r}$.

The observed covariates are derived from a shared latent environment and then instantiated in a dataset-specific manner. The shared primitives include:
\begin{itemize}
    \item \textbf{calendar signals}, such as month, day-of-week, weekend indicators, and selected derived flags;
    \item \textbf{weather signals}, generated from seasonal components, autoregressive disturbances, and occasional extreme events;
    \item \textbf{promotion and festival signals}, formed by event-centered promotional windows with decaying intensity and superimposed short-lived shocks;
    \item \textbf{macro or cost indices}, generated as stochastic trends with occasional spikes or drawdowns.
\end{itemize}

For example, temperature is generated as
\[
\mathrm{Temp}_r
=
\alpha_0
+
\alpha_1 \sin\!\left(\frac{2\pi(d_r-\phi)}{365.25}\right)
+
u_r
+
\varepsilon_r^{\mathrm{w}},
\]
where $d_r$ is the day-of-year, $u_r$ is an AR(1) disturbance, and $\varepsilon_r^{\mathrm{w}}$ captures rare weather shocks. Other weather variables, such as humidity, precipitation, and UV index, are then generated conditionally on season and temperature.

The final covariate schema is not shared across datasets. Instead, each seed dataset selects and transforms a domain-relevant subset of the shared primitives and augments them with domain-specific proxies. Representative examples include weather and promotion variables for retail-like products, utilization and maintenance proxies for industrial spare parts, admission and influenza signals for medical supplies, and traffic or latency signals for digital services. To avoid a trivial mapping from domain to schema, we apply \emph{feature-subset randomization}: after a candidate feature pool is constructed, each dataset retains only a random subset of features, while always preserving identifiers, notes, and the target demand variable.

\subsection{Demand Law, Hidden States, and Nonstationarity}
\label{app:synthetic_demand_law}

For each dataset $j$ and day $r$, demand is drawn from a time-varying distribution
\[
y_{j,r} \sim \mathcal{D}_{j,r}\!\bigl(\theta_{j,r}\bigr),
\]
where both the family $\mathcal{D}_{j,r}$ and parameter vector $\theta_{j,r}$ may vary across datasets and over time. The generator allows several demand families, including:
\begin{itemize}
    \item negative binomial distributions for over-dispersed count demand;
    \item zero-inflated or hurdle-like models for intermittent demand;
    \item mixture models for regime-switching demand;
    \item continuous positive demand for abstract entities such as regional power load.
\end{itemize}

In the count-demand setting, the conditional mean is specified as
\[
\log \mu_{j,r}
=
\beta_{j,0}(r)
+
\beta_j(r)^\top x_{j,r}
+
\gamma_j^\top z_{j,r}
+
\delta_j^\top e_{j,r},
\]
where $\beta_{j,0}(r)$ is a time-varying intercept, $\beta_j(r)$ is a potentially time-varying coefficient vector, $z_{j,r}$ denotes latent demand regimes, and $e_{j,r}$ denotes the event-state process. Demand may then be sampled, for example, from
\[
y_{j,r} \sim \mathrm{NB}\bigl(\mu_{j,r},\kappa_{j,r}\bigr),
\]
with dispersion parameter $\kappa_{j,r}$ allowed to vary with latent volatility. In intermittent-demand settings, we additionally introduce a zero-generation mechanism,
\[
y_{j,r}
=
\begin{cases}
0, & \text{with probability } p_{j,r}^{(0)},\\
\tilde y_{j,r}, & \text{otherwise,}
\end{cases}
\]
where $\tilde y_{j,r}$ follows the corresponding base demand law.

Many datasets include latent mixture regimes. Specifically, for some seeds,
\[
y_{j,r}
\sim
\sum_{k=1}^{K_j} \pi_{j,r}^{(k)} \, \mathcal{D}_{j,r}^{(k)},
\]
where the mixture weights $\pi_{j,r}^{(k)}$ are time-varying and may depend on observed covariates and latent conditions. This mechanism allows bursty, clustered, or intermittent behavior even under similar observed contexts.

The synthetic suite incorporates several forms of nonstationarity:
\begin{enumerate}
    \item \textbf{baseline drift}, through gradual variation in $\beta_{j,0}(r)$;
    \item \textbf{coefficient drift}, through time variation in $\beta_j(r)$;
    \item \textbf{structural breaks}, representing abrupt changes in demand formation;
    \item \textbf{mixture-weight drift}, through time-varying regime probabilities;
    \item \textbf{covariate shift}, through evolution of the latent environment that generates $x_{j,r}$.
\end{enumerate}
Accordingly, the suite exhibits both temporal variation in the covariate distribution and temporal variation in the conditional law of demand.

\subsection{Disruptive Events, Note Persistence, and Within-Archetype Heterogeneity}
\label{app:synthetic_events_heterogeneity}

To incorporate unstructured information, we generate rare disruptive events such as typhoons, recalls, strikes, outages, pandemic waves, and food-safety rumors. Each event begins at some date $\tau$ and lasts for $\Lambda^{\mathrm{evt}}$ days. A textual note is recorded only at event onset:
\[
n_{j,r}
=
\begin{cases}
\text{event description}, & r=\tau \text{ and the event is observed},\\
\emptyset, & \text{otherwise.}
\end{cases}
\]
Thus, the note process records the onset of an observed disruption, while persistence is represented through the latent event-state process rather than repeated annotations.

Although the note appears only on the first day, the event continues to affect demand through a decaying latent impact:
\[
e_{j,r}^{(\ell)}
=
\begin{cases}
\kappa_{j,\ell}\,\omega_\ell(r-\tau), & \tau \le r < \tau+\Lambda^{\mathrm{evt}},\\
0, & \text{otherwise,}
\end{cases}
\]
where $\ell$ indexes the event type, $\kappa_{j,\ell}$ is a dataset-specific intensity, and $\omega_\ell(\cdot)$ is a decreasing decay profile. In implementation, both the conditional mean and the variance may respond to $e_{j,r}^{(\ell)}$. The sign and magnitude of event effects are domain-dependent; the same event label may increase demand in one domain and decrease it in another.

To avoid excessive similarity within a broad archetype, we introduce heterogeneity at three levels. First, \emph{variant-level heterogeneity} assigns different seeds within the same archetype to different demand-generation variants, thereby changing the functional form of the generator rather than only its coefficients. Second, \emph{feature-level heterogeneity} arises from feature-subset randomization, so datasets within the same archetype need not share the same observable schema. Third, \emph{shock-level heterogeneity} randomizes the timing, duration, observability, and intensity of disruptive events at the seed level. As a result, the final suite combines shared generation primitives with dataset-specific randomization while preserving a common construction template across seeds.

\section{Estimating the Replay--Deployment Discrepancy Budget \texorpdfstring{$\xi_t$}{xi\_t} in Real Data}
\label{app:xi-estimation}

In the main text, the replay--deployment discrepancy budget $\xi_t$ is defined through the integral probability metric
\[
\xi_t \equiv D_{\mathfrak F_t}(P_t^{\mathrm{dep}}, P_t^{\mathrm{rep}}),
\]
and enters both the promotion gate and the final deployment rule as a period-level safety margin. In implementation, however, the deployment distribution $P_t^{\mathrm{dep}}$ is not directly observable at decision time. As a result, $\xi_t$ must be estimated from historical replay-to-forward discrepancies rather than treated as a known quantity. This appendix presents a practical estimation strategy for $\xi_t$ in real data. We first introduce a general methodology that is independent of any specific dataset, and then instantiate it for the Dunnhumby Complete Journey benchmark used in Section~\ref{sec:exp-cj}.

\subsection{General estimation principle}
\label{app:xi-estimation-general}

Fix a period $t$. Let $R_t$ denote the replay window available at decision time, and let $D_t$ denote the corresponding forward window used only for retrospective validation.

\paragraph{Retrospective target.}
For any evaluated policy--comparator pair $(\pi,\tilde{\pi})$, define the empirical replay gain
\[
\widehat V_t^{rep}(\pi \mid \tilde{\pi})
=
\frac{1}{|R_t|}
\sum_{\omega \in R_t}
\bigl(C_t(\tilde{\pi};\omega)-C_t(\pi;\omega)\bigr),
\]
and the empirical forward gain
\[
\widehat V_t^{fwd}(\pi \mid \tilde{\pi})
=
\frac{1}{|D_t|}
\sum_{\omega \in D_t}
\bigl(C_t(\tilde{\pi};\omega)-C_t(\pi;\omega)\bigr).
\]
The realized replay-to-forward discrepancy for that pair is then
\[
\Delta_t(\pi,\tilde{\pi})
:=
\left|
\widehat V_t^{fwd}(\pi \mid \tilde{\pi})
-
\widehat V_t^{rep}(\pi \mid \tilde{\pi})
\right|.
\]

Let $\mathcal E_t$ denote the set of policy--comparator pairs evaluated in period $t$. A natural ex post benchmark is the oracle discrepancy
\[
\xi_t^{oracle}
:=
\max_{(\pi,\tilde{\pi}) \in \mathcal E_t}
\Delta_t(\pi,\tilde{\pi}),
\]
or, more robustly, its upper empirical quantile
\[
\xi_t^{oracle}(\beta)
:=
Q_{1-\beta}
\Bigl(
\{\Delta_t(\pi,\tilde{\pi}) : (\pi,\tilde{\pi}) \in \mathcal E_t\}
\Bigr),
\qquad \beta \in (0,1).
\]
The quantity $\xi_t^{oracle}$ is useful for retrospective diagnosis, but it cannot be used directly online because $D_t$ lies in the future relative to the decision at period $t$. When $\mathcal E_t$ includes the realized candidate policies generated during search, $\xi_t^{oracle}$ should be interpreted as an ex post operational calibration target attached to the realized evaluation set rather than as a method-free environment parameter. If a method-independent target is desired, the same construction can instead be applied to a fixed archived library of policy--comparator pairs.

\paragraph{Online budget.}
For implementation, we therefore distinguish between:
\begin{enumerate}
    \item the \emph{retrospective target} $\xi_t^{oracle}$, used only ex post for evaluation and calibration; and
    \item the \emph{online budget} $\widehat{\xi}_t$, used in the gate and deployment rule.
\end{enumerate}
The former is an ex post diagnostic target; the latter is the only quantity used online.

To construct $\widehat{\xi}_t$, we separate two roles for the shift features. First, retrospective calibration features $u_t^{\mathrm{ret}}=\phi(R_t,D_t)$ may use realized forward outcomes ex post to learn the mapping from realized shift patterns to oracle discrepancies. Second, online proxy features $u_t^{\mathrm{on}}=\phi(R_t,\widetilde D_t)$ use only pre-decision covariates or deployment proxies, where $\widetilde D_t$ denotes a proxy for the current deployment regime. Typical components of these feature vectors include differences in demand moments, changes in zero-demand ratio, covariate-distribution shifts, seasonality indicators, and differences in baseline-policy costs.

Using historical periods $s < t$, we form calibration pairs $\{(u_s^{\mathrm{ret}},\xi_s^{oracle})\}_{s=1}^{t-1}$, or, more robustly, $\{(u_s^{\mathrm{ret}},\xi_s^{oracle}(\beta))\}_{s=1}^{t-1}.$ All calibration of the quantile model is performed using slices strictly preceding the evaluation slice, or on a calibration pool disjoint from that slice, so that no forward window from the reported test workspace is used to fit $\widehat{\xi}_t$. We then fit an upper conditional quantile model
\[
\widehat q_{1-\alpha}(u)
\approx
Q_{1-\alpha}(\xi^{oracle}\mid u),
\]
and define the operational safety budget by $\widehat{\xi}_t=\widehat q_{1-\alpha}(u_t^{\mathrm{on}}) + b_t,$ where $b_t \ge 0$ is an optional finite-sample inflation term. The resulting quantity $\widehat{\xi}_t$ can be plugged directly into the certification gate and deployment rule:
\[
LCB_t(\pi \mid \pi_t^{\mathrm{ref}}) \ge \widehat{\xi}_t,
\qquad
LCB_t(\pi \mid \pi_{t,j-1}^{\mathrm{ch}}) \ge \varepsilon + \widehat{\xi}_t.
\]

\subsection{Operational estimators and recommended use}
\label{app:xi-estimation-operational}

The construction above can be implemented in several complementary ways.

\paragraph{Historical quantile calibration.}
A simple conservative estimator is obtained by pooling realized replay-to-forward discrepancies from the most recent $M_{\mathrm{hist}}$ historical periods:
\[
\widehat{\xi}_t^{hist}
=
Q_{1-\alpha}
\Bigl(
\bigcup_{s=t-M_{\mathrm{hist}}}^{t-1}
\{\Delta_s(\pi,\tilde{\pi}) : (\pi,\tilde{\pi}) \in \mathcal E_s\}
\Bigr).
\]
This estimator is easy to compute and does not require fitting a predictive model, but it does not adapt to the current period's specific shift pattern.

\paragraph{Shift-conditioned quantile estimation.}
A more adaptive estimator uses the online proxy feature vector $u_t^{\mathrm{on}}$:
\[
\widehat{\xi}_t^{shift}
=
\widehat q_{1-\alpha}(u_t^{\mathrm{on}}) + b_t.
\]
Possible choices for $\widehat q_{1-\alpha}$ include linear quantile regression, tree-based quantile regression, and conformalized quantile regression. This approach is particularly useful when replay--deployment mismatch varies systematically with observable state changes.

\paragraph{IPM plug-in estimator.}
Because the theoretical definition of $\xi_t$ is IPM-based, one may also construct an estimator that directly approximates the discrepancy between replay and deployment distributions. Let $\psi(\omega)\in\mathbb R^d$ be a path embedding, and let $\widehat P_t^{rep,\psi}$ and $\widehat P_t^{dep,\psi}$ denote empirical replay and deployment-proxy distributions in the embedding space. Then one may define
\[
\widehat{\xi}_t^{IPM}
=
\widehat D_{\mathfrak F_t}
\bigl(
\widehat P_t^{dep,\psi},
\widehat P_t^{rep,\psi}
\bigr),
\]
for a chosen function class $\mathfrak F_t$. This estimator is conceptually closest to the main theory, although it requires the additional design of an embedding map and a deployment proxy.

In practice, we recommend a conservative combination:
\[
\widehat{\xi}_t
=
\max\{\widehat{\xi}_t^{hist},\ \widehat{\xi}_t^{shift}\}.
\]
In the CJ instantiation below, we use this combined form to define the operational discrepancy budget. A concrete implementation should additionally report the numerical choices of $M_{\mathrm{hist}}$, $\alpha$, $b_t$, and the conditional-quantile model class used to construct $\widehat q_{1-\alpha}$.

\paragraph{Interpretation.}
From an operational viewpoint, $\widehat{\xi}_t$ should be interpreted as a conservative safety margin rather than a fully identified structural parameter. Its purpose is to upper-bound replay-to-forward gain errors with sufficiently high probability, thereby preserving the deployment guarantees encoded in the gate.

\subsection{Instantiation for the Dunnhumby Complete Journey dataset}
\label{app:xi-estimation-cj}

We now instantiate the above methodology for the Dunnhumby Complete Journey (CJ) benchmark used in Section~\ref{sec:exp-cj}. In the main experiment, each workspace is constructed from a temporal slice consisting of 365 historical days followed by 30 evaluation days. Each slice also contains six exogenous features:
\[
\texttt{is\_weekend},\ 
\texttt{is\_holiday},\ 
\texttt{discount\_rate},\ 
\texttt{is\_on\_display},\ 
\texttt{is\_in\_mailer},\ 
\texttt{day\_of\_week}.
\]
This structure makes the CJ data particularly suitable for replay-to-forward discrepancy calibration.

For each temporal slice $t$, we define:
\begin{itemize}
    \item the first 365 days as the replay window $R_t$;
    \item the subsequent 30 days as the forward validation window $D_t$.
\end{itemize}
For every evaluated pair $(\pi,\tilde{\pi})\in\mathcal E_t$, including pre-loop baseline/incumbent versus reference pairs and inner-loop generated-candidate pairs, we compute
\[
\Delta_t(\pi,\tilde{\pi})
=
\left|
\widehat V_t^{fwd}(\pi \mid \tilde{\pi})
-
\widehat V_t^{rep}(\pi \mid \tilde{\pi})
\right|.
\]
This yields the ex post oracle discrepancy $\xi_t^{oracle}=\max_{(\pi,\tilde{\pi})\in\mathcal E_t}
\Delta_t(\pi,\tilde{\pi}),$ or its quantile analogue. In this form, the CJ oracle discrepancy is defined relative to the realized evaluation set in the slice. If a method-independent calibration target is preferred, the same construction can instead be applied to a fixed archived policy library.

For retrospective calibration, we define a CJ-specific feature vector
\[
\begin{aligned}
u_t^{\mathrm{ret}}
= {} & \Bigl(
|\bar d_{D_t}-\bar d_{R_t}|,
|\sigma_{D_t}-\sigma_{R_t}|,
|\rho^0_{D_t}-\rho^0_{R_t}|,
\\
& \hspace{1.6em}|\overline{\texttt{discount}}_{D_t}-\overline{\texttt{discount}}_{R_t}|,
|\overline{\texttt{display}}_{D_t}-\overline{\texttt{display}}_{R_t}|,
\\
& \hspace{1.6em}|\overline{\texttt{mailer}}_{D_t}-\overline{\texttt{mailer}}_{R_t}|,
|\overline{\texttt{holiday}}_{D_t}-\overline{\texttt{holiday}}_{R_t}|
\Bigr).
\end{aligned}
\]
where $\bar d$ denotes mean demand, $\sigma$ denotes demand standard deviation, and $\rho^0$ denotes the zero-demand ratio. This retrospective form uses realized forward-window statistics only for ex post calibration. In online deployment, it should be replaced by an observable proxy vector $u_t^{\mathrm{on}}$ computed from pre-decision covariates or other deployment-time signals.

Using historical slices strictly preceding the evaluation slice, or a calibration pool disjoint from that slice, we fit an upper-quantile model $\widehat q_{1-\alpha}(u)\approx Q_{1-\alpha}(\xi^{oracle}\mid u).$ In particular, no forward window from the reported test workspace is used to fit the operational budget. We then define
\[
\widehat{\xi}_t^{shift,CJ}
=
\widehat q_{1-\alpha}(u_t^{\mathrm{on}})+b_t,
\qquad
\widehat{\xi}_t^{hist,CJ}
=
Q_{1-\alpha}
\Bigl(
\bigcup_{s=t-M_{\mathrm{hist}}}^{t-1}
\{\Delta_s(\pi,\tilde{\pi}) : (\pi,\tilde{\pi}) \in \mathcal E_s\}
\Bigr),
\]
and set the CJ operational discrepancy budget to
\[
\widehat{\xi}_t^{CJ}
=
\max\{\widehat{\xi}_t^{hist,CJ},\ \widehat{\xi}_t^{shift,CJ}\}.
\]
Finally, we replace $\xi_t$ in the promotion and deployment rules by $\widehat{\xi}_t^{CJ}$:
\[
LCB_t(\pi \mid \pi_t^{\mathrm{ref}}) \ge \widehat{\xi}_t^{CJ},
\qquad
LCB_t(\pi \mid \pi_{t,j-1}^{\mathrm{ch}}) \ge \varepsilon + \widehat{\xi}_t^{CJ}.
\]

This procedure preserves the distinction between the retrospective target $\xi_t^{oracle}$ and the online budget $\widehat{\xi}_t^{CJ}$. The former is used only ex post for calibration and diagnosis; the latter is the only quantity used in the gate and deployment rule.

\section{Practical Small-Sample Blockwise \texorpdfstring{$t$}{t}-Based Replay Radius Adjustment for Replay Certification}
\label{app:small-sample-radius}

The main text adopts the Hoeffding-style confidence radius in \eqref{eq:def_radius} as a conservative, distribution-free replay certificate. In practice, however, short replay windows may lead to substantial finite-sample conservatism. This appendix is mainly motivated by shorter replay windows, especially the 100+30 synthetic setting of Section~\ref{sec:exp-synthetic}, although the same idea can be used more broadly as a practical adjustment. The purpose of this appendix is therefore purely operational: we introduce a small-sample-friendly blockwise \(t\)-based replay radius for practical screening. This adjustment is intended as a practical screening heuristic rather than a finite-sample-valid replacement for the formal Hoeffding certificate in the main theory. Accordingly, the blockwise \(t\)-based radius should not be used to verify
Assumption~\ref{ass:adaptive_ci} unless it is combined with a separate
sample-splitting, holdout, or conformal calibration argument that restores
simultaneous adaptive validity.

A practical complication in inventory replay is that daily gain observations may exhibit short-range dependence due to inventory carryover, replenishment delay, and lead-time effects. To reduce the impact of such local dependence, we first aggregate daily replay gains into non-overlapping blocks.

For a candidate policy \(\pi\) and comparator \(\tilde\pi\), recall the daily replay gain
\[
Z_{t,\ell}(\pi\mid\tilde\pi)
=
C_t(\tilde\pi;\omega_{t,\ell})-C_t(\pi;\omega_{t,\ell}),
\qquad \ell=1,\dots,m_t.
\]
Let the replay window be partitioned into \(K_t\) non-overlapping blocks $\mathcal B_{t,1},\dots,\mathcal B_{t,K_t},$ each of length \(b_t\), except possibly the last block. We define the blockwise average gain by
\[
G_{t,k}(\pi\mid\tilde\pi)
=
\frac{1}{|\mathcal B_{t,k}|}
\sum_{\ell\in\mathcal B_{t,k}}
Z_{t,\ell}(\pi\mid\tilde\pi),
\qquad k=1,\dots,K_t.
\]
The corresponding blockwise mean and sample variance are
\[
\bar G_t(\pi\mid\tilde\pi)
=
\frac{1}{K_t}\sum_{k=1}^{K_t}G_{t,k}(\pi\mid\tilde\pi),
\quad s^2_{G,t}(\pi\mid\tilde\pi)
=
\frac{1}{K_t-1}\sum_{k=1}^{K_t}
\left(
G_{t,k}(\pi\mid\tilde\pi)-\bar G_t(\pi\mid\tilde\pi)
\right)^2.
\]

We then define the practical blockwise \(t\)-based replay radius by
\[
\widehat{\mathrm{rad}}^{\,\mathrm{prac}}_t(\pi\mid\tilde\pi)
=
t_{1-\alpha_t(\pi,\tilde\pi),\,K_t-1}
\cdot
\frac{s_{G,t}(\pi\mid\tilde\pi)}{\sqrt{K_t}},
\]
where \(t_{1-\alpha,\nu}\) denotes the \((1-\alpha)\)-quantile of the Student-\(t\) distribution with \(\nu\) degrees of freedom.

The resulting practical lower and upper confidence bounds are
\[
LCB_t^{\mathrm{prac}}(\pi\mid\tilde\pi)
=
\bar G_t(\pi\mid\tilde\pi)
-
\widehat{\mathrm{rad}}^{\,\mathrm{prac}}_t(\pi\mid\tilde\pi),
\quad 
UCB_t^{\mathrm{prac}}(\pi\mid\tilde\pi)
=
\bar G_t(\pi\mid\tilde\pi)
+
\widehat{\mathrm{rad}}^{\,\mathrm{prac}}_t(\pi\mid\tilde\pi).
\]

For multiple testing across the evaluated-pair budget \(N_t\), a simple conservative choice is $\alpha_t(\pi,\tilde\pi)=\frac{\delta_t}{2N_t}.$ This mirrors the role of \(N_t\) in the formal Hoeffding construction while remaining more sample-adaptive in practice.

\paragraph{Recommended block size and use in short replay windows.}
In inventory applications, we recommend
\[
b_t=\max\{7,\,L_t+1\},
\]
where \(L_t\) is the effective lead time in the current workspace. When lead time is fixed or not explicitly modeled in the replay implementation, a default choice \(b_t=7\) is a reasonable weekly aggregation rule. In a 100-day replay window, this yields approximately \(K_t\approx 14\) effective block observations. Thus, in the synthetic 100+30 setting, the natural default is weekly blocking with \(b_t=7\). For longer replay windows such as the CJ 365+30 slices, the same adjustment can still be used as an optional practical screen, but the small-sample motivation is weaker.


\section{Representative Structurally Evolved Policies}
\label{app:structural-evolved-policies}

This appendix reports several representative policies generated by the evolutionary search.  The purpose is to illustrate the types of white-box structural variation that the evolutionary process can discover while preserving direct implementability.

\begin{table}[!htbp]
\centering
\small
\caption{Evolved policies selected from the CJ 365+30 benchmark. Costs are average daily test costs; reductions are measured relative to the best classical baseline for the same workspace.}
\label{tab:representative-structural-policies}

\resizebox{\textwidth}{!}{%
\begin{tabular}{lllrp{0.34\textwidth}}
\toprule
Workspace & Product category & Best classical baseline & Reduction & Structural feature of evolved rule \\
\midrule
\texttt{produce\_860776\_s120} & Cucumbers & Capped base-stock & 23.5\% & Direct order-size rule with display-adjusted recent mean and variance. \\
\texttt{grocery\_871756\_s150} & Salad dressing & $(s,S)$ & 14.6\% & EWMA-style recent-demand center, local trend correction, and capped order volume. \\
\texttt{grocery\_5569327\_s300} & Tortilla chips & Capped base-stock & 10.3\% & Regime-adaptive base-stock target that changes with the amount and recency of demand history. \\
\texttt{grocery\_1092026\_s210} & Soft drinks & Capped base-stock & 27.7\% & Aggressive on-hand-only capped replenishment rule, deliberately excluding pipeline inventory from the trigger state. \\
\bottomrule
\end{tabular}%
}

\end{table}

\newpage

\begin{AIbox}{Case 1: Display-aware direct order-size rule}
\textbf{Workspace.} \texttt{cjv2\_produce\_860776\_s120}, National cucumbers.  

\textbf{Performance.} The evolved rule reduces average daily test cost from 0.6650 under the best classical baseline to 0.5089, a 23.5\% reduction.

\medskip
\noindent\textbf{Rule.} Rather than ordering up to a fixed inventory position, the policy directly computes the next order quantity from a short rolling demand window and a promotion/display indicator:
\[
\mu_t^{\rm eff}=\mu_t(1+0.30\,\mathbf{1}\{\mathrm{display}_t=1\}),\qquad
\sigma_t^{\rm eff}=\sigma_t(1+0.15\,\mathbf{1}\{\mathrm{display}_t=1\}),
\]
\[
Q_t=\left[\mu_t^{\rm eff}+\frac{1}{2}\Phi^{-1}(0.7026)\sigma_t^{\rm eff}\right]_{[0,50]}.
\]
This differs from a classical base-stock rule because it does not maintain a fixed order-up-to level; it acts as a feature-conditioned order-size controller.

\begin{lstlisting}[language=Python]
def get_order_quantity(on_hand_inventory, pipeline_inventory,
                       demand_history, lead_time, **kwargs):
    window = demand_history[-28:]
    mu = np.mean(window) if len(window) > 0 else 11.0
    sigma = np.std(window) if len(window) > 1 else 6.0

    display = kwargs.get('is_on_display', kwargs.get('is_disp', 0.0))
    mu_eff = mu * (1.3 if display > 0.5 else 1.0)
    sigma_eff = sigma * (1.15 if display > 0.5 else 1.0)

    z_alpha = stats.norm.ppf(0.7026)
    Q = mu_eff + z_alpha * sigma_eff * 0.5
    return max(0, int(max(0, min(Q, 50))))
\end{lstlisting}
\end{AIbox}

\begin{AIbox}{Case 2: EWMA--trend capped target rule}
\textbf{Workspace.} \texttt{cjv2\_grocery\_871756\_s150}, semi-solid salad dressing.  

\textbf{Performance.} The evolved rule reduces average daily test cost from 0.9090 under the best classical baseline to 0.7763, a 14.6\% reduction.

\medskip
\noindent\textbf{Rule.} The policy first estimates a local demand center using recency-weighted demand, then adjusts the target by a clipped one-step trend, and finally applies a cap to the order quantity:
\[
\widehat d_t=\sum_{j=1}^5 w_j d_{t-5+j},\qquad
\widetilde S_t=\alpha S+(1-\alpha)\widehat d_t+\mathrm{clip}(d_{t-1}-d_{t-2},-1,1),
\]
\[
Q_t=\min\{C,\,[\widetilde S_t-I_t-P_{t,1}]_+\}.
\]
Compared with $(s,S)$, this rule replaces a discontinuous reorder trigger with a smoothed adaptive target.

\begin{lstlisting}[language=Python]
def get_order_quantity(on_hand_inventory, pipeline_inventory,
                       demand_history, lead_time, **kwargs):
    S = kwargs.get('S', 20.0)
    C = kwargs.get('C', 6.0)
    alpha = kwargs.get('alpha', 0.4)

    if len(demand_history) >= 5:
        recent = np.array(demand_history[-5:])
        weights = np.array([1, 1.5, 2, 2.5, 3.0])
        weights = weights / weights.sum()
        mu = np.sum(weights * recent)
        target = S * alpha + mu * (1 - alpha)
    else:
        target = S

    if len(demand_history) >= 2:
        slope = demand_history[-1] - demand_history[-2]
        target = max(0, target + max(-1, min(1, int(slope * 1.2))))

    pipeline = (pipeline_inventory if isinstance(pipeline_inventory, (int, float))
                else pipeline_inventory[0] if len(pipeline_inventory) > 0 else 0)
    Q = max(0, int(np.ceil(target - on_hand_inventory - pipeline)))
    return min(C, Q)
\end{lstlisting}
\end{AIbox}

\begin{AIbox}{Case 3: Regime-adaptive base-stock target}
\textbf{Workspace.} \texttt{cjv2\_grocery\_5569327\_s300}, tortilla/nacho chips.  

\textbf{Performance.} The evolved rule reduces average daily test cost from 1.2152 under the best classical baseline to 1.0900, a 10.3\% reduction.

\medskip
\noindent\textbf{Rule.} The policy keeps the transparency of base-stock control, but replaces the fixed base-stock level with a regime-dependent target.  When little history is available it shrinks the target; with moderate history it expands or contracts according to a recent 15-period demand mean; with long history it uses a conservative long-history adjustment:
\[
S'_t=
\begin{cases}
0.8\gamma S, & |\mathcal H_t|<5,\\
S\left(1+\alpha(\bar d_{t,15}-1.5)/5\right), & 5\le |\mathcal H_t|<15,\\
0.95S, & |\mathcal H_t|\ge 365,\\
S, & \text{otherwise.}
\end{cases}
\qquad
Q_t=[S'_t-I_t-\sum_{\ell}P_{t,\ell}]_+.
\]
This is more than parameter tuning: the state-dependent target itself changes across data regimes.

\begin{lstlisting}[language=Python]
def get_order_quantity(on_hand_inventory, pipeline_inventory,
                       demand_history, lead_time, **kwargs):
    S = kwargs.get('S')
    gamma = kwargs.get('gamma', 1.0)
    alpha = kwargs.get('alpha', 1.0)

    if len(demand_history) < 5:
        S_prime = int(S * gamma * 0.8)
    elif len(demand_history) < 15:
        recent = demand_history[-15:]
        recent_mu = float(np.mean(recent)) if len(recent) > 0 else 2.5
        S_prime = int(S * (1 + alpha * (recent_mu - 1.5) / 5))
    elif len(demand_history) >= 365:
        S_prime = int(S * 0.95)
    else:
        S_prime = S

    inventory_position = on_hand_inventory + sum(pipeline_inventory)
    return max(0, int(S_prime) - int(inventory_position))
\end{lstlisting}
\end{AIbox}

\begin{AIbox}{Case 4: Aggressive on-hand-only capped replenishment}
\textbf{Workspace.} \texttt{cjv2\_grocery\_1092026\_s210}, soft drinks.  

\textbf{Performance.} The evolved rule reduces average daily test cost from 0.9216 under the best classical baseline to 0.6659, a 27.7\% reduction.

\medskip
\noindent\textbf{Rule.} This case is intentionally simple but structurally distinct.  A standard capped base-stock rule uses the full inventory position, including outstanding pipeline inventory.  The evolved rule instead triggers replenishment from on-hand inventory only:
\[
Q_t=\min\{C,[S-I_t]_+\},
\]
which creates a more aggressive service-oriented controller when the lost-sales penalty is high and demand is increasing.  Because the rule differs by the state variable rather than by a parameter, it is a useful example of a nonclassical but auditable heuristic discovered by evolution.

\begin{lstlisting}[language=Python]
def get_order_quantity(on_hand_inventory, pipeline_inventory,
                       demand_history, lead_time, **kwargs):
    S = kwargs.get('S')
    C = kwargs.get('C')

    # Unlike classical base-stock, this rule deliberately ignores pipeline.
    inventory_state = on_hand_inventory
    order = max(0, S - inventory_state)
    return min(int(order), int(C))
\end{lstlisting}
\end{AIbox}

\section{Construction and Censored-Demand Adjustment for the CJ Benchmark}
\label{app:cj-lost-sales}

This appendix describes the construction of the real-data CJ benchmark and the treatment of the lost-sales nature of point-of-sale data. The Dunnhumby Complete Journey dataset records realized sales transactions rather than true customer demand. In a lost-sales inventory system, observed sales may therefore be censored because unmet demand is not recorded. Formally, for product $i$ on day $t$, we observe
\[
Y_{it}=\min\{D_{it},A_{it}\},
\]
where $Y_{it}$ denotes observed sales, $D_{it}$ denotes latent demand, and $A_{it}$ denotes available inventory. Hence, sales equal demand only when inventory does not constrain the product. This feature motivates a preprocessing procedure that combines conservative sample screening with censored-demand estimation.

\paragraph{Screening zero-sales observations.}
We first aggregate transactions into a daily product-level panel and construct temporal slices, each consisting of a 365-day historical window followed by a 30-day evaluation window. To reduce the risk of treating frequent stockouts or highly intermittent products as genuinely low-demand items, we apply several screening rules before selecting the final test workspaces. Each product contributes at most one slice. For every candidate slice, the observed zero-sales ratio must be below 20\% in the historical window, the evaluation window, and the full 395-day window. We also impose minimum observed-sales volume requirements and remove nonstandard retail departments, such as miscellaneous transaction records and gas-kiosk records. After these filters, we obtain a candidate pool of 30 product-specific workspaces. The statistics of zero sales are reported in Table \ref{tab:cj-window-summary-by-dept}.

\begin{table}[!htbp]
  \centering
  \caption{Summary statistics of zero-sales rates by department and time window on the CJ 365+30 benchmark. Entries report the minimum, mean, median, and maximum zero-sales rates across workspaces within each department, expressed as percentages.}
  \label{tab:cj-window-summary-by-dept}
  \small
  \begin{tabular}{llccccc}
    \toprule
    \textbf{Department} & \textbf{Window} & \textbf{\# Cases} & \textbf{Min} & \textbf{Mean} & \textbf{Median} & \textbf{Max} \\
    \midrule
    Deli & 365-day history      & 1  & 0.3\%  & 0.3\%  & 0.3\%  & 0.3\% \\
         & 30-day evaluation    & 1  & 0.0\%  & 0.0\%  & 0.0\%  & 0.0\% \\
         & Full 395-day slice   & 1  & 0.3\%  & 0.3\%  & 0.3\%  & 0.3\% \\
    \midrule
    Drug \& GM & 365-day history      & 2  & 5.8\%  & 12.3\% & 12.3\% & 18.9\% \\
               & 30-day evaluation    & 2  & 6.7\%  & 10.0\% & 10.0\% & 13.3\% \\
               & Full 395-day slice   & 2  & 5.8\%  & 12.2\% & 12.2\% & 18.5\% \\
    \midrule
    Grocery & 365-day history      & 12 & 0.3\%  & 8.6\%  & 9.2\%  & 18.4\% \\
            & 30-day evaluation    & 12 & 0.0\%  & 4.7\%  & 3.3\%  & 16.7\% \\
            & Full 395-day slice   & 12 & 0.3\%  & 8.3\%  & 8.9\%  & 17.2\% \\
    \midrule
    Meat & 365-day history      & 1  & 5.8\%  & 5.8\%  & 5.8\%  & 5.8\% \\
         & 30-day evaluation    & 1  & 0.0\%  & 0.0\%  & 0.0\%  & 0.0\% \\
         & Full 395-day slice   & 1  & 5.3\%  & 5.3\%  & 5.3\%  & 5.3\% \\
    \midrule
    Meat (Packaged) & 365-day history      & 1  & 13.7\% & 13.7\% & 13.7\% & 13.7\% \\
                    & 30-day evaluation    & 1  & 10.0\% & 10.0\% & 10.0\% & 10.0\% \\
                    & Full 395-day slice   & 1  & 13.4\% & 13.4\% & 13.4\% & 13.4\% \\
    \midrule
    Produce & 365-day history      & 13 & 0.3\%  & 6.4\%  & 7.9\%  & 19.2\% \\
            & 30-day evaluation    & 13 & 0.0\%  & 2.6\%  & 0.0\%  & 10.0\% \\
            & Full 395-day slice   & 13 & 0.3\%  & 6.1\%  & 7.6\%  & 18.5\% \\
    \midrule
    \textbf{Overall} & \textbf{365-day history}    & \textbf{30} & \textbf{0.3\%} & \textbf{7.7\%} & \textbf{8.1\%} & \textbf{19.2\%} \\
                     & \textbf{30-day evaluation}  & \textbf{30} & \textbf{0.0\%} & \textbf{4.0\%} & \textbf{3.3\%} & \textbf{16.7\%} \\
                     & \textbf{Full 395-day slice} & \textbf{30} & \textbf{0.3\%} & \textbf{7.4\%} & \textbf{7.6\%} & \textbf{18.5\%} \\
    \bottomrule
  \end{tabular}
\end{table}

\paragraph{Censored-demand reconstruction.}
Because the CJ data do not provide reliable daily inventory positions, we infer potential censoring using a conservative stockout proxy. For each product, we estimate a local reference sales level from the rolling median of recent positive sales. We then adjust this benchmark upward when the item is on display, appears in mailers, or receives a positive discount. A day is flagged as potentially censored if sales are zero despite a high local expected sales level and recent sales activity or promotion exposure, or if positive sales are unusually low relative to the same local benchmark.

Let $c_{it}\in\{0,1\}$ denote the resulting censoring indicator. When $c_{it}=0$, we treat the observation as uncensored and set $D_{it}=Y_{it}$. When $c_{it}=1$, the observation implies the right-censoring condition $D_{it}\geq Y_{it}$. We then estimate a global censored Poisson generalized linear model,
\[
\log \mu_{it}=\beta^\top x_{it},
\]
where $x_{it}$ includes product-level demand scale, weekend and holiday indicators, discount, display, mailer, cyclic day-of-week terms, and department fixed effects. We estimate the model by maximizing the censored likelihood
\[
\mathcal{L}(\beta)
=
\prod_{c_{it}=0}
\mathbb{P}_{\beta}(D_{it}=Y_{it}\mid x_{it})
\prod_{c_{it}=1}
\mathbb{P}_{\beta}(D_{it}\geq Y_{it}\mid x_{it}).
\]
The final reconstructed demand used in the policy-evaluation scripts is
\[
\widehat D_{it}
=
\begin{cases}
Y_{it}, & c_{it}=0,\\
\mathbb{E}_{\widehat\beta}[D_{it}\mid D_{it}\geq Y_{it},x_{it}], & c_{it}=1.
\end{cases}
\]
For auditability, each workspace retains the original observed sales, the censoring indicator, the expected-sales stockout proxy, and the imputed demand lift. This procedure does not claim to recover true demand perfectly. Instead, it provides a conservative and reproducible correction for the well-known censoring problem in lost-sales retail data.

Overall, this procedure reduces the downward bias that arises when censored sales are treated as realized demand, while preserving reproducibility and auditability of the benchmark. The approach is also consistent with the censored-demand perspective in the lost-sales inventory literature \citep{huh2011adaptive,sachs2014data,trapero2024demand}.

\end{APPENDICES}

\end{document}